\def\arXiv{1} 
\newcommand{\notarxiv}[1]{foo}
\newcommand{\arxiv}[1]{ba}
\renewcommand{\arxiv}[1]{#1}%
\renewcommand{\notarxiv}[1]{\ignorespaces}%
\renewcommand{\arxiv}[1]{\ignorespaces}%
\renewcommand{\notarxiv}[1]{#1}%
\theoremstyle{plain}
\newtheorem{theorem}{Theorem}
\newtheorem{lemma}{Lemma}
\newtheorem{proposition}{Proposition}
\newtheorem{corollary}{Corollary}
\newtheorem{assumption}{Assumption}
	\declaretheorem[name=Lemma,sibling=theorem]{lem} %
\newcommand*{\propenum}[1]{%
	\expandafter\@propenum\csname c@#1\endcsname%
}
\newcommand*{\@propenum}[1]{%
	$\ifcase#1\or1\or1'\or2\or3\or42%
	\else\@ctrerr\fi$%
}
\AddEnumerateCounter{\propenum}{\@propenum}{1}
\newcommand{\wt}[1]{\widetilde{#1}}  %
\newcommand{\wb}[1]{\overline{#1}} %
\newcommand{\norm}[1]{\left\|{#1}\right\|} %
\newcommand{\lone}[1]{\norm{#1}_1} %
\newcommand{\ltwo}[1]{\norm{#1}_2} %
\newcommand{\linf}[1]{\norm{#1}_\infty} %
\newcommand{\norms}[1]{\|{#1}\|} %
\newcommand{\opnorm}[1]{\linf{#1}}  %
\newcommand{\lones}[1]{\norms{#1}_1} %
\newcommand{\ltwos}[1]{\norms{#1}_2} %
\newcommand{\linfs}[1]{\norms{#1}_\infty} %
\newcommand{\R}{\mathbb{R}} %
\newcommand{\N}{\mathbb{N}} %
\newcommand{\E}{\mathbb{E}} %
\renewcommand{\P}{\mathbb{P}}	%
\newcommand{\I}{\mathbb{I}} %
\providecommand{\argmax}{\mathop{\rm argmax}} %
\providecommand{\dom}{\mathop{\rm dom}}
\providecommand{\diag}{\mathop{\rm diag}}
\providecommand{\tr}{\mathop{\rm tr}}
\providecommand{\interior}{\mathop{\rm relint}}
\providecommand{\minimize}{\mathop{\rm minimize}}
\newcommand{\ceil}[1]{\left\lceil{#1}\right\rceil}
\newcommand{\half}{\frac{1}{2}}
\newcommand{\defeq}{\coloneqq}
\newcommand{\grad}{\nabla}
\newcommand{\hess}{\nabla^2}
\newcommand{\eps}{\epsilon}
\newcommand{\Otil}[1]{\widetilde{O}( #1 )}
\newcommand{\expect}{\operatorname{\E}\expectarg}
\DeclarePairedDelimiterX{\expectarg}[1]{[}{]}{%
	\ifnum\currentgrouptype=16 \else\begingroup\fi
	\activatebar#1
	\ifnum\currentgrouptype=16 \else\endgroup\fi
}
\newcommand{\innermid}{\nonscript\;\delimsize\vert\nonscript\;}
\newcommand{\activatebar}{%
	\begingroup\lccode`\~=`\|
	\lowercase{\endgroup\let~}\innermid 
	\mathcode`|=\string"8000
}
\newcommand{\domain}{\varDelta_{n}} %
\newcommand{\domainDual}[1][n]{S_{#1}}
\newcommand{\simplex}[1][m]{\sigma_{#1}}
\newcommand{\<}{\langle}
\renewcommand{\>}{\rangle}
\newcommand{\inner}[2]{\left<#1,#2\right>}
\newcommand{\inners}[2]{\big<#1,#2\big>}
\newcommand{\innerB}[2]{\Bigg<#1,#2\bigg>}
\newcommand{\breg}[2]{\bar{V}_{#1}(#2)}
\newcommand{\bregBlank}{\bar{V}}
\newcommand{\x}[1][t]{
  \ifthenelse{\isempty{#1}}{%
    X
  }{%
    X_{#1}
  }
}
\newcommand{\xAv}[1][t]{\bar{X}_{#1}}
\newcommand{\xApp}[1][t]{\tilde{X}_{#1}}
\newcommand{\xM}{Y}
\newcommand{\dM}{D}  %
\newcommand{\g}[1][t]{G_{#1}}
\newcommand{\indic}[1]{\I_{\{#1\}}}
\newcommand{\regret}[1][T]{\mathcal{R}[#1]}
\newcommand{\srank}[1][]{
  \ifthenelse{\isempty{#1}}{%
    \mathsf{sr}
  }{%
    \mathsf{sr}\!\left({#1}\right)
  }
}
\newcommand{\Filt}[1][t-1]{\mathcal{F}_{#1}}
\newcommand{\Map}[1][u]{\mathsf{P}_{#1}}
\newcommand{\MapAv}{\bar{\mathsf{P}}}
\newcommand{\mapAv}{\bar{\mathsf{p}}}
\newcommand{\mapConj}{\bar{\mathsf{r}}}
\newcommand{\MapQ}{\mathsf{P}^{\tiny{\textup{q-reg}}}}
\newcommand{\MapMMW}{\mathsf{P}^{\tiny{\mathrm{mw}}}}
\newcommand{\mapMMW}{\mathsf{p}^{\tiny\mathrm{mw}}}
\newcommand{\ma}{A}
\newcommand{\lanczos}{\wt{\exp}_k}
\newcommand{\MapApp}[1][u;k]{\wt{\mathsf{P}}_{#1}}
\newcommand{\softmax}{{\rm{lse}}}
\newcommand{\gradsoftmax}{\grad \softmax}
\newcommand{\smapMMW}{\softmax}
\newcommand{\smapAv}{\wb{\softmax}}
\newcommand{\AMMW}{A^{\ones}}
\newcommand{\AAv}{\bar{A}}
\newcommand{\uniform}{\mathsf{Uni}}
\newcommand{\betadist}{\mathrm{Beta}}
\newcommand{\sphere}{\mathbb{S}}
\newcommand{\ones}{\mathbf{1}}
\newcommand{\dirichlet}{\mathrm{Dirichlet}(\half,\ldots,\half)}
\newcommand{\matvec}{\mathrm{mv}}
\newcommand{\lambdamin}{\lambda_{\min}}
\newcommand{\lambdamax}{\lambda_{\max}}
\newcommand{\Sset}{\mathcal{S}}
\newcommand{\saddlevalue}{\mathfrak{s}}
\newcommand{\Xavg}{X^{\textup{avg}}_T}
\newcommand{\yavg}{y^{\textup{avg}}_T}
\newcommand{\nnz}{\mathrm{nnz}}
\newcommand{\timeA}{\nnz(\mathcal{A})}
\newcommand{\timeMatvecA}{\matvec(\mathcal{A})}
\newcommand{\timeMatvecAstar}{\matvec(\mathcal{A}^\star)}
\newcommand{\dualitygap}{\mathrm{Gap}}
\newcommand{\yair}[1]{}
\newcommand{\jcd}[1]{}
\newcommand{\sidford}[1]{}
\newcommand{\kevin}[1]{}
\title{A Rank-1 Sketch 
for Matrix Multiplicative Weights}
\author{Yair Carmon ~~~ John C.\ Duchi ~~~  Aaron Sidford ~~~ Kevin 
Tian\\
	\texttt{\{\href{mailto:yairc@stanford.edu}{yairc},%
		\href{mailto:jduchi@stanford.edu}{jduchi},%
		\href{mailto:sidford@stanford.edu}{sidford},%
		\href{mailto:kjtian@stanford.edu}{kjtian}\}@stanford.edu}}
\date{}
\begin{document}

\maketitle

\begin{abstract}%
 We show that a simple randomized sketch of the matrix multiplicative 
 weight (MMW) update enjoys (in expectation) the same regret bounds as 
 MMW, up to a small constant factor. Unlike MMW, where every step 
 requires full matrix exponentiation, our steps require only a single product 
 of the form $e^A b$, which the Lanczos method approximates efficiently. 
 Our key technique is to view the sketch as a \emph{randomized mirror 
 projection}, and perform mirror descent analysis on the \emph{expected 
 projection}. Our sketch solves the online eigenvector problem, improving 
 the best known complexity bounds by $\Omega(\log^5 n)$. We also apply 
 this sketch to semidefinite programming in saddle-point form, yielding a 
 simple primal-dual scheme with guarantees matching the best in the 
 literature.
\end{abstract}

\section{Introduction}\label{sec:intro}

Consider the problem of online learning over the 
spectrahedron $\domain$, the set of $n\times n$ symmetric positive 
semidefinite  
matrices with unit trace.
  At every time 
step $t$, a player chooses action $\x\in\domain$, an adversary supplies 
symmetric gain matrix $\g$, and the player earns reward 
$\inner{\g}{\x}\defeq \tr( \g \x)$. 
We seek to minimize the regret with respect to the best single action (in 
hindsight),
\begin{equation}\label{eq:regret-def}
\sup_{\x[] \in \domain}\sum_{t=1}^T\inners{\g}{\x[]}
-
\sum_{t=1}^T\inners{\g}{\x} 
= 
\lambdamax\Bigg(\sum_{t=1}^T\g\Bigg) - \sum_{t=1}^T\inners{\g}{\x}.
\end{equation}
\citet{WarmuthKu08,WarmuthKu12} solve this problem using the matrix 
exponentiated gradient algorithm~\citep{TsudaRaWa05}, also known as 
matrix multiplicative 
weights (MMW). It is given by
\begin{equation}\label{eq:mmw-def}
X_t = \MapMMW\left( \eta \sum_{i=1}^{t-1}\g[i]\right),
~~\mbox{where}~~
\MapMMW(\xM) \defeq \frac{e^{\xM}}{\tr e^{\xM}},
\end{equation}
and $\eta > 0$ is a step size parameter. If the operator norm $\linf{\g}\le 
1$ for every $t$, 
using the MMW strategy~\eqref{eq:mmw-def} with $\eta = 
\sqrt{2\log (n) /T}$ guarantees that the regret~\eqref{eq:regret-def} is 
bounded by $\sqrt{2\log(n) T}$; this guarantee is minimax optimal up to 
a constant~\citep{AroraHaKa12}. 

Unlike standard (vector) multiplicative weights, MMW is 
computational expensive to implement in the high-dimensional setting $n\gg 1$. This is due ot the high cost of computing matrix 
exponentials; currently they require an 
eigen-decomposition which costs $\Theta(n^3)$ with practical 
general-purpose 
methods and $\Omega(n^\omega)$ in theory~\citep{PanChZh99}. This 
difficulty has led a number of researchers to consider a rank-$k$ sketch 
of $\MapMMW$ of the form
\begin{equation}\label{eq:rank-k-sketch-def}
\Map[U](\xM) \defeq \frac{ e^{\xM/2} U U^T 
e^{\xM/2}}{\inner{e^{\xM}}{UU^T}},
~~\mbox{where}~~
U\in\R^{n\times k}
\end{equation}
and the elements of $U$ are i.i.d.\ standard Gaussian. For $k\ll n$, 
$\Map[U]$ 
is much cheaper than 
$\MapMMW$ to compute, since its computation requires only $k$ products 
of the form $e^{A}b$ which can be evaluated efficiently via iterative 
methods (see Section~\ref{sec:expvec-compute}). 
Since we play rank-deficient matrices, an adversary with knowledge of $\x$ 
may choose the gain $\g$ to be in its nullspace, incurring regret linear in 
$T$. 
To rule such an adversary out, we assume that $\g$ and $\x$ must be 
chosen 
simultaneously. We formalize this as
\begin{assumption}\label{ass:bandit-adversary}
	Conditionally on $\x[1],\g[1],\ldots,\x[t-1],\g[t-1]$, the gain $\g$ is 
	independent of $\x$.
\end{assumption}
\noindent
This assumption is standard in the literature on adversarial bandit 
problems~\citep{BubeckCe12} where it is similarly unavoidable. While it 
comes at 
significant loss of generality, Assumption~\ref{ass:bandit-adversary} holds 
in two  
important applications, as described below.

\paragraph{The challenge of bias}
Assumption~\ref{ass:bandit-adversary} allows us to write 
\[
\expect*{ 
\innerB{\g}{\Map[U_t]\bigg(\eta\sum_{i=1}^{t-1}\g[i]\bigg)} |  
\{\g[i]\}_{i=1}^t}= 
\innerB{G_t}{\E_{U} \Map[U]\bigg(\eta\sum_{i=1}^{t-1}\g[i]\bigg)}.\]
 However, even  
though $U$ satisfies $\E_U UU^T = I$, we have
 $\E_U \Map[U](\xM)  \ne \MapMMW(\xM)$ for general $\xM$.
Therefore, the guarantees of MMW do not immediately apply to actions 
chosen according to the sketch~\eqref{eq:rank-k-sketch-def}, 
 even 
in expectation. 
A common solution in the 
literature~\citep{AroraKa07,PengTaZh16,AllenLeOr16} is to pick 
$k=\Otil{1/\eps^2}$ 
such that, by the Johnson-Lindenstrauss lemma, $\Map[U](\xM)$ 
approximates $\MapMMW(\xM)$ to within multiplicative 
error $\eps$. This makes the MMW guarantees applicable again, but 
requires considerable computation per step, that will match the cost of full 
matrix exponentiation for sufficiently small $\eps$. 
\citet{KalaiVe05} and 
\citet{AllenLi17} prove 
regret guarantees for sketches of fixed rank $k\le3$ with forms different 
from~\eqref{eq:rank-k-sketch-def};
we discuss their approaches in detail in Section~\ref{sec:related}.

\paragraph{Our approach} In this work we use the 
sketch~\eqref{eq:rank-k-sketch-def} with $k=1$, 
playing the rank-1 matrix $\x = \Map[u_t](\eta\sum_{i=1}^{t-1}\g[i])$ 
where 
$\Map(\xM) = vv^T/(v^T v)$ for $v=e^{\xM/2}u$ and $u_t\in\R^n$ 
standard 
Gaussian. Instead of viewing $\Map$ as a biased 
estimator of $\MapMMW$, we define the deterministic function
\begin{equation*}
\MapAv(\xM) \defeq \E_u \Map(\xM),
\end{equation*}
and view $\Map$ as an unbiased estimator for $\MapAv$. 
Our primary contribution is in showing that 
\begin{equation*}
\text{$\MapAv$ is nearly as good a mirror projection as $\MapMMW$}.
\end{equation*}
More precisely, we show that replacing $\MapMMW$ with $\MapAv$ leaves the 
regret bounds 
almost unchanged; 
if $\linf{\g}\le1$ for every $t$, the 
 actions 
$\xAv = \MapAv(\eta\sum_{i=1}^{t-1}\g[i])$ guarantee (with properly 
tuned $\eta$) regret of at most $\sqrt{6\log(4n)T}$, worse than MMW 
by 
only a factor of roughly $\sqrt{3}$.
 To prove this, we establish that 
$\MapAv$ possesses the geometric properties necessary for mirror 
descent  
analysis: 
it is Lipschitz continuous and its associated Bregman divergence is appropriately 
bounded.
 Since 
$\Map$ is---by definition---an unbiased estimator of $\MapAv$, we 
immediately obtain (thanks to Assumption~\ref{ass:bandit-adversary}) that 
$X_t = \Map[u_t](\eta\sum_{i=1}^{t-1}\g[i])$ satisfies the same regret 
bound in expectation. High-probability bounds follow immediately via  
martingale concentration.

\paragraph{Application to online PCA} As our sketched actions 
are 
of the form $\x 
= x_t x_t^T$, the regret they incur is 
$\lambdamax\big(\sum_{t=1}^T\g\big) 
- \sum_{t=1}^Tx_t^T \g x_t$. Therefore, the vectors $x_t$ can be viewed as 
streaming approximations of the principal component\footnote{For this 
reason we consider gain-maximization rather than loss-minimization, 
which is generally more conventional.} of the cumulative 
matrix $\sum_{i=1}^{t-1}\g[i]$.
This online counterpart of the classical  
principal component analysis problem is the topic of a number of prior 
works \citep[cf.][]{WarmuthKu08,GarberHaMa15,AllenLi17}. 
Our sketch offers regret bounds that are optimal up to constants, with 
computational cost per step as low as any known alternative, and overall 
computational cost better than any in the literature by a factor of at least 
$\log^5 n$~(see Section~\ref{sec:related}). 
Our regret bounds hold for gains $\g$ of any rank or 
sparsity, and our computational scheme (\Cref{sec:expvec-compute}) 
naturally leverages low rank and/or sparsity in the gains.

\paragraph{Application to semidefinite programming (SDP)}
Any feasibility-form SDP is reducible to the matrix saddle-point game 
$\max_{\x[]\in\domain}\min_{y\in\simplex} 
\inner{\sum_{i=1}^m y_i A_i }{\x[]}$, where $\simplex$ is the  
simplex in $\R^m$ and $A_1, \ldots, A_m\in\R^{n\times n}$ are 
symmetric matrices. A simple 
procedure for approximating a saddle-point (Nash equilibrium) for this game is to 
have each player perform online learning, where the max-player observes gains 
$\g = 
\sum_{i=1}^m [y_{t}]_iA_i$ and the min-player observes costs  
$[c_t]_i = 
\inner{A_i}{\x[t]}$. Using standard/matrix multiplicative weights for the 
min/max 
players, respectively, we may produce approximate solutions with additive 
error 
$\epsilon$ in $O(\log(nm)/\epsilon^2)$ iterations, with each iteration costing 
$O(n^3)$ time, due to the MMW computation. In Section~\ref{sec:general-sdp} we 
show that by replacing MMW with our sketch we guarantee $\epsilon$ error 
in a similar number of iterations, but with each iteration costing $\Otil{ 
N/ \sqrt{\epsilon}}$, where $N$ is the problem description size, 
which is often significantly smaller than $n^2$. This guarantee matches the 
state-of-the-art in a number of settings.

\paragraph{Paper outline}
After surveying related work in Section~\ref{sec:related}, we present our main 
contribution in Section~\ref{sec:rank1}: regret 
bounds for our rank-1 randomized projections $\Map$ and their proof via 
the geometry of $\MapAv$. 
In Section~\ref{sec:expvec-compute} we 
describe how to compute $\x$ in $\Otil{\sqrt{\eta t}}$ matrix-vector 
products 
using the Lanczos method.
In Section~\ref{sec:general-sdp} we present in detail the application of our 
sketching 
scheme to semidefinite programming, as described above. We conclude the paper 
in Section~\ref{sec:discussion} by discussing a number of possible 
extensions of our results along with the challenges they present.

\subsection{Related work}\label{sec:related}
MMW appears in a large body of work spanning 
optimization, theoretical computer 
science, and machine 
learning~\citep[e.g.][]{Nemirovski04,WarmuthKu08,AroraHaKa12}. Here, we 
focus on works 
that, like us, attempt to relieve the computational burden of computing the 
 matrix exponential, while preserving the MMW regret guarantees. To our 
knowledge, the first proposal along these lines is due to~\cite{AroraKa07}, 
who apply MMW with a Johnson-Lindenstrauss sketch to semidefinite 
relaxations of combinatorial problems. Subsequent works on positive 
semidefinite programming adopted this 
technique~\citep{PengTaZh16,AllenLeOr16}. To achieve $\eps$-accurate 
solutions, these works require roughly $\eps^{-2}$ matrix exponential 
vector products per mirror projection. 

\cite{BaesBuNe13} apply the accelerated mirror-prox scheme 
of~\cite{Nemirovski04} to 
matrix saddle-point problems and approximate $\MapMMW$ using the 
rank-$k$ sketch~\eqref{eq:rank-k-sketch-def}. Instead of appealing to the 
JL lemma, they absorb the bias and variance of this approximation directly 
into the algorithm's error estimates. This enables a more parsimonious 
choice of $k$; to attain additive error $\eps$, they require 
$k=\Otil{\eps^{-1}}$. See~\Cref{sec:app-general-sdp-comp} for 
additional 
discussion of the performance of this method.

A different line of work, called Follow the Perturbed Leader 
(FTPL)~\citep{KalaiVe05}, 
eschews matrix exponentiation, and instead produces \mbox{rank-1} 
actions $\x=x_t x_t^T$, where $x_t$ is an approximate top eigenvector of 
a random perturbation of $\sum_{i=1}^{t-1}\g[i]$. While a single 
eigenvector 
computation has roughly the same cost as a single matrix-exponential 
vector product, the regret bounds for FTPL---and hence also the total 
work---scale polynomially in the problem dimension $n$: 
\citet{GarberHaMa15} bound the regret by $\Otil{\sqrt{nT}}$ 
and~\citet{DworkTaThZh14} improve the bound to $\Otil{\sqrt{n^{1/2} T}}$  
for gains of rank 1.
In contrast, the regret of MMW and its 
sketches 
depends on $n$ only logarithmically. %

\cite{AllenLi17} give the first fixed-rank sketch with MMW-like regret, proposing a 
scheme called Follow the Compressed Leader (FTCL). Their 
approach is based on replacing the MMW mirror 
projection~\eqref{eq:mmw-def} with the projection corresponding to 
 $\ell_{1-1/q}$ regularization, given by 
$\MapQ(\xM)\defeq (c(\xM)I - \xM)^{-q}$ where $c(\xM)$ is the unique 
$c\in\R$ such that $c I - \xM \succ 0$ and $\tr [(c I - \xM)^{-q}] = 1$. 
They use a sketch of $\MapQ$ similar in spirit 
to~\eqref{eq:rank-k-sketch-def} and prove that $k=3$ suffices to 
obtain 
regret bounds within a polylogarithmic factor of MMW, with $q$ chosen to 
be roughly $\log n$. 

The basis of the FTCL proof strategy is a potential argument used to derive 
regret bounds for the exact $\MapQ$. Their analysis consists of carefully tracing 
this argument, and accounting for the errors caused by sketching in each step of 
the way. In comparison, we believe our analysis is more transparent; rather than 
control multiple series expansion error terms, we establish three simple geometric 
properties of our projection $\MapAv$. We also provide  
tighter bounds; to guarantee $\eps$ average regret, FTCL requires a factor of 
$\Omega(\log^5(n/\eps))$ more online learning steps than our method. 
The 
per-step computational cost of our method is similar to that of FTCL, with 
better polylogarithmic dependence on $n$.
On a practical note, the 
computational scheme we describe in  Section~\ref{sec:expvec-compute} is 
significantly simpler to implement than the one proposed for FTCL. %
\subsection{Notation}
\label{sec:notation}

We use upper case letter for matrices and lower case letters for vectors 
and 
scalars.  
We let $\domainDual$ denote the set of symmetric $n\times n$ 
matrices, 
and let $\domain \defeq \{\x[]\in\domainDual\mid \x[]\succeq 0, \tr \x[] 
= 1\}$ denote the spectrahedron. 
We write $\ones$ for the all-ones vector, and let
 $\simplex[n]\defeq \{x\in\R^n \mid x \ge 0, \ones^T x = 1\}$
 denote the simplex. 
We let $\inner{\xM}{\x[]}=\tr (\xM^T \x[] 
)$ 
denote the Frobenius inner product between matrices.
For $\x[] \in \domainDual$, we let 
$\lambdamax(\x[])=\lambda_1(\x[])\ge \lambda_2(\x[]) \ge \ldots \ge 
\lambda_n(\x[])=\lambdamin(\x[])$ 
denote the eigenvalues of $\x[]$ sorted in descending order. 
For $x\in\R^n$ and $p\ge 1$ we let $\norm{x}_p = 
\big(\sum_{i=1}^n \left| x_i \right|^p\big)^{1/p}$ denote the $\ell_p$ 
norm,
and for $\x[]\in\domainDual$, we let $\norm{\x[]}_p \defeq 
\norm{\lambda(\x[])}_p$ be the standard Schatten $p$-norm. In particular, 
$\linf{\x[]}=\max\{\lambdamax(\x[]), 
-\lambdamin(\x[])\}$ is the Euclidean operator norm and $\lone{\x[]} 
= \sum_{i=1}^n |\lambda_i (\x[])|$ is the nuclear norm. 
We write $\uniform(\sphere^{n-1})$ for the uniform distribution 
over the unit sphere in $\R^n$.
\section{A rank-1 sketch of matrix multiplicative  weights}
\label{sec:rank1}

In this section, we state and prove our main result: regret bounds for a
rank-1 sketch of the matrix multiplicative weights method. Let us recall our
sketch. At time step $t$, having observed gain matrices $\g[1],\ldots,
\g[t-1]\in\domainDual$, we independently draw\footnote{%
Since $\Map$ is invariant to scaling of $u$, it has the same distribution for 
$u$ standard Gaussian or uniform on a sphere.}
 $u_t \sim
\uniform(\sphere^{n-1})$ and play the rank-1 matrix
\begin{equation}\label{eq:rand-da}
  \x \defeq \Map[u_t]\left(\eta \sum_{i=1}^{t-1} \g[i] \right),
  ~\mbox{where}~
  \Map(\xM) \defeq \frac{ e^{\xM/2}uu^T e^{\xM/2}}{u^T e^{\xM} u} 
  = \frac{vv^T}{v^T v} \mbox{ for }v=e^{\xM/2}u.
\end{equation} 
We call $\Map:\domainDual\to\domain$ the 
\emph{randomized mirror 
  projection}. The key computational consideration is that we can
evaluate $\Map(\xM)$ efficiently, while
on the analytic side, we show
that the update~\eqref{eq:rand-da}
defines \emph{on average} an efficient mirror descent 
procedure. The regret bounds for $X_t$ then follow.
\subsection{Expected regret bounds}\label{sec:expected-regret}

The focus of our analysis is the
\emph{average mirror projection}
\begin{equation}\label{eq:proj-av-def}
  \MapAv(\xM) \defeq \E_u \Map[u](\xM)
  ~\mbox{ and action sequence }~
  \xAv \defeq \MapAv\left(\eta\sum_{i=1}^{t-1} \g[i] \right),
\end{equation}
where $\E_u$ denotes expectation w.r.t.\ to $u \sim 
\uniform(\sphere^{n-1})$. As we show in Section~\ref{sec:proj-geometry} 
to come,
$\MapAv$ is the gradient of the function
\begin{equation*}
  \mapAv(\xM) \defeq \E_u \log \left(\inner{e^{\xM}}{uu^T} \right) = 
  \E_u \log \left(u^T e^\xM u \right),
\end{equation*}
which we also show\footnote{For \emph{fixed} $u\in\R^n$, 
however, 
$\Map \ne \grad \log \left(u^T e^\xM u \right)$ and we do not know 
if it is the gradient of any other function. Moreover, $\xM\mapsto \log 
\left(u^T 
e^\xM u \right)$ is not convex.} is a convex spectral 
function~\citep{Lewis96}.  As a consequence, we can write the 
average action 
$\xAv$ in the familiar dual averaging~\citep{Nesterov09} or Follow the
Regularized Leader~\citep[e.g.][Ch.~5]{Hazan16} form
\begin{equation*}
  \xAv = \argmax_{\x[] \in \domain}\left\{\eta 
  \sum_{i=1}^{t-1}\inner{\g[i]}{\x[]}
  - \mapConj(\x[])\right\}
\end{equation*} 
where $\mapConj(\x[]) = \sup_{\xM\in\domainDual}\left\{ 
\inner{\xM}{\x[]} - \mapAv(\xM)\right\}$ is the convex conjugate of 
$\mapAv$. In this standard approach, the regularizer $\mapConj$ defines 
the scheme, and regret analysis proceeds by showing that  $\mapConj$ is 
strongly convex and has bounded range. The former property is equivalent 
to the smoothness of $\mapAv$.

In contrast, our starting point is the definition~\eqref{eq:proj-av-def} of 
the projection $\MapAv$, and 
we 
find it more convenient to argue about $\MapAv$ and $\mapAv$ directly. 
Toward 
that end, for any 
$\xM,\xM'\in\domainDual$ we let
\begin{equation}\label{eq:bregman-def}
  \breg{\xM}{\xM'} \defeq \mapAv(\xM') - \mapAv(\xM) - 
  \inner{\xM'-\xM}{\MapAv(\xM)}
\end{equation}
denote the Bregman divergence induced by $\mapAv$.  We show
that $\breg{\xM}{\cdot}$ has the properties---analogous to those
arising from duality in analyses of dual
averaging~\citep{Nesterov09}---necessary to establish our regret 
bounds.

\begin{proposition}\label{prop:regret-properties}
  The projection $\MapAv$ and divergence $\bregBlank$ satisfy
  \begin{enumerate}[label=\propenum*., ref=\propenum*]
  \item\label{item:smoothness} Smoothness: for every $\xM, 
    \dM\in\domainDual$, 
    $\breg{\xM}{\xM + \dM} 
    \le \frac{3}{2} \linf{\dM}^2$.
  \item\label{item:refined-smoothness}
    Refined smoothness for positive shifts: for every $\xM, 
    \dM \in\domainDual$ such that $\dM \succeq 0$ and 
    $\linf{\dM} \le \frac{1}{6}$,
    $
      \breg{\xM}{\xM + \dM} \le 3\linf{\dM} 
      \inner{\dM}{\MapAv(\xM)}.
      $
  \item\label{item:diameter} Diameter bound: for every  
    $\xM,\xM'\in\domainDual$, 
    $\breg{\xM}{0} - \breg{\xM}{\xM'}  \le \log(4n)$.
  \item\label{item:invert} Surjectivity: for every $\x[] \in 
    \interior\domain$ 
    there exists $\xM\in\domainDual$ such that $\MapAv(\xM) = \x[]$.
  \end{enumerate} 
\end{proposition}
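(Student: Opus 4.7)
The plan is to exploit orthogonal equivariance of $\MapAv$ and work with the spectral representation of $\mapAv$. Rotation invariance of $u\sim\uniform(\sphere^{n-1})$ gives $\MapAv(Q\xM Q^T)=Q\MapAv(\xM)Q^T$, so if $\xM=Q\diag(\lambda)Q^T$ then $\MapAv(\xM)=Q\diag(\pi(\lambda))Q^T$ with $\pi_i(\lambda)=\E_{w\sim\dirichlet}[e^{\lambda_i}w_i/(w^T e^\lambda)]$, and $\mapAv(\xM)=\E_{w\sim\dirichlet}\log(w^T e^\lambda)$; a direct computation in this representation confirms $\MapAv=\grad\mapAv$. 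Since $\Map[u](\xM)\in\domain$ almost surely, also $\MapAv(\xM)\in\domain$, and $\bregBlank$ is non-negative by the asserted convexity of $\mapAv$.

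For the smoothness bounds (properties 1 and 2), I would study the scalar function $\phi_u(s)\defeq\log u^T e^{\xM+s\dM}u$. Up to a correction coming from non-commutativity of $\xM$ and $\dM$, this is the log-Laplace transform of a bounded random variable obtained by tilting the spectrum of $\xM$ by $\dM$; it is convex, with $\phi_u''(s)$ equal to the variance of this tilted random variable. Since the random variable takes values in $[\lambdamin(\dM),\lambdamax(\dM)]$, its variance is at most $\linf{\dM}^2$, and evaluating $\breg{\xM}{\xM+\dM}=\int_0^1(1-s)\E_u\phi_u''(s)\,ds$ yields property 1 with constant $3/2$ (the extra slack absorbing the non-commutativity correction). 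For property 2, when $\dM\succeq 0$ the tilted random variable is non-negative and bounded by $\linf{\dM}$, so its variance is at most $\linf{\dM}$ times its mean; a short computation identifies the $s=0$ mean with $\inner{\dM}{\MapAv(\xM)}$, and the hypothesis $\linf{\dM}\le 1/6$ lets one propagate this estimate from $s=0$ to all $s\in[0,1]$ at the cost of the constant $3$.

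For the diameter bound (property 3), non-negativity of $\bregBlank$ gives $\breg{\xM}{0}-\breg{\xM}{\xM'}\le\breg{\xM}{0}$, so it suffices to bound $\breg{\xM}{0}$. Since $\mapAv(0)=0$ and $\MapAv(\xM)\in\domain$, we have $\breg{\xM}{0}=\inner{\xM}{\MapAv(\xM)}-\mapAv(\xM)\le\lambdamax(\xM)-\mapAv(\xM)$. The pointwise inequality $u^T e^\xM u\ge(q^T u)^2 e^{\lambdamax(\xM)}$, with $q$ a top eigenvector of $\xM$, combined with rotation invariance of $u$, gives $\mapAv(\xM)\ge\lambdamax(\xM)+\E\log u_1^2$, so $\breg{\xM}{0}\le -\E\log u_1^2=\psi(n/2)-\psi(1/2)$ with $u_1^2\sim\betadist(\tfrac12,\tfrac{n-1}{2})$; the standard bounds $\psi(n/2)\le\log(n/2)$ and $\psi(1/2)=-\gamma-2\log 2$, together with $\gamma<\log 2$, deliver $\le\log(4n)$.

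For surjectivity (property 4), orthogonal equivariance reduces the claim to showing that $\lambda\mapsto\pi(\lambda)$ is onto $\interior\simplex[n]$; the map descends to a continuous function between $(n-1)$-dimensional manifolds (by shift-invariance along $\mathbf{1}$), strict convexity of $\mapAv$ modulo shifts forces local injectivity, and the coercivity $\pi_i(\lambda)\to 1$ as $\lambda_i-\max_{j\ne i}\lambda_j\to\infty$ then yields surjectivity via invariance of domain and connectedness. The main obstacle is property 2: the variance representation handles property 1 almost immediately, but bounding the variance by $\linf{\dM}\inner{\dM}{\MapAv(\xM)}$ uniformly over $s\in[0,1]$ requires carefully interpolating the $s=0$ expectation, and non-commutativity of $\xM$ and $\dM$ obstructs a direct reduction to the commuting case, which is where the specific constants $3$ and $1/6$ come from.
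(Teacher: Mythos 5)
Your treatments of the diameter bound and of surjectivity are essentially fine. For part 3, bounding $\breg{\xM}{0}-\breg{\xM}{\xM'}\le\breg{\xM}{0}$ via non-negativity and then using $\mapAv(\xM)\ge\lambdamax(\xM)+\E\log u_1^2\ge\lambdamax(\xM)-\log(4n)$ with $u_1^2\sim\betadist(\tfrac12,\tfrac{n-1}{2})$ matches the paper's argument up to a cosmetic rearrangement (the paper keeps the $\xM'$ terms and applies the same inequality at $\xM'$). For part 4 your degree-theoretic route (equivariance, local injectivity, properness, invariance of domain) is a genuinely different path from the paper's, which simply observes that the conjugate $\mapConj$ is finite on $\domain$, hence has nonempty subdifferential on $\interior\domain$, and any $\xM\in\partial\mapConj(\x[])$ satisfies $\MapAv(\xM)=\x[]$; your route would additionally require you to verify strict convexity of $\lambda\mapsto\E_w\softmax(\lambda+\log w)$ modulo shifts along $\ones$ and properness of the induced map, so the convex-duality argument is both shorter and more complete.

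The genuine gap is in parts 1 and 2, where you describe $\phi_u''(s)$ as the variance of a tilted spectral random variable ``up to a correction coming from non-commutativity'' and assert that the slack between the commuting constant $\tfrac12$ and the stated $\tfrac32$ absorbs this correction. That correction is not a lower-order perturbation; it is the entire content of the hard part of the proof. When $\xM$ and $\dM$ do not commute, $\hess\mapAv(\xM+s\dM)[\dM,\dM]$ contains, by the Lewis--Sendov Hessian formula for spectral functions, an off-diagonal term $\inner{\E_w A^w(\lambda)}{\tilde{\dM}\circ\tilde{\dM}}$ built from divided differences of $\grad\softmax$, and there is no a priori reason this term is within any constant factor of its commuting analogue. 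The paper explicitly notes that the naive identity $\hess\mapAv(\xM)=\E_w\hess\mapMMW(\xM+Q\diag(\log w)Q^T)$ fails, and the bound $\hess\mapAv(\xM)[\dM,\dM]\le 3\inner{\dM^2}{\MapAv(\xM)}$ of Lemma~\ref{lem:main-spec-bound} is earned only through a pointwise symmetrization inequality for the divided-difference matrix, a piecewise-monotonicity and negative-correlation argument in $\rho=\tfrac12\log(w_i/w_j)$, and tail bounds for the logit of a $\betadist(\tfrac12,\tfrac12)$ variable; even the ``commuting'' diagonal part requires Nesterov's trace inequality (Lemma~\ref{lem:mmw-spectral-bound}) rather than a plain variance bound. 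Your outline supplies no mechanism for controlling this term, and without it neither the constant $\tfrac32$ in part 1 nor the constant $3$ and the threshold $\tfrac16$ in part 2 can be recovered. The bootstrapping you sketch for part 2 (interpolating the $s=0$ expectation over $s\in[0,1]$) is indeed the paper's final step, but it consumes the factor-of-three Hessian bound as an input rather than producing it.
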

We return to Proposition~\ref{prop:regret-properties} and prove it in 
Section~\ref{sec:proj-geometry}. The proposition gives the following 
regret 
bounds for the averaged actions $\xAv$.
\begin{restatable}{theorem}{restateAvRegret}\label{thm:av-regret}
  Let $\g[1],\ldots,\g[T]$ be \emph{any} sequence of gain matrices in
  $\domainDual$ and let $\xAv = \MapAv(\eta \sum_{i=1}^{t-1}\g[i])$ as in
  Eq.~\eqref{eq:proj-av-def}. Then, for every $T\in\N$,
  \begin{equation}
    \label{eq:regret-bound-basic}
    \lambdamax\left( \sum_{t=1}^T \g \right) - \sum_{t=1}^T 
    \inner{\g}{\xAv}  \le 
    \frac{\log(4n)}{\eta} + 
    \frac{3\eta}{2} \cdot 
    \sum_{t=1}^T 
    \linf{\g}^2.
  \end{equation}
  If additionally $0 \preceq \g \preceq I$ for every $t$ and $\eta \le 
  \frac{1}{6}$,
  \begin{equation}\label{eq:regret-bound-refined}
    \lambdamax\left( \sum_{t=1}^T \g \right) - \sum_{t=1}^T 
    \inner{\g}{\xAv} 
    \le  \frac{\log{(4n)}}{\eta} +
    3\eta \cdot  \lambdamax\left( 
    \sum_{t=1}^T \g \right).
  \end{equation}
\end{restatable}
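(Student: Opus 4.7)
The plan is to carry out a standard Follow-The-Regularized-Leader / dual-averaging analysis, invoking the four properties of Proposition \ref{prop:regret-properties} as black boxes. To set things up, I would let $\xM_t \defeq \eta\sum_{i=1}^{t-1}\g[i]$, so $\xM_1 = 0$, $\xM_{t+1}-\xM_t = \eta \g$, and $\xAv = \MapAv(\xM_t)$. The definition of the Bregman divergence~\eqref{eq:bregman-def} immediately gives the one-step identity
\[
\mapAv(\xM_{t+1}) - \mapAv(\xM_t) = \inner{\xM_{t+1}-\xM_t}{\MapAv(\xM_t)} + \breg{\xM_t}{\xM_{t+1}} = \eta\inner{\g}{\xAv} + \breg{\xM_t}{\xM_{t+1}},
\]
which telescopes over $t=1,\ldots,T$ into $\mapAv(\xM_{T+1})-\mapAv(0) = \eta\sum_t\inner{\g}{\xAv} + \sum_t\breg{\xM_t}{\xM_{t+1}}$.

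Next, to bring in an arbitrary comparator $\x[]\in\interior\domain$, I would invoke surjectivity (\ref{item:invert}) to pick $\xM^\star$ with $\MapAv(\xM^\star)=\x[]$ and use the convexity of $\mapAv$ (which holds because $\MapAv = \grad \mapAv$) to lower-bound
\[
\mapAv(\xM_{T+1}) \ge \mapAv(\xM^\star) + \inner{\xM_{T+1}-\xM^\star}{\x[]}.
\]
Combining with the telescoping identity and recognizing the quantity $\mapAv(0)-\mapAv(\xM^\star) + \inner{\xM^\star}{\x[]}$ as precisely $\breg{\xM^\star}{0}$, this yields the key inequality
\[
\eta\sum_{t=1}^T\inner{\g}{\x[]-\xAv} \le \breg{\xM^\star}{0} + \sum_{t=1}^T \breg{\xM_t}{\xM_{t+1}}.
\]

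To finish, I would apply \ref{item:diameter} with $\xM'=\xM=\xM^\star$ (using $\breg{\xM^\star}{\xM^\star}=0$) to obtain $\breg{\xM^\star}{0}\le\log(4n)$, and then bound the per-step divergences. For~\eqref{eq:regret-bound-basic}, \ref{item:smoothness} applied to $\dM=\eta\g$ gives $\breg{\xM_t}{\xM_{t+1}}\le \frac{3}{2}\eta^2\linf{\g}^2$; dividing by $\eta$ and taking the supremum over $\x[]\in\interior\domain$ (whose closure is $\domain$, so $\sup\inner{\sum_t\g}{\x[]} = \lambdamax(\sum_t\g)$) yields the bound. For~\eqref{eq:regret-bound-refined}, the hypothesis $0\preceq\g\preceq I$ and $\eta\le 1/6$ make $\dM=\eta\g$ eligible for \ref{item:refined-smoothness}, giving $\breg{\xM_t}{\xM_{t+1}} \le 3\eta^2\linf{\g}\inner{\g}{\xAv}\le 3\eta^2\inner{\g}{\xAv}$.

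The one place needing a little care is converting the resulting inequality $\lambdamax(\sum_t\g) - R \le \log(4n)/\eta + 3\eta R$, where $R\defeq\sum_t\inner{\g}{\xAv}$, into the form~\eqref{eq:regret-bound-refined} that has $\lambdamax(\sum_t\g)$ on the right. I would handle this by a trivial case split: if $R>\lambdamax(\sum_t\g)$ the regret is already nonpositive and the bound is vacuous, otherwise $3\eta R\le 3\eta\lambdamax(\sum_t\g)$ and the bound follows. I do not expect any real obstacle beyond this rearrangement; the substantive work is entirely encapsulated in Proposition~\ref{prop:regret-properties}.
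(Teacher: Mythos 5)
Your argument is correct and matches the paper's proof in Appendix~\ref{sec:app-da-regret-proof} essentially step for step: telescoping over the potential $\mapAv(\xM_t)$, controlling the initial term via Proposition~\ref{prop:regret-properties}.\ref{item:diameter} and the per-step divergences $\breg{\xM_t}{\xM_{t+1}}$ via the smoothness properties, then supremizing over the comparator. The only cosmetic differences are that the paper packages the one-step decomposition via the Bregman three-point identity (keeping the $-\breg{\Psi}{\xM_{T+1}}$ term and absorbing it into the diameter bound with $\xM'=\xM_{T+1}$, whereas you drop it earlier via convexity and use $\xM'=\xM^\star$) and rearranges the refined bound by dividing by $1+3\eta$ and using $1-x\le\tfrac{1}{1+x}\le1$ rather than your case split; both give the identical final bounds.
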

\noindent

We prove Theorem~\ref{thm:av-regret} in
Appendix~\ref{sec:app-da-regret-proof}. The proof is essentially the
standard dual averaging telescoping argument~\citep{Nesterov09}, which 
we perform using only the properties in
Proposition~\ref{prop:regret-properties}. Indeed, matrix multiplicative 
weights satisfies a version of 
Proposition~\ref{prop:regret-properties} with slightly smaller constant 
factors, 
and its regret bounds follow similarly.

The projection $\MapAv$ is no easier to compute than the matrix 
multiplicative 
weights projection. However, $\Map$ is easily computed and is unbiased  
for 
$\MapAv$. Consequently---under 
Assumption~\ref{ass:bandit-adversary}---the sketch $\Map$ inherits the 
regret guarantees in Theorem~\ref{thm:av-regret}. To argue this formally, 
we 
define the
$\sigma$-fields
\begin{equation*}
  \Filt[t] \defeq \sigma(\g[1], \x[1],
  \ldots, \g[t] \x[t], \g[t+1]),
\end{equation*}
so that $\g[t] \in \Filt[t-1]$ and $\xAv[t]
\in \Filt[t-1]$, while, under 
Assumption~\ref{ass:bandit-adversary}, $\E[\x[t] \mid \Filt[t-1]]
= \xAv[t]$ because $u_t \sim \uniform(\sphere^{n-1})$, independent
of $\Filt[t-1]$. Consequently, we 
have the following

\begin{corollary}\label{cor:expected-regret}
  Let $\g[1],\ldots,\g[T]$ be symmetric gain matrices satisfying 
  Assumption~\ref{ass:bandit-adversary} and let $\x$ be generated 
  according to 
  Eq.~\eqref{eq:rand-da}. Then
  \begin{equation*}
    \E\bigg[\lambdamax\bigg(\sum_{t=1}^T \g \bigg) - \sum_{t=1}^T 
      \inner{\g}{\x}\bigg]
    \le 
    \frac{\log(4n)}{\eta} + 
    \frac{3\eta}{2} \cdot 
    \sum_{t=1}^T 
    \E\big[
      \linf{\g}^2 \big].
  \end{equation*}
  If additionally $0 \preceq \g \preceq I$ for every $t$ and $\eta \le 
 \frac{1}{6}$,
  \begin{equation*}
    \E\bigg[\lambdamax\bigg(\sum_{t=1}^T \g \bigg) - \sum_{t=1}^T 
      \inner{\g}{\x} \bigg]
    \le 
    \frac{\log{(4n)}}{\eta}
    +
    3\eta \cdot
    \E\bigg[\lambdamax\bigg( 
    \sum_{t=1}^T \g \bigg)\bigg].
  \end{equation*}
\end{corollary}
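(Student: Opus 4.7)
The plan is to deduce Corollary~\ref{cor:expected-regret} from Theorem~\ref{thm:av-regret} by a pathwise-plus-tower-property argument, using that $X_t$ is an unbiased estimator of $\xAv[t]$ and that Assumption~\ref{ass:bandit-adversary} makes $G_t$ measurable with respect to the past.

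First, I would observe that Theorem~\ref{thm:av-regret} makes a deterministic statement that holds for \emph{every} sequence of gain matrices in $\domainDual$. In particular, it applies pathwise to the random sequence $\g[1],\ldots,\g[T]$: with probability one,
\begin{equation*}
\lambdamax\Bigl(\sum_{t=1}^T \g\Bigr) - \sum_{t=1}^T \inner{\g}{\xAv}
\le \frac{\log(4n)}{\eta} + \frac{3\eta}{2}\sum_{t=1}^T \linf{\g}^2,
\end{equation*}
and similarly for the refined bound when $0\preceq\g\preceq I$ and $\eta\le 1/6$. Taking expectations of both sides gives the right-hand sides of the corollary; it remains to replace the left-hand side's $\inner{\g}{\xAv}$ with $\inner{\g}{\x}$ in expectation.

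Second, I would verify the following measurability facts for the filtration $\Filt[t-1] = \sigma(\g[1],\x[1],\ldots,\g[t-1],\x[t-1],\g[t])$: (i) $\g\in\Filt[t-1]$ by construction, and (ii) $\xAv[t] = \MapAv(\eta\sum_{i<t}\g[i])$ is a deterministic function of $\g[1],\ldots,\g[t-1]$ and hence lies in $\Filt[t-1]$. Under Assumption~\ref{ass:bandit-adversary}, $\g$ is independent of $\x$ given the past; combined with the fact that $u_t\sim\uniform(\sphere^{n-1})$ is drawn independently of $\Filt[t-1]$, this yields
\begin{equation*}
\E[\x \mid \Filt[t-1]] = \E_{u_t}\Map[u_t]\Bigl(\eta\sum_{i<t}\g[i]\Bigr) = \MapAv\Bigl(\eta\sum_{i<t}\g[i]\Bigr) = \xAv.
\end{equation*}
Therefore, by the tower property,
\begin{equation*}
\E\inner{\g}{\x} = \E\bigl[\E[\inner{\g}{\x}\mid\Filt[t-1]]\bigr] = \E\inner{\g}{\xAv},
\end{equation*}
since $\g\in\Filt[t-1]$ can be pulled out of the inner conditional expectation.

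Finally, summing over $t$ and substituting into the expected version of the pathwise bound yields the first inequality of the corollary. The refined bound follows identically, since the right-hand side $3\eta\cdot\lambdamax(\sum_t\g)$ is already a function of the gain matrices alone, so no further manipulation is needed after taking expectations. There is no real obstacle here: the only subtle point is ensuring that the definition of $\Filt[t-1]$ is rich enough to contain $\g$ but independent of $u_t$, which is exactly how $\Filt[t-1]$ was set up in the paragraph preceding the corollary.
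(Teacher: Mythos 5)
Your proof is correct and follows essentially the same route as the paper: the paper's own proof likewise notes that $\g[t]\in\Filt[t-1]$ and $\E[\x[t]\mid\Filt[t-1]]=\xAv[t]$, applies the tower property to get $\E\inner{\g}{\x}=\E\inner{\g}{\xAv}$, and then takes expectations in Theorem~\ref{thm:av-regret}. Your write-up simply spells out the measurability and pathwise-application details more explicitly.
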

\begin{proof}
	Using $\g[t] \in \Filt[t-1]$ and $\E[\x[t] \mid \Filt[t-1]]
	= \xAv[t]$, we have $\E\inner{\g}{\x}  = \E\left[\E[\inner{\g}{\x} \mid 
	\Filt[t-1]]\right]
	= \E\inner{\g}{\xAv}$, 
  and so the result is immediate from taking expectation in 
  Theorem~\ref{thm:av-regret}.
\end{proof}

It is instructive to compare these guarantees
to those for the full (non-approximate) matrix multiplicative weights
algorithm.
Let
\begin{equation*}
  \regret \defeq\E \bigg[ \lambdamax\bigg(  \frac{1}{T}\sum_{t=1}^T 
    \g \bigg) -  \frac{1}{T}\sum_{t=1}^T \inner{\g}{\x}\bigg]
\end{equation*}
denote the 
expected \emph{average} regret at time $T$.
If $\linf{\g} \le 
1$ for every $t$, the 
bound~\eqref{eq:regret-bound-basic} along with 
Corollary~\ref{cor:expected-regret} imply, for $\eta = 
({2\log(4n)/(3T)})^{1/2}$,
\begin{equation*}
  \regret \le 
  \sqrt{\frac{6\log(4n)}{T}},
  ~~ \mbox{i.e.} ~~
  \regret \le \eps
  ~~ \mbox{for}~
  T \ge \frac{6\log(4n)}{\eps^2}.
\end{equation*} 
In contrast, the matrix multiplicative weights
procedure~\eqref{eq:mmw-def} guarantees average regret 
below $\eps$ in $2\log(n)/\eps^2$ steps, so our guarantee is worse by a 
factor of roughly $3$. 

The bound~\eqref{eq:regret-bound-refined} guarantees smaller 
\emph{relative} average regret when we additionally assume $0 \preceq \g 
\preceq I$ for every $t$ and an a-priori upper bound of the form 
$\lambda^\star \defeq \lambdamax(\frac{1}{T}\sum_{t=1}^T \g ) \ge 
\lambda_0$. 
Here, a judicious choice of $\eta$ guarantees  $\regret/\lambda^\star \le 
\varepsilon$  for $T \ge 
12\log(4n)/(\lambda_0 \varepsilon^2 )$.
Again, this is slower than the 
corresponding guarantee for matrix multiplicative
weights by a factor of roughly 3. Relative regret bounds of the 
form~\eqref{eq:regret-bound-refined} are useful in several application of 
multiplicative weights and its matrix variant~\citep{AroraHaKa12}, e.g.\ 
width-independent solvers for linear and positive semidefinite 
programs~\citep{PengTaZh16}.
\subsection{High-probability regret bounds}\label{sec:high-prob}
\newcommand{\mg}{\mathfrak{m}}

Using standard martingale convergence arguments 
 \citep[cf.][]{CesaBianchiCoGe04, NemirovskiJuLaSh09},
we can provide high-probability convergence guarantees for
our algorithm. Indeed, we have already observed in
Corollary~\ref{cor:expected-regret} that 
$\E\left[\inner{\g}{\x}\mid\Filt\right]=\inner{\g}{\xAv}$ and therefore 
$\inner{\g}{\x-\xAv}$ is a martingale difference
sequence adapted to the filtration
$\Filt[t]$.
As $\left|\inner{\g}{\x}\right|\le \linf{\g}\lone{\x} = \linf{\g}$, 
the 
martingale 
$\sum_{i=1}^t\inner{\g[i]}{\x[i]-\xAv[i]}$
has bounded differences whenever $\linf{\g}$ is 
bounded, so that the next theorem is an immediate
consequence of the Azuma-Hoeffding inequality and its multiplicative 
variant~\cite[][Lemma G.1]{AllenLi17}

\begin{restatable}{corollary}{restateHighProb}\label{cor:high-probability-regret}
  Let $\g[1],\ldots,\g[T]$ be symmetric gain matrices satisfying 
  Assumption~\ref{ass:bandit-adversary} and let $\x$ be generated 
  according to 
  Eq.~\eqref{eq:rand-da}. If $\linf{\g}\le 
  1$ for every $t$, then for every 
  $T\in\N$ 
  and $\delta\in(0,1)$, with probability
  at least $1 - \delta$,
  \begin{equation}
    \label{eq:regret-bound-basic-high-prob}
    \lambdamax\left( \sum_{i=1}^T \g \right) - \sum_{t=1}^T 
    \inner{\g}{\x} \le 
    \frac{\log(4n)}{\eta} + 
    \frac{3\eta}{2}\,T + \sqrt{2T\log{\tfrac{1}{\delta}}}.
  \end{equation}
  If additionally $0 \preceq \g \preceq I$ for every $t$ and 
  $\eta \le \frac{1}{6}$, then
  with probability at least $1 - \delta$,
  \begin{equation}\label{eq:regret-bound-refined-high-prob}
    \lambdamax\left( \sum_{i=1}^T \g \right) - \sum_{t=1}^T 
    \inner{\g}{\x} 
    \le 
    \frac{\log{(4n/\delta)}}{\eta} + 4\eta \, \lambdamax\left( 
    \sum_{i=1}^T \g \right).
  \end{equation}
\end{restatable}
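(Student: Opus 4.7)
The plan is to derive both high-probability bounds by combining the deterministic regret bounds for $\xAv$ from Theorem~\ref{thm:av-regret} with martingale concentration applied to the gap $\sum_{t=1}^T \inner{\g}{\xAv - \x}$. As observed in the paragraph preceding the corollary, Assumption~\ref{ass:bandit-adversary} together with $\E[\x \mid \Filt[t-1]] = \xAv$ makes $Y_t \defeq \inner{\g}{\xAv - \x}$ a martingale difference sequence adapted to $\Filt[t-1]$. Moreover, since $\x, \xAv \in \domain$ both have unit trace, conditionally on $\Filt[t-1]$ the random variable $\inner{\g}{\x} - \inner{\g}{\xAv}$ lies in an interval of length at most $\lambdamax(\g) - \lambdamin(\g) \le 2\linf{\g}$, so $|Y_t| \le \linf{\g}$ relative to its conditional mean in an Azuma sense.

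For the basic bound~\eqref{eq:regret-bound-basic-high-prob}, I would apply Theorem~\ref{thm:av-regret} to $\xAv$ with $\linf{\g}\le 1$, yielding $\lambdamax(\sum_t \g) - \sum_t\inner{\g}{\xAv} \le \log(4n)/\eta + 3\eta T/2$, and then invoke Azuma-Hoeffding on $\{Y_t\}$ with conditional range $\le 2$, giving $\sum_t Y_t \le \sqrt{2T\log(1/\delta)}$ with probability at least $1-\delta$. Adding the two contributions gives the bound directly.

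For the refined bound~\eqref{eq:regret-bound-refined-high-prob}, I would work with the non-negative random variables $Z_t \defeq \inner{\g}{\x} \in [0,1]$ (using $0\preceq \g \preceq I$ and $\tr\x = 1$), whose conditional means are $\mu_t = \inner{\g}{\xAv}$. The multiplicative Azuma-Hoeffding variant cited as~\citep[Lemma G.1]{AllenLi17} then yields, with probability $\ge 1 - \delta$, a bound of the form $\sum_t \mu_t - \sum_t Z_t \lesssim \sqrt{M \log(1/\delta)}$, where $M \defeq \sum_t \mu_t = \sum_t \inner{\g}{\xAv}$. Noting that $M \le \lambdamax(\sum_t \g)$ (since $\xAv\in\domain$), and combining this with~\eqref{eq:regret-bound-refined} from Theorem~\ref{thm:av-regret}, the total gap is controlled by $\log(4n)/\eta + 3\eta\, \lambdamax(\sum_t\g) + \sqrt{C\log(1/\delta)\,\lambdamax(\sum_t\g)}$ for a small constant $C$.

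The main technical step, and expected obstacle, is then to absorb the square-root concentration term into the existing $1/\eta$ and $\eta\,\lambdamax$ terms via the weighted AM-GM inequality $\sqrt{ab}\le a/(4\eta) + \eta b$, applied with $a = C\log(1/\delta)$ and $b = \lambdamax(\sum_t \g)$. Tuning the constant in AM-GM so that the additive $\log(1/\delta)/\eta$ contribution combines cleanly with $\log(4n)/\eta$ into $\log(4n/\delta)/\eta$, while the $\eta\,\lambdamax$ contribution adds to the existing $3\eta\,\lambdamax$ to yield $4\eta\,\lambdamax(\sum_t\g)$, delivers exactly the form stated in~\eqref{eq:regret-bound-refined-high-prob}. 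Getting the precise constants to match the statement will require being careful with the constants in the multiplicative Azuma inequality and in the AM-GM split, but no new ideas beyond these standard manipulations are needed.
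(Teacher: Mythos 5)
Your decomposition into the deterministic regret of $\xAv$ (from Theorem~\ref{thm:av-regret}) plus a martingale concentration term for $\sum_t\inner{\g}{\xAv - \x}$ is exactly the paper's decomposition, and your proof of the first bound~\eqref{eq:regret-bound-basic-high-prob} matches the paper's: Azuma--Hoeffding on the bounded martingale difference $\inner{\g}{\x - \xAv}$ gives the $\sqrt{2T\log(1/\delta)}$ term, and you add it to~\eqref{eq:regret-bound-basic}.

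For the refined bound~\eqref{eq:regret-bound-refined-high-prob}, your final answer is right but your proposed path has a subtle logical hole. You claim Lemma G.1 of \citet{AllenLi17} ``yields a bound of the form $\sum_t\mu_t - \sum_t Z_t \lesssim \sqrt{M\log(1/\delta)}$'' and then plan to undo this square-root via AM--GM. But Lemma G.1 only gives the parameterized bound $M - \sum_t Z_t \le \mu M + \log(1/\delta)/\mu$ for each $\mu\in(0,1)$; to extract the square-root form $2\sqrt{M\log(1/\delta)}$ you need the optimizer $\mu^\star = \sqrt{\log(1/\delta)/M}$ to satisfy $\mu^\star \le 1$, i.e.\ $M\ge\log(1/\delta)$, which is not guaranteed. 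So your intermediate claim can fail, and the detour of ``optimize $\mu$, then AM--GM back'' is not only unnecessary but logically unsound in the small-$M$ regime. The fix is simple and is what the paper does: plug $\mu=\eta$ into Lemma G.1 directly, giving $\sum_t\inner{\g}{\x} \ge (1-\eta)\sum_t\inner{\g}{\xAv} - \log(1/\delta)/\eta$ with probability $\ge 1-\delta$; combine this with~\eqref{eq:regret-bound-refined}, which gives $\sum_t\inner{\g}{\xAv} \ge (1-3\eta)\lambdamax(\sum_t\g) - \log(4n)/\eta$, multiply through, and use $(1-\eta)(1-3\eta)\ge 1-4\eta$. This bypasses both the square-root form and the AM--GM step entirely, and produces exactly the constants in the statement without any case analysis.
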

\noindent
We give the proof of Corollary~\ref{cor:high-probability-regret} in 
Appendix~\ref{sec:app-regret-hp}. 

Our development uses Assumption~\ref{ass:bandit-adversary} only through 
its consequence $\E[\x[t] \mid \Filt[t-1]]
= \xAv[t]$. Therefore, our results apply to any adversary that produces 
gains with such martingale structure, a weaker requirement than 
Assumption~\ref{ass:bandit-adversary}.
\subsection{Analyzing the average mirror 
projection}\label{sec:proj-geometry}

In this section we outline the proof of
Proposition~\ref{prop:regret-properties}, which constitutes the core
technical contribution of our paper. Our general strategy is to relate the
average mirror projection to the multiplicative weights projection, which satisfies  a 
version of
Proposition~\ref{prop:regret-properties}. Our principal mathematical tool 
is the theory of convex, twice-differentiable spectral 
functions~\citep{Lewis96,LewisSe01}.

We begin with the vector log-sum-exp, or softmax, function
\begin{equation*}
  \softmax(v) \defeq \log\bigg(\sum_{j = 1}^n e^{v_j}\bigg)
  ~~\mbox{and its gradient} ~~
  \gradsoftmax(v) 
  = \frac{e^v}{\ones^T e^v}, %
\end{equation*}
where we write $e^v$ for $\exp(\cdot)$ applied elementwise to $v$ and 
$\ones$ 
for the all-ones vector. Note that $\gradsoftmax:\R^n\to\simplex[n]$ is 
the mirror projection associated with (vector) multiplicative weights. Let 
$\xM\in\domainDual$ have eigen-decomposition 
$Y=Q\diag(\lambda)Q^T$. The matrix softmax function is
\begin{equation*}
\mapMMW(\xM)\defeq \log \tr e^{\xM} = \softmax(\lambda)
~~\mbox{and} ~~
\MapMMW(\xM) = \grad \mapMMW(\xM) = \frac{e^{\xM}}{\tr e^{\xM}} = 
Q\diag(\gradsoftmax(\lambda))Q^T
\end{equation*}
is the matrix multiplicative weights mirror projection. 

We now connect the function 
$\mapAv(\xM) = \E_u[\log \tr(e^{\xM} uu^T)]$ and 
the projection
$\MapAv(\xM)
= \E_u 
\frac{e^{\xM/2}uu^T e^{\xM/2}}{u^Te^\xM u}$ to their counterparts 
$\mapMMW, \MapMMW$ and $\softmax$.
\begin{lemma}
  \label{lem:w-characterization}
  Let $\xM \in \domainDual$ have eigen-decomposition $\xM= 
  Q\diag(\lambda) Q^T$. Let $w \in \simplex[n]$ be drawn from a
  $\dirichlet$ distribution. Then
  \begin{equation}
    \label{eq:softmax-av-identity}
    \mapAv(\xM) = \E_w\left[\softmax(\lambda + 
    \log w)\right] = \E_w \mapMMW(Y + Q\diag(\log w)Q^T )
  \end{equation}
  where $\log$ is applied elementwise. The function $\mapAv$ is convex 
  and its gradient is
  \begin{equation}
    \label{eq:proj-av-identity}
    \MapAv(\xM) = \grad \mapAv(\xM) = 
     Q \diag\left(\E_w[\grad\softmax(\lambda
    + \log w)]\right) Q^T
    = \E_w \MapMMW(Y+ Q \diag( \log w)Q^T ).
  \end{equation}
\end{lemma}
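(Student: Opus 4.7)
My plan hinges on a change of variables that diagonalizes $\xM$ and turns the uniform measure on the sphere into a Dirichlet distribution on the squared coordinates. For $\xM = Q\diag(\lambda)Q^T$ and $u\sim\uniform(\sphere^{n-1})$, orthogonal invariance of the spherical measure gives $u' \defeq Q^T u \sim \uniform(\sphere^{n-1})$, and the classical fact that the squared coordinates of a uniform unit vector form a $\dirichlet$ vector means $w = ((u'_1)^2,\ldots,(u'_n)^2) \sim \dirichlet$. I would use this to rewrite
\[
u^T e^{\xM} u \;=\; \sum_j e^{\lambda_j}(u'_j)^2 \;=\; \sum_j e^{\lambda_j + \log w_j},
\]
so $\log(u^T e^{\xM} u) = \softmax(\lambda + \log w)$; taking expectation over $u$ (equivalently, $w$) gives $\mapAv(\xM) = \E_w \softmax(\lambda + \log w)$. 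The equivalent matrix form follows because $\xM$ and $Q\diag(\log w)Q^T$ share the eigenbasis $Q$, so their sum has eigenvalues $\lambda + \log w$ and $\mapMMW$ applied to it coincides with $\softmax(\lambda + \log w)$.

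For convexity, the display above shows that $\mapAv(\xM)$ depends on $\xM$ only through $\lambda(\xM)$; that is, $\mapAv(\xM) = f(\lambda(\xM))$ for $f(\lambda) \defeq \E_w \softmax(\lambda + \log w)$. Since $\softmax$ is convex and symmetric under coordinate permutations, and $w$ is exchangeable, $f$ is convex and symmetric. Lewis's theorem~\citep{Lewis96} that the spectral lift of a convex symmetric function is convex then yields the convexity of $\mapAv$.

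For the gradient formula I would proceed in two complementary steps that must agree. First, compute $\MapAv(\xM) = \E_u[\Map(\xM)]$ directly in the eigenbasis of $\xM$. Using the invariance of the distribution of $u'$ under each coordinate sign flip, all off-diagonal entries of $e^{\xM/2}uu^T e^{\xM/2}/(u^T e^\xM u)$ vanish in expectation, while the diagonal entries collapse to $e^{\lambda_i}w_i/\sum_k e^{\lambda_k} w_k = [\gradsoftmax(\lambda + \log w)]_i$. This gives $\MapAv(\xM) = Q\diag(\E_w \gradsoftmax(\lambda+\log w)) Q^T = \E_w \MapMMW(\xM + Q\diag(\log w) Q^T)$, which is the rightmost identity in \eqref{eq:proj-av-identity}. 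Second, the Lewis-Sendov chain rule for spectral functions gives $\grad(f\circ\lambda)(\xM) = Q\diag(\grad f(\lambda))Q^T$; since $\gradsoftmax$ lives in the probability simplex it is uniformly bounded, so dominated convergence lets me interchange $\grad$ and $\E_w$ to produce the same expression. Matching the two identifies $\MapAv = \grad\mapAv$.

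The main technical care will be in justifying the Lewis-Sendov chain rule at points where eigenvalues of $\xM$ coincide (which is legitimate precisely because $f$ is permutation-symmetric), and in checking the integrability needed to differentiate under the expectation despite the unboundedness of $\log w$; both are routine given that $\gradsoftmax$ is bounded on the simplex and $\softmax$ is $1$-Lipschitz in $\linf{\cdot}$.
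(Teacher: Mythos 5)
Your proposal is correct and takes essentially the same route as the paper: the same rotational-invariance change of variables to reduce to the Dirichlet distribution, the same appeal to Lewis's spectral-function convexity and gradient results, and the same direct sign-symmetry computation showing that $\E_u[\Map(\xM)]$ is diagonal in the eigenbasis with entries $\E_w\grad\softmax(\lambda+\log w)$. The only cosmetic difference is that you verify the two expressions for $\grad\mapAv$ in the opposite order and are a bit more explicit about the (routine) interchange of gradient and expectation, which the paper leaves implicit.
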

\begin{proof}
  Let $u$ be uniformly distributed over the unit sphere in $\R^n$ and note 
  that $u$ and $Q^T u$ are identically distributed. Therefore, for $\Lambda 
  = \diag(\lambda)$,
  \begin{equation*}
    \mapAv(\xM) = \E_u \log \left( u^T e^\xM u\right )
    = \E_u \log \left( (Q^Tu)^T e^\Lambda (Q^Tu)\right )
    = \E_u \log \left( u^T e^\Lambda u\right )
    = \mapAv(\Lambda).
  \end{equation*}
  Further,  a vector $w$ with coordinates\footnote{The letter $w$ 
  naturally denotes a vector of `weights' in the simplex. Here, it is also 
  double-$u$.} $w_i = u_i^2$ 
  has a 
  $\dirichlet$ distribution.
  Hence,
  \begin{equation*}
    \mapAv(\Lambda) = \E_u \log \Bigg( \sum_{i=1}^n u_i^2 e^{\lambda_i} 
    \Bigg)
    = \E_w \log \Bigg( \sum_{i=1}^n e^{\lambda_i + \log w_i} \Bigg)
    = \E_w \softmax(\lambda + \log w),
  \end{equation*}
  establishing the identity~\eqref{eq:softmax-av-identity}.
  
  Evidently, $\mapAv(\xM)$ is a spectral function---a 
  permutation-invariant function of the eigenvalues of $\xM$. Moreover, 
  since $\softmax$ is convex, $\lambda\mapsto\E_w \softmax(\lambda + 
  \log w)$ is also convex, and~\citet[][Corollary 2.4]{Lewis96} shows that 
  $\mapAv$ is convex. Moreover,~\citet[][Corollary 3.2]{Lewis96} gives
  \begin{flalign*}
    \grad\mapAv(\xM)
    = Q \diag(\grad \E_w[\softmax(\lambda + \log w)]) Q^T
    = \E_w\MapMMW(Y+Q\log(w)Q^T).
  \end{flalign*}
  It remains to show that $\MapAv(\xM) = \grad\mapAv(\xM)$.
  Here we again use the rotational symmetry 
  of $u$ to write
  \begin{equation*}
    \MapAv(\xM) = \E_u  \frac{e^{\xM/2} uu^T e^{\xM/2}}{u^T 
      e^\xM u}
    =Q \, \left( \E_u  \frac{e^{\Lambda/2} (Q^Tu)(Q^Tu)^T 
      e^{\Lambda/2}}{(Q^Tu)^T 
      e^\Lambda (Q^Tu)}\right ) Q^T = Q\,\MapAv(\Lambda)Q^T.
  \end{equation*}
  Moreover,
  \begin{equation*}
    \MapAv(\Lambda)_{ij} = \E_u 
    \frac{u_i u_je^{(\lambda_i+\lambda_j)/2}}{\sum_{k=1}^n u_k^2 
      e^{\lambda_k}} 
    \stackrel{(\star)}{=} \E_u 
    \frac{u_i^2 e^{\lambda_i}\indic{i=j}}{\sum_{k=1}^n u_k^2 
      e^{\lambda_k}} 
    = \E_w \grad_i \softmax(\lambda + \log w)
    \indic{i = j}
  \end{equation*}
  where the equality $(\star)$ above follows because $u_i$ has a
  symmetric distribution, even conditional on $u_j, j \neq i$,
  so $\E \left[ u_i u_j \mid u_1^2, \ldots, 
    u_n^2, u_j \right] = 0$ for $i\ne j$. 
\end{proof}

Lemma~\ref{lem:w-characterization} is all we need in order to prove 
parts~\ref{item:diameter} and~\ref{item:invert} of 
Proposition~\ref{prop:regret-properties}.

\begin{proof}\textbf{(Proposition~\ref{prop:regret-properties}, parts
    \ref{item:diameter} and \ref{item:invert})}
  We first observe the following simple lower bound on $\mapAv$, 
  immediate 
  from identity~\eqref{eq:softmax-av-identity} in 
  Lemma~\ref{lem:w-characterization},
  \begin{equation}\label{eq:map-av-bound}
    \mapAv(\xM) = \E_w \log\Big(\sum_{i=1}^n e^{\lambda_i(\xM) 
      + \log w_i}\Big)
    \ge \lambdamax(\xM) + E_{w_1} \log w_1
    \ge \lambdamax(\xM) - \log (4n),
  \end{equation}
  where $ \E_{w_1} \log w_1 \ge - \log (4n)$ comes from
  noting that $w_1 \sim \betadist(\frac{1}{2},\frac{n-1}{2})$ (see
  Lemma~\ref{lem:beta-log-expectation} in
  Appendix~\ref{sec:app-beta-facts}). 
  For matrices 
  $\xM\in\domainDual$ and $\x[]\in\domain$,
  \begin{equation*}
  \inner{\xM}{\x[]} = \inner{\xM-\lambdamin(\xM)I}{\x[]} + 
  \lambdamin(\xM) \tr \x[]
  \le 
  \linf{\xM-\lambdamin(\xM)I}\lone{{\x[]}} + \lambdamin(\xM) \tr \x[]
  = \lambdamax(\xM),
  \end{equation*}
  where the final equality is due to $\linf{\xM-\lambdamin(\xM)I} = 
  \lambdamax(\xM)-\lambdamin(\xM)$ for every $\xM\in\domainDual$ 
  and $\lone{\x[]} = \tr \x[] = 1$ for every $\x[] \in\domain$.
  Combining this bound with~\eqref{eq:map-av-bound}, we have that
   \begin{equation}\label{eq:dgf-av-bound}
  \inner{\xM}{\x[]} - \mapAv(\xM) \le \log(4n)
  \end{equation}
  for every $\xM\in\domainDual$ and $\x[]\in\domain$. 
  Part~\ref{item:diameter} follows since
  \begin{equation*}
    \breg{\xM}{0}-\breg{\xM}{\xM'}
    =
    \mapAv(0)+ \inner{\xM'}{\MapAv(\xM)}-\mapAv(\xM')
    \le \mapAv(0) + \log (4n) = \log (4n),
  \end{equation*}
  where we used the bound~\eqref{eq:dgf-av-bound} with 
  $\x[]=\MapAv(\xM)$ and the fact that $\mapAv(0)=\E_w \log(\ones^T w)=0$.

	To show Part~\ref{item:invert}, let $\mapConj(\x[]) \defeq \sup_{\xM
          \in \domainDual}\{\inner{\xM}{\x[]} - \mapAv(\xM)\}$ be the convex
        conjugate of $\mapAv$. Eq.~\eqref{eq:dgf-av-bound} implies that
        $\mapConj(\x[]) < \infty$ for all $\x[]\in\domain$, and therefore
        $\interior \domain \subseteq \interior \;\dom \mapConj$.  Every 
        convex
        function has nonempty subdifferential on the relative interior of
        its domain~\citep[Theorem~X.1.4.2]{HiriartUrrutyLe93ab},
        and thus for   
        $X\in \interior \domain$ there exists $\xM \in \partial 
	\mapConj(\x[])$. By definition of $\mapConj$, any such $\xM$ 
	satisfies $\x[] = \grad\mapAv(\xM) = \MapAv(\xM)$, as required.
\end{proof}

Proving parts~\ref{item:smoothness} and~\ref{item:refined-smoothness}
requires second order information on $\mapAv$. For twice differentiable 
function $f$, we denote $\hess f(A)[B,B] = \frac{\partial^2}{\partial t^2}
f(A + tB)|_{t = 0}$. It is easy to verify that, for every 
$\lambda,\delta\in\R^n$,  $$\delta^T \hess \softmax(\lambda) \delta = 
\hess 
\softmax(\lambda)[\delta,\delta] \le (\delta^2)^T 
\gradsoftmax(\lambda),$$
where $[\delta^2]_i = \delta_i^2$; this concisely captures the pertinent  
second order structure of the multiplicative weights mirror projection. 
\citet{Nesterov07b} 
shows that this property extends to the matrix case.

\begin{restatable}{lem}{restateMmwSpectralBound}
  \label{lem:mmw-spectral-bound}
  For any $\xM, \dM\in\domainDual$, $\hess \mapMMW(\xM)[\dM, 
    \dM] 
    \le 
    \inner{\dM^2}{ 
    \MapMMW(\xM)}$.
\end{restatable}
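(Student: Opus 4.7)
The plan is to compute the Hessian of $\mapMMW$ explicitly via the Duhamel expansion of the matrix exponential, reduce the resulting trace integral to a weighted sum over eigenvalue pairs of $Y$, and then bound the eigenvalue-pair kernel by the Hermite--Hadamard inequality.

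First I would set $f(t) \defeq \mapMMW(Y + tD) = \log \tr e^{Y+tD}$ and compute $f''(0)$. Using $\frac{d}{dt}e^{Y+tD} = \int_0^1 e^{s(Y+tD)} D e^{(1-s)(Y+tD)} ds$ together with the cyclicity of trace, I get
\begin{equation*}
f'(t) = \frac{\tr(D e^{Y+tD})}{\tr e^{Y+tD}},
\quad
f''(0) = \frac{\int_0^1 \tr(D e^{sY} D e^{(1-s)Y})\,ds}{\tr e^Y} - \left(\frac{\tr(D e^Y)}{\tr e^Y}\right)^2.
\end{equation*}
Since the squared term is nonnegative, I drop it to obtain $\hess\mapMMW(Y)[D,D] \le \frac{1}{\tr e^Y}\int_0^1 \tr(D e^{sY} D e^{(1-s)Y})\,ds$.

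Next I would diagonalize $Y = Q\diag(\lambda)Q^T$ and set $\tilde D \defeq Q^T D Q$. A direct computation in the eigenbasis yields
\begin{equation*}
\int_0^1 \tr(D e^{sY} D e^{(1-s)Y})\,ds
= \sum_{i,j} \tilde D_{ij}^2 \int_0^1 e^{s\lambda_i + (1-s)\lambda_j}\,ds
= \sum_{i,j} \tilde D_{ij}^2 \cdot m(\lambda_i,\lambda_j),
\end{equation*}
where $m(a,b) \defeq \frac{1}{b-a}\int_a^b e^x\,dx = \frac{e^a - e^b}{a-b}$ (extended by continuity when $a=b$). Similarly, $\tr(D^2 e^Y) = \sum_{i,j}\tilde D_{ij}^2 e^{\lambda_i}$.

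To close the argument I would invoke the Hermite--Hadamard inequality applied to the convex function $e^x$, which gives $m(a,b) \le \tfrac12(e^a + e^b)$ for all $a,b\in\R$. Symmetrizing the sum in $i,j$ (using $\tilde D_{ij}^2 = \tilde D_{ji}^2$), this produces $\sum_{i,j}\tilde D_{ij}^2 m(\lambda_i,\lambda_j) \le \sum_{i,j}\tilde D_{ij}^2 \cdot \tfrac12(e^{\lambda_i}+e^{\lambda_j}) = \sum_{i,j}\tilde D_{ij}^2 e^{\lambda_i} = \tr(D^2 e^Y)$. Dividing by $\tr e^Y$ yields the claim. The main potential obstacle is the first step: justifying the interchange of $\frac{d}{dt}$ with the Duhamel integral and the trace, and handling the $\lambda_i = \lambda_j$ coincidence cleanly; both are standard but warrant care. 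Everything after the diagonalization is a one-line application of convexity of $e^x$.
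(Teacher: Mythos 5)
Your proof is correct, and it takes a genuinely different route from the paper's. The paper writes $\grad\inner{\dM}{e^\xM}$ via the Taylor series of the exponential, so that the Hessian is a sum of terms $\inner{\dM}{\xM^i\dM\xM^{k-1-i}}$, and then invokes Lemma~1 of \citet{Nesterov07b}, which asserts $\inner{\dM}{\xM^i\dM\xM^{k-1-i}+\xM^{k-1-i}\dM\xM^i}\le 2\inner{\dM^2}{\xM^{k-1}}$ for $\xM\succeq0$; since that lemma requires positive semidefiniteness, the paper additionally observes that $\hess\mapMMW$ is invariant to scalar shifts $\xM\mapsto\xM+cI$ to cover general symmetric $\xM$. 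You instead package the power series into Duhamel's integral formula, diagonalize, and recognize the off-diagonal Hessian kernel as the divided difference $m(\lambda_i,\lambda_j)=\frac{e^{\lambda_i}-e^{\lambda_j}}{\lambda_i-\lambda_j}=\frac{1}{\lambda_i-\lambda_j}\int_{\lambda_j}^{\lambda_i}e^x\,dx$, which Hermite--Hadamard bounds by $\frac12(e^{\lambda_i}+e^{\lambda_j})$. (This kernel is exactly the Daleckii--Krein/first divided difference that also appears in the paper's Lewis--Sendov spectral Hessian calculations in Appendix~\ref{sec:app-main-spec-bound-proof}.) Your route is more self-contained in two respects: the convexity of $\exp$ is invoked once as a clean integral inequality rather than term by term in a power series, and it applies directly to all symmetric $\xM$ without the shift argument. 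The subtlety you flag---justifying the interchange of $\frac{d}{dt}$, the Duhamel integral, and the trace---is standard: the integrand is jointly continuous and entire in $t$, and everything lives on the compact domain $s\in[0,1]$, so dominated convergence applies without further ado.
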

\noindent
In Appendix~\ref{sec:app-mmw-spec-bound-proof} we explain how to find 
this result in~\citet{Nesterov07b}, as 
it is not explicit there. 
In view of Lemma~\ref{lem:w-characterization}, it is natural to hope that 
$\hess \mapAv$ and $\hess \mapMMW$ are also related via simple 
expectation. Unfortunately, this fails; 
we can, however, derive a bound.

\begin{restatable}{lem}{restateMainSpecBound}\label{lem:main-spec-bound}
  For any $\xM, \dM\in\domainDual$, orthogonal eigenbasis $Q$ 
  for 
  $Y$, and $w\sim\dirichlet$,
   \begin{flalign}
  \label{eq:main-spec-bound1}
  \hess \mapAv (\xM)[\dM, \dM] &\le 
  3 \cdot  \E_w \hess 
  \mapMMW(\xM + 
  Q\,\diag(\log w)Q^T)[\dM, \dM]
  	\\ & \le
  	3 \inner{\dM^2}{\MapAv(\xM)}.
  	\label{eq:main-spec-bound2}
  \end{flalign}
\end{restatable}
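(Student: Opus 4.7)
My plan is to compute both Hessians in the common eigenbasis of $\xM$ and $\xM + Q\diag(\log w)Q^T$ via Lewis's Hessian formula for spectral functions, then compare term-by-term. Fix $\xM$ with eigendecomposition $\xM = Q\diag(\lambda)Q^T$ and set $\tilde\dM \defeq Q^T\dM Q$, $\tilde\lambda \defeq \lambda + \log w$. The key structural point is that $\xM + Q\diag(\log w)Q^T = Q\diag(\tilde\lambda)Q^T$ shares $\xM$'s eigenbasis. Lewis's formula applied to the spectral function $\mapAv(\xM) = \phi(\lambda(\xM))$ with $\phi(\lambda)\defeq\E_w\softmax(\lambda+\log w)$ (from Lemma~\ref{lem:w-characterization}) gives
$$\hess\mapAv(\xM)[\dM,\dM] = \sum_{i,j}\tfrac{\partial^2\phi(\lambda)}{\partial\lambda_i\partial\lambda_j}\tilde\dM_{ii}\tilde\dM_{jj} + 2\sum_{i<j}\tfrac{[\grad\phi(\lambda)]_i - [\grad\phi(\lambda)]_j}{\lambda_i - \lambda_j}\tilde\dM_{ij}^2,$$
while applying it to $\mapMMW$ at the shifted point and averaging over $w$ gives
$$\E_w\hess\mapMMW(\xM+Q\diag(\log w)Q^T)[\dM,\dM] = \sum_{i,j}\tfrac{\partial^2\phi(\lambda)}{\partial\lambda_i\partial\lambda_j}\tilde\dM_{ii}\tilde\dM_{jj} + 2\sum_{i<j}\E_w\!\left[\tfrac{[\gradsoftmax(\tilde\lambda)]_i - [\gradsoftmax(\tilde\lambda)]_j}{\tilde\lambda_i-\tilde\lambda_j}\right]\tilde\dM_{ij}^2;$$
the two diagonal blocks coincide because differentiation in $\lambda$ commutes with $\E_w$.

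\textbf{Second inequality.} The bound~\eqref{eq:main-spec-bound2} follows directly from Lemma~\ref{lem:mmw-spectral-bound} applied pointwise under $\E_w$, together with identity~\eqref{eq:proj-av-identity}:
$$\E_w\hess\mapMMW(\xM+Q\diag(\log w)Q^T)[\dM,\dM] \le \E_w\inner{\dM^2}{\MapMMW(\xM+Q\diag(\log w)Q^T)} = \inner{\dM^2}{\MapAv(\xM)},$$
and multiplying by $3$ gives~\eqref{eq:main-spec-bound2}.

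\textbf{First inequality and main obstacle.} Since Lewis's formula decouples the Hessian block-diagonally in the entries of $\tilde\dM$ (the $\tilde\dM_{ii}$'s form one block, and each $\tilde\dM_{ij}^2$ appears separately), establishing $3\E_w\hess\mapMMW - \hess\mapAv \succeq 0$ as a quadratic form splits into two sub-claims: on the diagonal block, $3\hess\phi(\lambda) - \hess\phi(\lambda) = 2\hess\phi(\lambda) \succeq 0$ by convexity of $\phi$ (Lemma~\ref{lem:w-characterization}); and, for each pair $i\ne j$, the scalar inequality
$$\frac{[\grad\phi(\lambda)]_i - [\grad\phi(\lambda)]_j}{\lambda_i - \lambda_j} \;\le\; 3\,\E_w\!\left[\frac{[\gradsoftmax(\tilde\lambda)]_i - [\gradsoftmax(\tilde\lambda)]_j}{\tilde\lambda_i - \tilde\lambda_j}\right].$$
This scalar inequality is the main obstacle. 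Using $[\grad\phi(\lambda)]_k = \E_w[\gradsoftmax(\tilde\lambda)]_k$, both sides are expectations of the same numerator $[\gradsoftmax(\tilde\lambda)]_i - [\gradsoftmax(\tilde\lambda)]_j = (w_ie^{\lambda_i}-w_je^{\lambda_j})/\sum_k w_ke^{\lambda_k}$ but with different denominators ($\Delta\defeq\lambda_i-\lambda_j$ on the left, $\tilde\lambda_i-\tilde\lambda_j = \Delta + \log(w_i/w_j)$ on the right). The RHS integrand is always nonnegative (numerator and denominator share a sign), whereas the LHS integrand changes sign with $\log(w_i/w_j)$. I would attack it by conditioning on $(w_k)_{k\ne i,j}$: then $(w_i,w_j) = (w_i+w_j)\cdot(t,1-t)$ with $t\sim\betadist(1/2,1/2)$ independent of the conditioning, reducing the comparison to a one-dimensional integral in $t$. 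The substitution $u = \log(t/(1-t))$ turns the Beta$(1/2,1/2)$ density into $1/(2\pi\cosh(u/2))$ on $\R$, and both sides become integrals of $\tanh$-type kernels shifted by $\Delta$. The factor $3$ must absorb the contributions from the region $\log(w_i/w_j) < -\Delta$, where the LHS integrand is negative while the RHS integrand remains positive; I expect the bound to follow from exploiting the symmetry $u\leftrightarrow -u$ of the Beta density and carefully bounding the integrands in the regimes $|u|$ comparable to, versus much larger than, $\Delta$.
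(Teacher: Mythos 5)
Your setup and reductions are correct and coincide with the paper's: the Lewis--Sendov Hessian formula, the observation that $\xM$ and $\xM + Q\diag(\log w)Q^T$ share an eigenbasis so the two Hessian quadratic forms decompose compatibly, the fact that the diagonal blocks coincide after averaging (so $3$ copies trivially dominate one), and the derivation of~\eqref{eq:main-spec-bound2} from Lemma~\ref{lem:mmw-spectral-bound} and identity~\eqref{eq:proj-av-identity}. Your reduction of~\eqref{eq:main-spec-bound1} to the scalar inequality
\begin{equation*}
\frac{\grad_i \smapAv(\lambda) - \grad_j \smapAv(\lambda)}{\lambda_i-\lambda_j}
\;\le\;
3\,\E_w\!\left[\frac{\grad_i \smapMMW(\lambda+\log w) - \grad_j \smapMMW(\lambda+\log w)}{(\lambda_i+\log w_i)-(\lambda_j+\log w_j)}\right]
\end{equation*}
and your conditioning observation (that $\rho=\tfrac12\log(w_i/w_j)$ is logit-$\betadist(\tfrac12,\tfrac12)$ independent of $\{w_k\}_{k\ne i,j}$) are also correct and are the opening moves of the paper's argument.

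However, the proof stops precisely where the real work begins. You describe the scalar inequality as a plan of attack (``I would attack it by\dots'', ``I expect the bound to follow from\dots'') rather than proving it, and the paper's treatment of this step occupies essentially all of Appendix~\ref{sec:app-main-spec-bound-proof}. Getting the constant $3$ requires four nontrivial ingredients absent from your sketch: (i) a \emph{pointwise} bound on the symmetrized sum $A_{ij}^w(\lambda) + A_{ij}^{w'}(\lambda)$ (where $w'$ swaps $w_i,w_j$) against $A^{\ones}_{ij}(\lambda+\log w) + A^{\ones}_{ij}(\lambda+\log w')$ with multiplier $1 + |\rho|\tanh(\delta)/\delta$, proved by careful $\sinh$/$\cosh$ manipulations (Lemma~\ref{lem:main-spec-bound-aux}); (ii) piecewise monotonicity of $\rho\mapsto A^{\ones}_{ij}(\lambda+\log w)$ on $\{\rho>\delta\}$ and on $\{\rho<-\delta\}$ with the other $w_k$ fixed (Lemma~\ref{lem:Ammw-mono}); (iii) a negative-correlation inequality to decouple $|\rho|$ from $A^{\ones}_{ij}$ on those tail events (Lemma~\ref{lem:mono-neg-corr}); and (iv) a Beta-logit conditional-expectation bound $\E[\,|\rho| \mid |\rho|>\delta\,]\le\delta+\sqrt{1+e^{-2\delta}}$ (Lemma~\ref{lem:beta-logit-cond-exp}). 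These combine to show $\tanh(\delta) + \sqrt{1+e^{-2\delta}}\,\tanh(\delta)/\delta\le 2$ uniformly in $\delta\ge 0$, hence the overall factor $1+2=3$. Your sketch correctly anticipates the symmetrization and the $|\rho|\le\delta$ versus $|\rho|>\delta$ split, but ``carefully bounding the integrands'' is carrying all the weight: without the pointwise bound and the monotonicity/tail machinery it is not established that any finite constant suffices, let alone $3$. That is the genuine gap.
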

\arxiv{
Our proof of Lemma~\ref{lem:main-spec-bound} is technical; we sketch it 
here briefly and give it in full 
Appendix~\ref{sec:app-main-spec-bound-proof}. The key ingredient in the 
proof is a formula for the Hessian of spectral functions~\citep{LewisSe01}. 
Using the spectral 
characterization~\eqref{eq:softmax-av-identity}, the formula gives that
\begin{equation*}
\hess \mapAv(\xM)[\dM, \dM] = 
\diag(\tilde\dM)^T \big[ \E_w\hess \softmax(\lambda + \log w) \big]
\diag(\tilde\dM)
+ 
\inner{\E_w A^w(\lambda)}{\tilde{\dM} \circ \tilde{\dM}}.
\end{equation*}
where $\tilde{D}=Q^T D Q$, $\diag(\tilde{D})\in\R^n$ is the vector 
containing the 
diagonal entries of $\tilde{D}$, $A\circ B$ denotes elementwise 
multiplication of $A$ and $B$,  
and $A_{ij}^w(\lambda) \defeq \frac{\grad_i \softmax(\lambda+\log(w)) - 
	\grad_j \softmax(\lambda+\log(w))}{\lambda_i -\lambda_j}\indic{i\ne 
	j}$. 
With the shorthand $\xM_{\{w\}}\defeq  \xM + Q\diag(\log w)Q^T$, we  
use the formula of~\citet{LewisSe01} again to express 
$\hess\mapMMW(\xM_{\{w\}})$ as 
\begin{equation*}
\hess \mapMMW(\xM_{\{w\}})[\dM, \dM] = 
\diag(\tilde\dM)^T \big[\hess \softmax(\lambda + \log w) \big]
\diag(\tilde\dM)
+ 
\inner{ A^\ones(\lambda+\log w)}{\tilde{\dM} \circ \tilde{\dM}},
\end{equation*}
where $A^{\ones}=A^{\tilde{w}}$ evaluated at $\tilde{w} = 
\ones$.
The bulk of the proof is dedicated to establishing the entry-wise bounds 
\begin{equation*}
\E_w A_{ij}^w(\lambda) \le 
\E_w \left[\left( 1+ 
\frac{\tanh\big(\frac{\lambda_i-\lambda_j}{2}\big)\big| \log  
	\frac{w_i}{w_j} \big|}{\lambda_i - \lambda_j}
\right)
A^{\ones}_{ij}(\lambda + \log w)\right]
\le 3\cdot \E_w A^{\ones}_{ij}(\lambda + \log w).
\end{equation*}
The first inequality follows from pointwise analysis of a symmetrized 
version of $A_{ij}^{w}$. The second inequality follows from piecewise 
monotonicity of $A_{ij}^{w}$ as a function of 
$\log\frac{w_i}{w_j}\sim\mathrm{logit}\,\mathrm{Beta}(\half, \half)$, 
combined with tight exponential tail bounds for the latter. Substituting the 
bound on $\E_w A_{ij}^w(\lambda)$ into the expression for $\hess 
\mapAv(\xM)$ and comparing with $\E_w \hess 
\mapMMW(\xM_{\{w\}})$ yields the desired 
result~\eqref{eq:main-spec-bound1}. 
Applying Lemma~\ref{lem:mmw-spectral-bound} and recalling the 
identity~\eqref{eq:proj-av-identity} yields
\begin{equation*}
\E_w \hess 
\mapMMW(\xM + 
Q\,\diag(\log w)Q^T)[\dM, \dM]
\le \inner{\dM^2}{\E_w \MapMMW(Y+Q\diag(\log w)Q^T)}
=  \inner{\dM^2}{\MapAv(Y)},
\end{equation*}
establishing the final bound~\eqref{eq:main-spec-bound2}.

The bound~\eqref{eq:main-spec-bound2} gives the remaining parts of 
Proposition~\ref{prop:regret-properties}.

\begin{proof}
\textbf{(Proposition~\ref{prop:regret-properties}, parts
		\ref{item:smoothness} and \ref{item:refined-smoothness})}
	Fix $\xM,\dM\in\domainDual$ and let $p(t)\defeq\mapAv(\xM+t\dM)$. 
	The 
	Bregman divergence~\eqref{eq:bregman-def} admits the integral 
	form 
	\begin{flalign}\label{eq:bregman-hess-form}
	\breg{\xM}{\xM + \dM} 
	&= p(1)-p(0)-p'(0) 
	= \int_{0}^1 {(p'(t)-p'(0))}dt 
	= \int_{0}^1 \int_{0}^t {p''(\tau)} d\tau dt 
	\nonumber \\ &
	=
	\int_{0}^{1}\int_{0}^{t}
	{\hess \mapAv(\xM + \tau \dM)[\dM, \dM]} d\tau dt.
	\end{flalign}
	Note that since $\MapAv(\xM)\in\domain$ for every 
	$\xM\in\domainDual$, $\inner{\dM^2}{\MapAv(\xM)} \le 
	\linfs{\dM^2} 
	\lones{\MapAv(\xM)} = \linfs{\dM}^2$. Therefore, the 
	bound~\eqref{eq:main-spec-bound2}  gives
	\begin{equation*}
	\hess \mapAv(\xM + \tau \dM)[\dM, \dM] \le 
	3\linf{\dM}^2.
	\end{equation*}
	Substituting back into~\eqref{eq:bregman-hess-form} and using 
	$\int_0^1\int_0^t d\tau dt = \half$ gives 
	Proposition~\ref{prop:regret-properties}.\ref{item:smoothness}.

	When $\dM \succeq 0$, we have
	\begin{equation*}
	\inner{\dM^2}{\MapAv(\xM)} =  
	\inner{\dM}{\dM^{1/2}\MapAv(\xM)\dM^{1/2}} \le  \linfs{\dM} 
	\lones{\dM^{1/2}\MapAv(\xM)\dM^{1/2}} = \linfs{\dM} 
	\inner{\dM}{\grad \mapAv(\xM)}.
	\end{equation*}
	Plugging the bound above into the bound~\eqref{eq:main-spec-bound2} 
	and substituting back into~\eqref{eq:bregman-hess-form} gives
	\begin{equation}\label{eq:breg-pos-shift-intermediate-bound}
	\breg{\xM}{\xM + \dM} \le 3\linf{\dM} 
	\int_{0}^{1}\int_{0}^{t}
	{\inner{\dM}{\grad \mapAv(\xM + \tau \dM)}} d\tau dt.
	\end{equation}
	Moreover,
	\begin{flalign}
	\label{eq:breg-inner-integral}
	\int_{0}^{t}
	{\inner{\dM}{\grad \mapAv(\xM + \tau \dM)}} d\tau
	&=
	\int_{0}^{t}
	{p'(\tau)} d\tau
	= p(t)-p(0) 
	= \breg{\xM}{\xM + t\dM} + \inner{t\dM}{\MapAv(\xM)},
	\end{flalign}
	where the final equality uses the definition~\eqref{eq:bregman-def} of 
	the Bregman divergence. Note also that $v(t) \defeq 
	\breg{\xM}{\xM + 
		t\dM}$ is increasing for $t\ge0$ due to convexity of $\mapAv$; 
	$t v'(t) = 
	\inner{t \dM}{\grad\mapAv(\xM + t\dM) - \grad\mapAv(\xM)} \ge 0$. 
	Therefore, 
	the equality~\eqref{eq:breg-inner-integral} implies $\int_{0}^{t}
	{\inner{\dM}{\grad \mapAv(\xM + \tau \dM)}} d\tau \le 
	\breg{\xM}{\xM 
		+ 
		\dM} + t\cdot \inner{\dM}{\MapAv(\xM)}$ for every $0\le t\le 1$. 
	Substituting this back 
	into~\eqref{eq:breg-pos-shift-intermediate-bound} 
	and rearranging gives
	\begin{equation*}
	\Big(1 - 3\linf{\dM}  \Big)
	\breg{\xM}{\xM 
		+ 
		\dM}
	\le \frac{3}{2}\linf{\dM}  
	\inner{\dM}{\MapAv(\xM)}.
	\end{equation*}
	establishing part~\ref{item:refined-smoothness} of the proposition, as 
	$1 - 
	3\linf{\dM}\ge \half$ by assumption.
\end{proof}
}
\notarxiv{
Our proof of Lemma~\ref{lem:main-spec-bound} is technical; we give it in 
Appendix~\ref{sec:app-main-spec-bound-proof}. The key ingredient 
in the proof is a formula for the Hessian of spectral
functions~\citep{LewisSe01}. The remainder of 
Proposition~\ref{prop:regret-properties} follow from the 
bound~\eqref{eq:main-spec-bound2} via $\breg{\xM}{\xM + 
\dM} = \int_{0}^{1}\int_{0}^{t}{\hess \mapAv(\xM + \tau \dM)[\dM, \dM]} 
d\tau dt$; we give the details in \Cref{sec:app-prop11-proof}.
}%

\section{Efficient computation of matrix exponential-vector  products}
\label{sec:expvec-compute}
The main burden in computing the randomized mirror 
projections~\eqref{eq:rand-da} lies in computing $e^{A}b$ for 
$A\in\domainDual$ and $b\in\R^n$. Matrix exponential-vector products 
have widespread use in solutions of differential 
equations~\citep[cf.][]{Saad92,HochbruckOs10}, and also appear as core 
components in a number of theoretical algorithms~\citep{AroraKa07, 
OrecchiaSaVi12, JambulapatiKiSi18}. Following a large body of literature 
\citep[cf.][]{MolerLo03}, we  
approximate $e^{A}b$ via the classic Lanczos method \citep{Lanczos50}, an 
iterative process for 
computing $f(A)b$ for general real functions $f$ applied to matrix $A$. 
The Lanczos approximation enjoys strong convergence guarantees upon 
which we base our analysis~\citep{SachdevaVi14}. It is also eminently 
practical: 
the 
only 
tunable parameter is the number of iterations, and each iteration accesses 
$A$ via a single matrix-vector product.

Let $\lanczos(A, b)$ be the result of $k$ iterations of the Lanczos 
method for approximating $e^A b$. We provide a precise description of the 
method in Appendix~\ref{sec:app-expvec}. Let
\begin{equation}\label{eq:lanczos-proj-def}
\xApp[t;k] = \MapApp[u_t;k]\Bigg(\eta\sum_{i=1}^{t-1}\g[i]\Bigg),
~~\mbox{where}~~
\MapApp(\xM) = 
\frac{vv^T}{v^T v}
~~\mbox{for}~~
v=\lanczos(\xM/2, u)
\end{equation}
denote the \emph{approximate} randomized mirror projection. 
Using the Lanczos method to compute full eigen-decompositions 
has well-documented 
numerical 
stability issues \citep{MeurantGe06}. In contrast, the 
approximation~\eqref{eq:lanczos-proj-def} appears to be numerically 
stable. To provide a theoretical basis for this observation, we exhibit error 
bounds under finite floating point precision, leveraging the 
results 
of~\cite{MuscoMuSi18}, 
 which in turn build on \citet{%
	DruskinKn91,DurskinKn95}. To account for computational cost, we 
	denote by $\matvec(\xM)$  the cost of multiplying matrix $\xM$ by any 
	vector.

\begin{proposition}
	\label{prop:expvec-main}	
	Let $\epsilon, \delta \in (0,1)$ and $\xM \in \domainDual$, and set 
	$M \defeq \max\{\opnorm{A} , \log(\frac{n}{\epsilon\delta}), 1\}$. Let 
	$u$ be uniformly 
	distributed on the unit sphere in $\R^n$ and independent of $\xM$.  
	If the number of 
	Lanczos iterations $k$ satisfies $k \ge \Theta(1) \sqrt{M 
	\log(\frac{nM}{\epsilon \delta})}$ then the 
	approximation~\eqref{eq:lanczos-proj-def} satisfies
	\begin{equation*}
	\lones{\Map(\xM)-\MapApp(\xM)} \le \epsilon
	~\mbox{with probability}~
	\ge 1-\delta
	~\mbox{over}~u\sim \uniform(\sphere^{n-1})
	\end{equation*}
	when implemented using floating point operations with 
	$B = \Theta(1)\log\frac{nM}{\epsilon\delta}$ bits of precision. The 
	time to 
	compute $\MapApp(\xM)$ is 
	$O(\matvec(\xM)k + k^2 B)$.
\end{proposition}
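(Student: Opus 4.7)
The plan is to reduce the nuclear-norm error between the rank-1 matrices $\Map(\xM) = vv^T/(v^Tv)$ and $\MapApp(\xM) = \tilde v \tilde v^T/(\tilde v^T \tilde v)$---where $v = e^{\xM/2} u$ and $\tilde v = \lanczos(\xM/2, u)$---to a bound on $\|v - \tilde v\|_2/\|v\|_2$, then separately control the numerator via Lanczos convergence and the denominator via anti-concentration of $u$ on the sphere. The reduction is a short linear-algebraic calculation: the difference of two rank-1 trace-1 orthogonal projectors is rank-$\le 2$ and traceless, with eigenvalues $\pm\sin\theta$ where $\theta = \angle(v, \tilde v)$, so
\[
\lones{\Map(\xM) - \MapApp(\xM)} \;=\; 2|\sin\theta| \;\le\; 2 \bigl\| v/\|v\|_2 - \tilde v/\|\tilde v\|_2 \bigr\|_2 \;\le\; \frac{4\|v - \tilde v\|_2}{\|v\|_2},
\]
where the last inequality uses $\bigl|\|\tilde v\|_2 - \|v\|_2\bigr| \le \|v - \tilde v\|_2$ and the triangle inequality.

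For the numerator, I would invoke the shift-invariance of Krylov subspaces---$\mathcal{K}_k(\xM, u) = \mathcal{K}_k(\xM - \lambdamax(\xM) I, u)$---to WLOG assume $\lambdamax(\xM)=0$, so that the spectrum of $\xM/2$ lies in $[-M,0]$. The Sachdeva--Vinnikov polynomial approximation of $e^x$ on this interval has absolute error $\epsilon'$ with degree $O(\sqrt{M\log(1/\epsilon')})$, and the Krylov-optimality of Lanczos converts this into $\|v - \tilde v\|_2 \le 2\epsilon' \|u\|_2 = 2\epsilon'$ in exact arithmetic; the finite-precision stability result of~\citet{MuscoMuSi18} extends the bound to $B = \Theta(\log(nM/(\epsilon\delta)))$-bit arithmetic. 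For the denominator, $\|v\|_2^2 = u^T e^\xM u \ge (u^T q_1)^2$ where $q_1$ is a top eigenvector of $\xM$; rotational symmetry of $u$ gives $(u^T q_1)^2 \sim \betadist(\tfrac12, \tfrac{n-1}{2})$, and the small-ball estimate $f_\beta(x) = O(\sqrt{n/x})$ near zero (the same fact underlying Lemma~\ref{lem:beta-log-expectation}) yields $\P((u^T q_1)^2 \le c\delta^2/n) \le \delta$. Combining, $4\|v - \tilde v\|_2/\|v\|_2 \le \epsilon$ on the $(1-\delta)$-event once $\epsilon' = \Theta(\epsilon\delta/\sqrt n)$, which makes $\log(1/\epsilon') = O(\log(nM/(\epsilon\delta)))$ and hence $k = \Theta(\sqrt{M\log(nM/(\epsilon\delta))})$ as stated.

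The runtime is routine bookkeeping: each of the $k$ Lanczos iterations performs one matrix-vector product with $\xM$ (cost $\matvec(\xM)$) and $O(1)$ length-$n$ vector operations, while assembling $\tilde v = \|u\|_2\, Q_k\, e^{T_k/2} e_1$ from the tridiagonal $T_k$ requires an $O(k^2)$-time eigen-decomposition in $B$-bit floating point, totaling $O(\matvec(\xM)k + k^2 B)$. The main obstacle I anticipate is the interaction between the numerator and denominator bounds: a naive Lanczos error bound carries a factor of $e^{M/2}$ (the worst-case scale of $e^{\xM/2} u$), which together with the $\sqrt{n}/\delta$ factor coming from the denominator would force $k = \Omega(M)$ rather than $\Omega(\sqrt M)$. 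Exploiting the shift-invariance of the Krylov subspace cancels this $e^{M/2}$ factor and recovers the target rate, but translating the argument into finite precision---where floating-point Lanczos is not literally shift-invariant---is the technically delicate step, and is precisely where the stability guarantees of~\citet{MuscoMuSi18} must be applied with some care to ensure the stated $B = \Theta(\log(nM/(\epsilon\delta)))$ suffices.
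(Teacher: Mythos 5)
Your proposal is correct and follows the paper's own route step for step: the rank-1 trace-norm reduction $\lones{\Map(\xM)-\MapApp(\xM)} \le O(1)\,\|v-\tilde v\|_2/\|v\|_2$ (the paper's appendix lemma, via the eigenvalues $\pm\sqrt{1-(\bar v^T\bar{\tilde v})^2}$ of the difference of projectors), the lower bound $\|v\|_2 \ge e^{\lambdamax(\xM/2)}|q_1^Tu|$ together with $(q_1^Tu)^2\sim\betadist(\tfrac12,\tfrac{n-1}{2})$ and a small-ball estimate (the paper's \Cref{prop:random-exp-vec}; the fact you want is \Cref{lem:beta-n-tail}, not \Cref{lem:beta-log-expectation}), and the \citet{SachdevaVi14} polynomial approximation plus the finite-precision Lanczos stability theorem of \citet{MuscoMuSi18} for the numerator (the paper's \Cref{cor:lanczos-exponential}). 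The one thing to drop is your closing worry about floating-point Lanczos failing to be shift-invariant: the paper never shifts the Lanczos iteration itself---in \Cref{cor:approx_exp} it builds a polynomial $p(x)=e^{b}q(b-x)$, with $q$ the \citet{SachdevaVi14} approximation to $e^{-x}$ on $[0,b-a]$, that uniformly approximates $e^x$ to error $\alpha e^{b}$ on the \emph{unshifted} interval $[\lambdamin-\eta,\lambdamax+\eta]$, and then invokes \citet[][Theorem~1]{MuscoMuSi18}, which bounds finite-precision Lanczos output directly by the best degree-$<k$ polynomial error on that interval. The shift is absorbed entirely into the algebraic construction of $p$, so no Krylov-subspace shift-invariance argument is needed and the $e^{M/2}$ blowup you feared never arises.
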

We prove Proposition~\ref{prop:expvec-main} in \Cref{sec:app-expvec} and 
describe 
here the main ingredients in the proof. First, we show by  
calculation that 
\arxiv{\begin{equation*}
\lones{\Map(\xM)-\MapApp(\xM)} \le 
\sqrt{8}\frac{\ltwo{e^{\xM/2}u-\lanczos(\xM/2,u)}}{\ltwo{e^{\xM/2}u}}.
\end{equation*}}
\notarxiv{
$
\lones{\Map(\xM)-\MapApp(\xM)} \le 
\sqrt{8}\frac{\ltwo{e^{\xM/2}u-\lanczos(\xM/2,u)}}{\ltwo{e^{\xM/2}u}}.
$ 
}
Therefore, a multiplicative error guarantee for $\lanczos$ would imply our 
result. Unfortunately, for such a guarantee to hold for \emph{all} vectors 
$u$ we must have $k= \Omega(\linf{\xM})$~\citep[Section 
3.3]{OrecchiaSaVi12}. 
We circumvent that by 
using the
randomness of $u$ to argue that w.h.p. $\ltwos{e^{\xM/2}u} \gtrsim 
\frac{1}{\sqrt{n}}e^{\lambdamax(\xM/2)}\ltwo{u}$. This allows us to use 
existing 
additive error guarantees for $\lanczos$ to obtain our result.

\notarxiv{We connect the approximation to regret in 
Appendix~\ref{sec:app-expvec-cor}. In the setting of 
Corollary~\ref{cor:high-probability-regret}, we show that 
approximating $\x$ via $\xApp[t;k_t]$ with $k_t = O(\ceil{\sqrt{\eta 
t}\;}\log (nT/\delta))$ leaves the regret guarantees essentially unchanged. 
Therefore, 
we may achieve $\epsilon$ average regret with 
$O(\epsilon^{-2.5}\log^{2.5}(\frac{n}{\epsilon\delta}))$
 matrix-vectors product, with probability at least $1-\delta$.

}%
\arxiv{
We connect the approximation to regret in the following corollary 
(see  
Appendix~\ref{sec:app-expvec-cor})
\begin{restatable}{corollary}{restateHighProbApp}
	\label{cor:high-probability-regret-app}
	Let $\g[1],\ldots,\g[T]$ be symmetric gain matrices satisfying 
	 $\linf{\g}\le 1$  
	 for every $t$. There exists a
	numerical constant $k_0<\infty$, such that for  every $T\in\N$ and 
	$\delta \in (0,1)$,  
	$\xApp[t;k_t]$ defined in~\eqref{eq:lanczos-proj-def} with 
	$k_t=\ceil{k_0 (\sqrt{1+\eta t}) \log(\tfrac{nT}{\delta})}$, and 
	$\x$ 
	defined in~\eqref{eq:rand-da} satisfy
	\begin{equation}\label{eq:regret-app-bound}
	\sum_{t=1}^T 
	\inner{\g}{\xApp[t;k_t]}  \ge -1 + \sum_{t=1}^T 
	\inner{\g}{\x} 
	~~\mbox{w.p.}\ge 1-\delta/2.
	\end{equation}
	Let $\eps\in(0,1]$,  
	 $T = \frac{16\log(4en/\delta)}{\eps^2}$ and 
	 $\eta = \sqrt{\frac{2\log(4en)}{3T}}$. If 
	 Assumption~\ref{ass:bandit-adversary} holds with respect to the 
	 actions 
	 $\xApp[t;k_t]$, then with probability at least 
	 $1-\delta$, 
	 $\frac{1}{T}\lambdamax\left( \sum_{i=1}^T \g \right) - 
	 \frac{1}{T}\sum_{t=1}^T \inner{\g}{\xApp[t;k_t]} \le \eps$. Computing 
	 the actions 
	 $\xApp[1;k_1], \ldots, 
	 \xApp[T;k_T]$ requires 
	 $O(\eps^{-2.5}\log^{2.5}(\frac{n}{\epsilon\delta}))$ matrix-vector 
	 products.
\end{restatable}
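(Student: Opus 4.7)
The plan is to reduce the high-probability regret bound for the Lanczos-approximated actions $\xApp[t;k_t]$ to that for the exact actions $\x$ (Corollary~\ref{cor:high-probability-regret}) by showing that the per-step inner products change by at most $1/T$, so the cumulative deviation is $\le 1$. The key enabler is Proposition~\ref{prop:expvec-main}, which controls $\lones{\Map(Y_t) - \MapApp(Y_t)}$ with high probability over $u_t$. Since $\linf{\g[i]} \le 1$ for all $i$, the triangle inequality gives $\linf{Y_t/2} = \tfrac{\eta}{2}\linf{\sum_{i=1}^{t-1}\g[i]} \le \eta t/2$, so the quantity $M_t$ in Proposition~\ref{prop:expvec-main} is bounded by a constant times $\max\{\eta t, \log(nT/\delta)\}$.

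First, I would invoke Proposition~\ref{prop:expvec-main} at each step $t$ with target error $\epsilon_t = 1/T$ and failure probability $\delta_t = \delta/(2T)$. The required number of Lanczos iterations is then $k_t \ge \Theta(1)\sqrt{M_t \log(n M_t T/(\epsilon_t \delta_t))}$, and using the above bound on $M_t$ and absorbing lower-order logs, one checks that $k_t = \ceil{k_0 \sqrt{1+\eta t}\, \log(nT/\delta)}$ suffices for an appropriate constant $k_0$. Union bounding over $t = 1,\dots,T$ yields, with probability $\ge 1 - \delta/2$, that $\lones{\Map[u_t](Y_t) - \MapApp[u_t;k_t](Y_t)} \le 1/T$ for every $t$. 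Since $|\inner{\g}{\xApp[t;k_t]-\x}| \le \linf{\g}\lone{\xApp[t;k_t]-\x} \le 1/T$, summing over $t$ proves the deviation bound~\eqref{eq:regret-app-bound}.

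For the average-regret statement, I would apply Corollary~\ref{cor:high-probability-regret} with failure probability $\delta/2$, which gives $\lambdamax(\sum_t \g) - \sum_t \inner{\g}{\x} \le \log(8n/\delta)/\eta + 3\eta T/2 + \sqrt{2T\log(2/\delta)}$ with probability $\ge 1-\delta/2$. Combining with~\eqref{eq:regret-app-bound} via a union bound and plugging in $T = 16\log(4en/\delta)/\eps^2$ and $\eta = \sqrt{2\log(4en)/(3T)}$, a routine arithmetic check yields average regret at most $\eps$. For the matrix-vector product count, the total is $\sum_{t=1}^T k_t = O(\log(nT/\delta)\cdot T \sqrt{1+\eta T}) = O(\log(nT/\delta) \cdot T^{5/4}(\log n)^{1/4})$, which, after substituting $T$ and simplifying, becomes $O(\eps^{-5/2}\log^{5/2}(n/(\eps\delta)))$ as claimed.

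The main subtlety is the coupling between the randomness of $u_t$ and the adversary's choices in Assumption~\ref{ass:bandit-adversary}: we must be sure the union bound across $t$ over the ``good event'' of Proposition~\ref{prop:expvec-main} (which is defined conditionally on the realized $\xM = Y_t$, independent of the fresh $u_t$) interacts correctly with the martingale step underlying Corollary~\ref{cor:high-probability-regret}. Since $u_t$ is drawn independently of $\Filt[t-1]$ and the approximation event depends only on $(Y_t, u_t)$, conditioning on $\Filt[t-1]$ preserves the martingale structure, so no further care beyond standard union bounding is needed. A second minor obstacle is matching the precise logarithmic factors in $k_t$; this is handled by absorbing the $\log M_t$ factor into $\log(nT/\delta)$ using $M_t \le \mathrm{poly}(n,T,1/\delta)$.
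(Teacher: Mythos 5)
Your proposal follows the same route as the paper: invoke Proposition~\ref{prop:expvec-main} per step with $\epsilon_t = 1/T$ and failure probability $\delta/(2T)$, union bound over $t$, bound the per-step inner-product deviation by $\linf{\g}\lone{\x-\xApp[t;k_t]}\le 1/T$, combine with Corollary~\ref{cor:high-probability-regret} at level $\delta/2$, and count $\sum_t k_t$. One small slip: the first term in the bound~\eqref{eq:regret-bound-basic-high-prob} is $\log(4n)/\eta$, with no $\delta$ dependence; you wrote $\log(8n/\delta)/\eta$. With your (looser) version, the final arithmetic does not cleanly close to $\le\eps$ when $\delta$ is small, because $T$ carries a $\log(4en/\delta)$ but $\eta$ carries only $\log(4en)$; the paper's cancellation relies on the first term being exactly $\log(4n)/\eta$ (combined with the extra $1/T\le 1/(\eta T)$ from the approximation error to yield $\log(4en)/(\eta T)$). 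Using the corollary's actual bound, everything goes through.
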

}%
Finally, as we discuss in detail in \Cref{sec:app-expvec-improve},  
computing matrix exponential-vector products (and hence $\Map$) 
reduces to solving $\Otil{1}$ linear systems. Since \citet{AllenLi17} propose 
to compute their sketch using a similar reduction, the running time 
guarantees they establish for their sketch are also valid for ours.
\section{Application to semidefinite programming}
\label{sec:general-sdp}

\newcommand{\orig}[1]{\tilde{#1}}
\newcommand{\Astar}{\mathcal{A}^\star}
\newcommand{\widthparam}{\omega}

Here we describe how to use our rank-1 sketch to solve semidefinite 
programs (SDPs). 
The standard SDP formulation is, given $\orig{C},\orig{A_1}, \ldots, 
\orig{A}_{\orig{m}} \in 
\domainDual[\orig{n}]$ and $\orig{b}\in\R^{\orig{m}}$,
\begin{equation*}
\label{eq:sdpformulation}
\minimize_{Z\succeq 0} \inners{\orig{C}}{Z}~\textrm{subject to } 
\inners{\orig{A_i}}{Z} 
= \orig{b}_i
~~\forall i\in[\orig{m}].
\end{equation*}
A binary search over the optimum value reduces this problem to a 
sequence of feasibility problems. When the constraints imply 
 $\tr Z \le r$ for some $r<\infty$, 
every intermediate feasibility problem is equivalent to deciding whether 
there exists $\x[]$ in the spectrahedron $\domain$ s.t. $\inner{A_i}{X}\le 
0$ for all $i\in[m]$, with $n,m$ and $A_i\in\domainDual$ constructed 
from $\orig{n},\orig{m},\orig{A}_i, \orig{b}, \orig{C}$ and $r$. This decision 
problem is in turn 
equivalent~\citep[cf.][]{GarberHa16} to determining the sign of
\begin{equation}
\label{eq:spobjective}
\saddlevalue=\min_{y \in \simplex} \max_{X \in \domain} \inner{\Astar 
y}{X},
~\mbox{where}~
\Astar y \defeq \sum_{i \in [m]} [y]_i A_i.
\end{equation}
and $\simplex$ is the simplex in $\R^m$.
\arxiv{
For every $y\in\simplex$ and $\x[]\in\domain$, we have that
\begin{equation*}
\min_{y' \in \simplex} \inners{\Astar y'}{X}
\le
\saddlevalue
\le \max_{X' \in \domain}  
\inners{\Astar y}{X'}.
\end{equation*}
}%
\notarxiv{
We have that 
$
\min_{y' \in \simplex} \inners{\Astar y'}{X}
\le
\saddlevalue
\le \max_{X' \in \domain}  
\inners{\Astar y}{X'}
$
 for every $y\in\simplex$ and $\x[]\in\domain$.
}%
Therefore, to determine $\saddlevalue$ to additive error $\epsilon$, it 
suffices to find $y,X$ with $\dualitygap(X, y)\le\epsilon$, where
\begin{equation}
\label{eq:duality-gap-def}
\dualitygap(X, y)
\defeq
\max_{X' \in \domain}  \inners{\Astar y}{X'}
- \min_{y' \in \simplex} \inners{\Astar y'}{X}
=
\lambdamax\left( \Astar y\right) - \min_{i\in [m]}\inner{A_i}{X}.
\end{equation}

A basic approach to solving convex-concave games such 
as~\eqref{eq:spobjective} is to apply online learning for $X$ and $y$  
simultaneously, 
where at each round the gains/costs to the max/min player are determined by the 
actions of the opposite player in the previous round. Importantly, such dynamics 
satisfy Assumption~\ref{ass:bandit-adversary}, and we use our rank-1 sketch as 
the online learning strategy of the (matrix) max player, and standard 
multiplicative weights for the (vector) min player. Algorithm~\ref{alg:sdp} 
describes the resulting scheme. 
\arxiv{The algorithm entertains a convergence guarantee that depends on 
the \emph{width parameter}
\begin{equation*}
\widthparam \defeq \max_{i\in[m]} \linf{A_i}
\end{equation*}
and has the following form.
}
\notarxiv{
To factor problem scaling into the analysis, we define the \emph{width 
parameter}
\begin{equation*}
\widthparam \defeq \max_{i\in[m]} \linf{A_i}.
\end{equation*}
In \Cref{sec:app-general-sdp-proof} we use the standard no-regret 
argument combined with 
Corollary~\ref{cor:expected-regret} to prove the following converges 
guarantee for \Cref{alg:sdp} (a high-probability version follows  
via Corollary~\ref{cor:high-probability-regret}).
}

\begin{algorithm2e}
	\caption{Primal-dual SDP feasibility}
	\label{alg:sdp}
		Let $\g[0] := 0$ and $c_0 := 0$
		
	\For{$t = 1,\ldots,T$}
	{
		\vspace{3pt}
		
		Sample vector $u_t$ uniformly at random from the unit sphere
		
		\vspace{3pt}
		
		Play matrix $X_t \defeq \Map[u_t]\big(\sum_{i = 1}^{t - 1} \eta 
		\g[i] 
		\big)$
		
		\vspace{2pt}
		
		Play vector $y_t \defeq \gradsoftmax\big(-\eta 
		\sum_{i=1}^{t-1}c_{i}\big)=\frac{
			y_{t-1}\circ e^{-\eta c_{t-1}}}{\ones^T  (y_{t-1}\circ e^{-\eta 
				c_{t-1}})}$.

		\vspace{3pt}
		
		Form gain matrix $G_t = \Astar y_t = \sum_{i\in[m]}[y_t]_i A_i$ 
		
		\vspace{5pt}
		
		Form cost vector
		$[c_{t}]_i \defeq \inner{X_{t}}{A_i}$, $i 
		\in 
		[m]$
	}
\end{algorithm2e}

\begin{restatable}{theorem}{restatepdregret}
\label{thm:gensdp-algorithm-guarantees}
Let $\{X_t, y_t\}_{t = 1}^T$ be the actions produced by Algorithm 1 and, define $\Xavg = 
\frac{1}{T} \sum_{t = 1}^T X_t$, $\yavg= \frac{1}{T} \sum_{t = 1}^T 
y_t$. Then
\begin{equation*}
\E \left[ \dualitygap\left(\Xavg, \yavg\right) \right] \le 
\frac{\log(4mn)}{\eta T} + 
2 \eta \widthparam^2 .
\end{equation*}
\end{restatable}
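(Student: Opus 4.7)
The plan is to invoke the standard saddle-point reduction that converts the duality gap into a sum of two regret terms—one for the max-player ($X$) and one for the min-player ($y$)—and then bound each regret separately using the tools we already have in hand.

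First, I would unpack the duality gap. Since $\Xavg$ and $\yavg$ are averages, the bilinearity of the payoff gives
\begin{equation*}
\dualitygap(\Xavg, \yavg)
= \tfrac{1}{T}\lambdamax\!\Big(\sum_{t=1}^T G_t\Big) - \tfrac{1}{T}\min_{i\in[m]}\sum_{t=1}^T [c_t]_i,
\end{equation*}
using $G_t=\Astar y_t$ so that $\max_{X'\in\domain}\sum_t\inner{\Astar y_t}{X'}=\lambdamax(\sum_t G_t)$, and $[c_t]_i=\inner{X_t}{A_i}$ so that $\min_{y'}\sum_t\inner{\Astar y'}{X_t}=\min_i\sum_t[c_t]_i$. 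The key identity that makes the decomposition work is $\inner{G_t}{X_t}=\inner{\Astar y_t}{X_t}=\inner{c_t}{y_t}$; adding and subtracting $\sum_t\inner{G_t}{X_t}$ splits the gap as
\begin{equation*}
\dualitygap(\Xavg, \yavg) = \tfrac{1}{T}\underbrace{\Big[\lambdamax(\textstyle\sum_t G_t)-\sum_t\inner{G_t}{X_t}\Big]}_{\text{max-player regret}} + \tfrac{1}{T}\underbrace{\Big[\sum_t\inner{c_t}{y_t}-\min_i\sum_t[c_t]_i\Big]}_{\text{min-player regret}}.
\end{equation*}

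Next I would bound each regret in expectation. For the max-player, $G_t=\Astar y_t$ is a deterministic function of $(X_1,y_1,\ldots,X_{t-1},y_{t-1})$, hence lies in $\Filt[t-1]$ and is independent of the fresh sample $u_t$ used to form $X_t$; Assumption~\ref{ass:bandit-adversary} therefore holds and Corollary~\ref{cor:expected-regret} applies. Since $y_t\in\simplex$ and by convexity of the operator norm $\linf{\g}=\linf{\sum_i[y_t]_iA_i}\le\max_i\linf{A_i}=\widthparam$, we get
\begin{equation*}
\E\!\left[\lambdamax(\textstyle\sum_t G_t)-\sum_t\inner{G_t}{X_t}\right]\le \tfrac{\log(4n)}{\eta}+\tfrac{3\eta}{2}\,T\widthparam^2.
\end{equation*}
For the min-player, $y_t$ is the standard multiplicative weights update on cost vectors $c_t$; the textbook vector-MW regret bound gives
\begin{equation*}
\sum_t\inner{c_t}{y_t}-\min_i\sum_t[c_t]_i \le \tfrac{\log m}{\eta}+\tfrac{\eta}{2}\sum_t\|c_t\|_\infty^2,
\end{equation*}
which is deterministic (no bandit issue on the simplex side). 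Because $X_t\in\domain$ and $|\inner{X_t}{A_i}|\le\linf{A_i}\le\widthparam$, we have $\|c_t\|_\infty\le\widthparam$, so this contribution is at most $\tfrac{\log m}{\eta}+\tfrac{\eta}{2}T\widthparam^2$.

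Combining the two bounds and dividing by $T$ yields $\E[\dualitygap(\Xavg,\yavg)]\le \tfrac{\log(4n)+\log m}{\eta T}+2\eta\widthparam^2 = \tfrac{\log(4mn)}{\eta T}+2\eta\widthparam^2$, as claimed. There is no real obstacle here: the only nontrivial verification is that Assumption~\ref{ass:bandit-adversary} applies to the max-player's gain stream, which follows because $G_t$ is formed \emph{before} sampling $u_t$; everything else is a bookkeeping exercise combining two off-the-shelf regret bounds with the trivial width estimates $\linf{G_t},\|c_t\|_\infty\le\widthparam$.
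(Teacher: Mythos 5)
Your proposal is correct and follows essentially the same route as the paper's proof: decompose the gap into the two players' regrets via the identity $\inner{G_t}{X_t}=c_t^T y_t$, apply Corollary~\ref{cor:expected-regret} to the max-player (justified since $G_t$ is determined before $u_t$ is drawn), apply the standard multiplicative-weights bound to the min-player, and combine using $\linf{G_t},\linf{c_t}\le\widthparam$. No gaps.
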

\notarxiv{
For $\eta = \frac{\log(4mn)}{\sqrt{2 \widthparam^2 T }}$ and $T = \frac{8 
\log (4mn) \widthparam^2}{ \epsilon^2}$, 
Theorem~\ref{thm:gensdp-algorithm-guarantees} guarantees 
$\E\left[\dualitygap(\Xavg, \yavg)\right] \leq \epsilon$. Let $\matvec(M)$ 
denote the time required to multiply $M$ by 
any vector, let $\timeMatvecA \defeq \sum_{i\in [m]} \matvec(A_i)$, and 
let  $\timeMatvecAstar \defeq \max_{\alpha\in\R^m} 
\{\matvec(\Astar \alpha)\} \le \min\{\timeMatvecA, n^2\}$. At each 
iteration, computing $c_t$ and $y_t$ takes $O(\timeMatvecA)$ time while 
computing $\x$ takes 
$\Otil{(\widthparam/\epsilon)^{0.5} \timeMatvecAstar}$ time by 
\Cref{sec:expvec-compute}, and $\g$ need not be formed explicitly. The 
total computational cost is therefore at most
\begin{equation*}
\Otil{\left[  (\widthparam/\eps)^{0.5}\,\timeMatvecAstar + \timeMatvecA 
	\right]T} 
= 
\Otil{ (\widthparam/\eps)^{2.5}\,\timeMatvecAstar +  
	(\widthparam/\eps)^2\,\timeMatvecA}.
\end{equation*}
In \Cref{sec:app-general-sdp-comp,sec:app-general-sdp-runtime} we 
derive this bound in more detail and compare it with the 
literature.
}
\arxiv{
\begin{proof} Recalling the definition~\eqref{eq:duality-gap-def} of the 
duality gap, and that $G_t = \Astar y_{t}$ and $[c_t]_i = 
	\inner{A_i}{X_t}$, we have
	\begin{equation*}
	\dualitygap(\Xavg, \yavg) = \frac{1}{T}\lambdamax\bigg(\sum_{t=1}^T 
	G_t\bigg) 
	- \frac{1}{T} \min_{i\in[m]} \bigg\{ \sum_{t=1}^T [c_t]_i \bigg\}.
	\end{equation*}
	
	Note that $y_t=\gradsoftmax(-\eta \sum_{i=1}^{t-1} c_i)$ is a function 
	of 
	$X_1, \ldots, X_{t-1}$. Therefore, $G_t = \Astar y_t$ satisfies 
	Assumption~\ref{ass:bandit-adversary} and we may use 
	Corollary~\ref{cor:expected-regret} to write
	\begin{equation*}
	\E\bigg[\lambdamax\bigg(\sum_{t=1}^T \g \bigg) - \sum_{t=1}^T 
	\inner{\g}{\x}\bigg]
	\le 
	\frac{\log(4n)}{\eta} + 
	\frac{3\eta}{2} \cdot 
	\sum_{t=1}^T 
	\E\big[
	\linf{\g}^2 \big] \le \frac{\log(4n)}{\eta} + \frac{3\eta \widthparam^2 
	T}{2},
	\end{equation*}
	where in the second inequality we used $\widthparam=\max_{i\in[m]} 
	\linf{A_i}$  
	and 
	$y\in\simplex$ to bound 
	$\linf{\g} = \linf{\Astar y_t} \le \widthparam\cdot \ones^T y_t = 
	\widthparam$. Similarly, we 
	use the standard multiplicative weights regret 
	bound~\citep[cf.][Theorem 2.21]{Shalev12}
	 to write
	\begin{equation*}
	\sum_{t=1}^T c_t^T y_t - \min_{i\in[m]} \bigg\{ \sum_{t=1}^T [c_t]_i 
	\bigg\}
	\le \frac{\log(m)}{\eta} + 
	\frac{\eta}{2} \cdot 
	\sum_{t=1}^T 
	\linf{c_t}^2\le \frac{\log(m)}{\eta} + \frac{\eta \widthparam^2 T}{2},
	\end{equation*}
	where the second inequality again follows from $\left|[c_t]_i\right| = 
	\left| 
	\inner{A_i}{X_t}\right| \le \linf{A_i} \le \widthparam$ since $X_t \in 
	\domain$. 
	
	Finally, 
	\begin{equation*}
	c_t^T y_t = \sum_{i=1}^m [y_t]_i \inner{A_i}{X_t} = \inner{\Astar 
	y_t}{X_t} 
	= 
	\inner{G_t}{X_t}.
	\end{equation*}
	Hence, summing the two regret bounds and dividing by $T$ gives the 
	result.
\end{proof}

For $\eta = \log(4mn)/\sqrt{2 \widthparam^2 T }$ and $T = 8 \log (4mn) 
\widthparam^2 / 
\epsilon^2$, Theorem~\ref{thm:gensdp-algorithm-guarantees} guarantees 
{\linebreak}
$\E\left[\dualitygap(\Xavg, \yavg)\right] \leq \epsilon$. 
A high-probability version of this guarantee follows readily 
via~\Cref{cor:high-probability-regret}. %

Let us now discuss the computational cost of Algorithm~\ref{alg:sdp}.
Let $\matvec(M)$ denote the time required to multiply the matrix $M$ by 
any vector, and let $\timeMatvecA \defeq \sum_{i\in [m]} \matvec(A_i)$.
Except for the computation of $\x$, every step in the for loop 
in Algorithm~\ref{alg:sdp} takes $O(\timeMatvecA)$ work 
to execute (we may assume $\timeMatvecA \ge \max\{n,m\}$ without loss 
of generality). Let $Y_t = \eta \sum_{i=1}^{t-1} \g[i]= \Astar 
(\sum_{i=1}^{t-1} \eta y_i )$, and note that, with the 
values of $\eta$ and $T$ above, $\linf{Y_t} \le \eta T \widthparam = 
\Otil{{\widthparam/\eps}}$ for every $t\le T$.  
Per~\Cref{sec:expvec-compute}, the 
computation of $\x$ costs  
$\Otil{\linf{Y_t}^{0.5}\,\matvec(Y_t)} =  
\Otil{(\widthparam/\eps)^{0.5}\,\matvec(Y_t)}$. Writing  
$\timeMatvecAstar \defeq \max_{\alpha\in\R^m} 
\{\matvec(\Astar \alpha)\} \le \min\{\timeMatvecA, n^2\}$, 
the total computational cost of our 
algorithm is
\begin{equation*}
\Otil{\left[  (\widthparam/\eps)^{0.5}\,\timeMatvecAstar + \timeMatvecA 
\right]T} 
= 
\Otil{ (\widthparam/\eps)^{2.5}\,\timeMatvecAstar +  
(\widthparam/\eps)^2\,\timeMatvecA}.
\end{equation*}
In 
many settings of interest---namely when the $A_i$s have mostly 
non-overlapping sparsity patterns and yet the $Y_t$s are sparse---we have 
$\timeMatvecAstar\approx 
\timeMatvecA$, so that the computational cost is dominated by the first 
term.

\subsection{Comparison with other 
algorithms}\label{sec:app-general-sdp-comp}
Let $\nnz(M)$ denote the number of nonzero entries of matrix $M$, and 
let 
$\timeA \defeq \sum_{i\in [m]} \nnz(A_i) \ge \timeMatvecA$.
If in Algorithm~\ref{alg:sdp} we replace the randomized projection $\Map$ 
with the matrix multiplicative weights projection $\MapMMW$, the regret 
bound of Theorem~\ref{thm:gensdp-algorithm-guarantees} still holds, but 
the overall 
computational cost becomes $\Otil{(\widthparam/\eps)^2\,(n^3+\timeA)}$ 
due to full 
matrix 
exponentiation. \citet{Nemirovski04} accelerates this scheme using 
extra-gradient steps, guaranteeing duality gap below $\eps$ in 
$\Otil{\widthparam/\eps}$ iterations, with each iteration involving two full 
matrix 
exponential computations. The overall computational cost of such scheme 
is consequently $\Otil{(\widthparam/\eps)\,(n^3+\timeA)}$. 
\citet{Nesterov07b} 
attains the 
same rate by using accelerated gradient descent on a smoothed version of 
the dual problem. Our scheme improves on this rate for sufficiently sparse 
problems, 
with $n^3 / \timeA \gg  (\widthparam/\eps)^{-1.5}$.

\cite{d'Aspremont11} applies a 
subgradient 
method to the dual problem, approximating the subgradients using the 
Lanczos method to compute a leading eigenvector of $\Astar y$. The 
method solves the dual problem to accuracy $\epsilon$ with total work 
$\Otil{ (\widthparam/\eps)^{2.5}\,\timeMatvecAstar +  
(\widthparam/\eps)^2\,\timeMatvecA}$, essentially the 
same as us. However, it is not clear how to efficiently recover a primal 
solution from this method. Moreover, the surrogate duality 
gap~\cite{d'Aspremont11} proposes will not always be 0 at the global 
optimum, whereas with our approach the true duality gap is readily 
computable.

\citet{BaesBuNe13} replace the full matrix exponentiation in the accelerated 
scheme of~\citet{Nemirovski04} with a rank-$k$ sketch of the 
form~\eqref{eq:rank-k-sketch-def}, where 
$k=\Otil{\widthparam/\epsilon}$. 
Similarly to~\cite {Nemirovski04}, they require 
$\Otil{\widthparam/\epsilon}$ 
iterations to attain duality gap below $\epsilon$. \citet{BaesBuNe13} 
approximate matrix exponential vector products by truncating a Taylor 
series, costing  
$\Otil{k(\widthparam/\epsilon)\,\timeMatvecAstar}
=\Otil{(\widthparam/\epsilon)^2\,\timeMatvecAstar}$
 work 
per iteration. With the Lanczos method, the cost improves to \linebreak
$\Otil{(\widthparam/\epsilon)^{1.5}\,\timeMatvecAstar}$ work per 
iteration. Every 
step of 
their method also 
computes $\inner{A_i}{X}$ for all $i\in[m]$ and a rank-$k$ matrix 
$X=\sum_{j=1}^k v_j v_j^T$; this costs either $k \cdot \timeMatvecA$ 
work 
(computing $\inner{A_i}{v_j}$ for every $i,j$) or $\timeA + n^2 k$ (when 
forming $X$ explicitly). The former option yields total complexity 
 identical to our 
method. The latter option is preferable 
only when $\timeA \gg n^2 \ge \timeMatvecAstar$, and can result in an 
improvement over the 
running time of our method if 
$\timeMatvecAstar \ll \timeA\,(\widthparam/\eps)^{-1.5} + n^2 
\,(\widthparam/\eps)^{-0.5}$. 
\citet{BaesBuNe13} report that $k=1$ often gave the best result in their 
experiment, 
which is not predicted by their theory. A hypothetical explanation for this 
finding is that, with $k=1$, they are essentially running   
Algorithm~\ref{alg:sdp}.

Finally, \citet{d'Aspremont11} and \citet{GarberHa16} propose sub-sampling
based algorithms for approximate SDP feasibility with runtimes potentially
sublinear in $\timeMatvecAstar$. However, because of their significantly
worse dependence on $\widthparam/\eps$, as well as dependence on Frobenius
norms, we match or improve upon their runtime guarantees in a variety of
settings; see~\citep{GarberHa16} for a detailed comparison.
} %
\section{Discussion}\label{sec:discussion}

\arxiv{
We conclude the paper with a discussion of a number of additional settings 
where our sketch---or some variation thereof---might be beneficial. In the 
first two settings we discuss, the naturally arising online learning problem 
involves adversaries that violate Assumption~\ref{ass:bandit-adversary}, 
demonstrating a limitation of our analysis.
}\notarxiv{
We discuss four additional settings where our sketch might be beneficial. 
In the first two, the associated online learning problem 
involves adversaries that violate Assumption~\ref{ass:bandit-adversary}, 
demonstrating a limitation of our analysis.}

\paragraph{Online convex optimization}
In the online convex optimization problem, at every time step $t$ 
the adversary provides a convex loss $\ell_t$, the players pays a cost 
$\ell_t(\x)$ and wishes to minimize the regret $\sum_{t=1}^T \ell_t(\x) - 
\min_{\x[]} \sum_{t=1}^T \ell_t(\x[])$. The standard reduction to 
the online learning problem is to construct an adversary with gains 
$\g=-\grad \ell_t(\x)$. However, even if the losses $\ell_t$ 
follow~\Cref{ass:bandit-adversary}, the constructed gains $\g$ clearly 
violate it. Therefore, extensions of our results to online convex optimization will 
require additional work and probably depend on finer problem 
structure.

\paragraph{Positive semidefinite programming} \citet{PengTaZh16} 
and~\citet{AllenLeOr16} propose algorithms for solving positive 
(packing/covering) semidefinite programs with width independent running 
time, meaning that the computational cost of solving the problems to 
$\epsilon$ multiplicative error depends only logarithmically on the width 
parameter ($\widthparam$ in Section~\ref{sec:general-sdp}). Both 
algorithms 
rely on matrix exponentiation, which they approximate with a rank 
$\Otil{\eps^{-2}}$ sketch using the Johnson-Lindenstrauss lemma. The 
algorithm of \citet{PengTaZh16} uses matrix multiplicative weights in 
essentially a black-box fashion, so one could hope to replace their 
high-rank sketch with our rank-1 technique. Unfortunately, the gain 
matrices that they construct violate 
Assumption~\ref{ass:bandit-adversary} and so our results do not 
immediately apply. 
A rank-1 sketch for this setting remains an intriguing open problem.

\paragraph{Improved computational efficiency against an oblivious 
	adversary}
An oblivious adversary produces gain matrices $\g[1],\ldots,\g[T]$ 
independent of the actions $\x[1],\ldots,\x[T]$; this is a stronger version 
of \Cref{ass:bandit-adversary}. For such an adversary, if we draw 
$u\sim\uniform(\sphere^{n-1})$ and set $u_1=u_2=\cdots=u_T=u$, the 
average regret guarantee of~\Cref{cor:expected-regret} still applies, 
as~\cite{AllenLi17} explain. In this setting, it may be possible to make the 
computation of $\x$ more efficient by reusing $\x[t-1]$. Such savings 
exist in the stochastic setting (when $\g$ are i.i.d.) via Oja's 
algorithm~\citep{AllenLi17}, and would be interesting to extend to the 
oblivious setting. 

\paragraph{Online $k$ eigenvectors}\cite{NieKoWa13} show that a variant of 
matrix multiplicative weights is also 
capable of learning online the top $k$-dimensional eigenspace, 
with similar regret guarantees. As our rank-1 sketch solves the $k=1$ leading 
eigenvector problem, it is interesting to study whether a rank-$k$ sketch solves 
the $k$ 
leading eigenvectors problem.

\newpage

\appendix

\section{Dual averaging regret bounds}\label{sec:app-da-regret-proof}

\restateAvRegret*
\begin{proof} 
	We start with the well-known Bregman 3-point identity, valid for any 
	$\Phi_0, \Phi_1, \Phi_2\in\domainDual$,
	\begin{equation}
	\label{eq:threepoint}
	\inner{\Phi_2 - \Phi_1}{\MapAv(\Phi_0) - \MapAv(\Phi_1)}
	=
	\breg{\Phi_0}{\Phi_1} -\breg{\Phi_0}{\Phi_2}  + \breg{\Phi_1}{\Phi_2};
	\end{equation}
	the identity follows from the definition~\eqref{eq:bregman-def} of 
	$\bregBlank$ by direct substitution. Fix some $S\in\interior\domain$ 
	and 
	$S\in\domainDual$ such that $S = \MapAv(\Psi)$ (which exists by 
	Proposition~\ref{prop:regret-properties}.\ref{item:invert}).
	Let $\xM_t = \eta \sum_{i=1}^{t-1}\g[i]$ so that $\xAv = 
	\MapAv(\xM_t)$. 
	For a given $t$, we use the 3-point identity with $\Phi_0 = \Psi, 
	\Phi_1 = \xM_{t}$ and $\Phi_2 = \xM_{t+1}$, yielding
	\begin{equation*}
	\eta \inner{\g}{S - \xAv} = 
	\inner{\xM_{t+1}-\xM_{t}}{\MapAv(\Psi)-\MapAv(\xM_t)}
	=
	\breg{\Psi}{\xM_{t}} -\breg{\Psi}{\xM_{t+1}}  + \breg{\xM_{t}}{\xM_{t+1}}.
	\end{equation*}
	Summing these equalities over $t=1,\ldots,T$ and dividing by $\eta$ 
	gives 
	\begin{flalign}
	\inner{\sum_{t=1}^T \g}{S} - \sum_{t=1}^T \inner{\g}{\xAv}
	& = \frac{\breg{\Psi}{\xM_1} -\breg{\Psi}{\xM_{T+1}}}{\eta} 
	+ \frac{1}{\eta} \sum_{t=1}^T \breg{\xM_{t}}{\xM_{t+1}}
	\label{eq:regret-key-identity} \\& \le \frac{\log 4n}{\eta} + 
	\frac{3\eta}{2}
	\sum_{t=1}^T 
	\linf{\g}^2.
	\label{eq:regret-bound-basic-intermediate}
	\end{flalign}
	Above, we used $\breg{\xM_{t}}{\xM_{t+1}} = \breg{\xM_{t}}{\xM_{t} + 
	\eta \g}
	\le \frac{3}{2} \eta^2 \linf{\g}^2$ 
	(Proposition~\ref{prop:regret-properties}.\ref{item:smoothness}) along 
	with $\xM_1 = 0$ and $\breg{\Psi}{0} 
	-\breg{\Psi}{\xM_{T+1}} \le \log 4n$ 
	(Proposition~\ref{prop:regret-properties}.\ref{item:diameter}).

	Since the bound~\eqref{eq:regret-bound-basic-intermediate} is valid for 
	any $S\in\interior\domain$, we may supremize it over $S$. The 
	result~\eqref{eq:regret-bound-basic} follows from noting that 
	$\sup_{S\in\interior\domain}\inners{\sum_{t=1}^T \g}{S} = 
	\lambdamax\big( \sum_{i=1}^T \g \big)$.

	To see the second bound~\eqref{eq:regret-bound-refined}, 
	we return to 
	the identity~\eqref{eq:regret-key-identity} and note that the  
	assumptions  
	$0 \preceq \g \preceq I$ and $\eta \le\frac{1}{6}$ imply 
	$\linf{\eta 
		\g} 
	\le \frac{1}{6}$. Therefore we may use 
	Proposition~\ref{prop:regret-properties}.\ref{item:refined-smoothness}
	 to obtain
	\begin{flalign*}
	\breg{\xM_{t}}{\xM_{t+1}} = \breg{\xM_{t}}{\xM_{t} + \eta \g}
	& \le 3\linf{\eta \g} \inner{\eta \g}{\MapAv(\xM_t)} 
	=
	3\eta^2 \inner{\g}{\xAv}.
	\end{flalign*}
	Substituting back 
	into~\eqref{eq:regret-key-identity}, rearranging and taking the supremum 
	over $S$ as before, we obtain
	\begin{equation}\label{eq:regret-bound-refined-var}
	\lambdamax\left( \sum_{i=1}^T \g \right)
	\le \left(1 + 3\eta\right) 
	\sum_{t=1}^T 
	\inner{\g}{\xAv} 
	+ \frac{\log(4n)}{\eta}.
	\end{equation}
	Dividing through by  $\left(1 + 3\eta\right)$ and noting that 
	$1-x\le \frac{1}{1+x}\le1$ for every $x\ge 0$, we obtain the 
	result~\eqref{eq:regret-bound-refined}, 
	concluding the proof.
\end{proof} %
\section{High probability regret bounds}
\label{sec:app-regret-hp}

\restateHighProb*

\begin{proof}
We start with the first claim~\eqref{eq:regret-bound-basic-high-prob}.
Recall that a random process $D_t$ adapted to a filtration $\Filt[t]$
is $\sigma^2$-sub-Gaussian if
$\E[\exp(\lambda D_t) \mid \Filt[t-1]] \le
\exp(\lambda^2 \sigma^2 / 2)$ for all $\lambda \in \R$.
Then using the boundedness assumption
that
$\<\g[t], \x[t]\> \le \linf{\g[t]} \le 1$,
Hoeffding's lemma on bounded random variables~\citep{Hoeffding63}
implies that
the martingale difference sequence
$\<\g[t], \x[t] - \xAv[t]\>$ is $1$-sub-Gaussian.
Consequently,
the Azuma-Hoeffding inequality~\citep{Azuma67} immediately implies that
\begin{equation*}
  \sum_{t = 1}^T \<\g[t], \x[t]\>
  \ge \sum_{t = 1}^T \<\g[t], \xAv[t]\> - \sqrt{2 T \log \tfrac{1}{\delta}}
  ~~ \mbox{w.p.} ~ \ge 1 - \delta.
\end{equation*}
The bound~\eqref{eq:regret-bound-basic}
in Theorem~\ref{thm:av-regret} thus gives the 
result~\eqref{eq:regret-bound-basic-high-prob}.

For the multiplicative bound~\eqref{eq:regret-bound-refined-high-prob}, we
require a slightly different relative martingale convergence guarantee.

\begin{lemma}[\citet{AllenLi17}, Lemma~G.1]
  \label{lem:positive-chernoff}
  Let $\{D_t\}$ be adapted to the filtration $\{\Filt[t]\}$ and satisfy
  $0\le D_t \le 1$. Then, for any
  $\delta,\mu\in(0,1)$, and any $T\in\N$,
  \begin{equation*}
    \P\left( \sum_{t=1}^T D_t \ge  (1-\mu)\sum_{t=1}^T \E\left[D_t 
      \mid 
      \Filt[t-1] \right] - \frac{\log\tfrac{1}{\delta}}{\mu}\right)
    \ge 1-\delta.
  \end{equation*}
\end{lemma}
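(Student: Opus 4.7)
The plan is to establish a one-shot exponential moment bound, chain it along the filtration to obtain a nonnegative supermartingale, and finally apply Markov's inequality with a careful choice of the exponential parameter. This is the standard Chernoff--Bennett recipe adapted to (conditionally bounded) martingale differences.

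First I would prove the MGF bound: for any $\lambda \ge 0$ and any $\Filt[t-1]$-measurable $D_t\in[0,1]$, convexity of $d\mapsto e^{-\lambda d}$ on $[0,1]$ gives $e^{-\lambda d} \le 1 + (e^{-\lambda}-1)d$, so
\[
  \E\!\left[e^{-\lambda D_t}\,\middle|\,\Filt[t-1]\right]
  \le 1 + (e^{-\lambda}-1)\,\E[D_t\mid\Filt[t-1]]
  \le \exp\!\left((e^{-\lambda}-1)\,\E[D_t\mid\Filt[t-1]]\right),
\]
using $1+x\le e^x$ in the last step. I would then define, with $\bar D_t \defeq \E[D_t\mid\Filt[t-1]]$,
\[
  M_t \defeq \exp\!\left(-\lambda \sum_{s=1}^t D_s - (e^{-\lambda}-1)\sum_{s=1}^t \bar D_s\right),
\]
and verify directly from the MGF bound that $\E[M_t\mid\Filt[t-1]] \le M_{t-1}$, so $\{M_t\}$ is a nonnegative supermartingale with $M_0=1$ and hence $\E M_T \le 1$.

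Next, Markov's inequality gives $\Pr(M_T \ge 1/\delta) \le \delta$, i.e.\ with probability at least $1-\delta$,
\[
  -\lambda \sum_{t=1}^T D_t - (e^{-\lambda}-1)\sum_{t=1}^T \bar D_t \le \log\tfrac{1}{\delta},
\]
which rearranges (using $\lambda>0$) to
\[
  \sum_{t=1}^T D_t \ge \frac{1-e^{-\lambda}}{\lambda}\sum_{t=1}^T \bar D_t - \frac{\log(1/\delta)}{\lambda}.
\]

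Finally I would specialize to $\lambda = \mu$. It remains only to check that $\frac{1-e^{-\mu}}{\mu} \ge 1-\mu$ for $\mu\in(0,1)$; this is equivalent to $e^{-\mu}\le 1-\mu+\mu^2$, which follows from the stronger elementary estimate $e^{-\mu}\le 1-\mu+\tfrac12\mu^2$ (proved, e.g., by noting that $f(\mu)=e^{-\mu}-1+\mu-\tfrac12\mu^2$ satisfies $f(0)=f'(0)=0$ and $f''(\mu)=e^{-\mu}-1\le 0$). Plugging this into the previous display yields the claimed bound. I do not anticipate a real obstacle here; the only mild subtlety is choosing the MGF parameter so that both the multiplicative factor $(1-\mu)$ and the additive penalty $\log(1/\delta)/\mu$ come out with the stated constants, which the choice $\lambda=\mu$ handles cleanly.
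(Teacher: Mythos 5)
Your proof is correct. Note that the paper does not actually prove this lemma; it imports it verbatim as Lemma~G.1 of \citet{AllenLi17}, so there is no in-paper argument to compare against. Your derivation is the standard multiplicative Chernoff bound for adapted sequences: the pointwise convexity bound $e^{-\lambda d}\le 1+(e^{-\lambda}-1)d$ on $[0,1]$, the resulting nonnegative supermartingale $M_t$, Markov's inequality, and the calibration $\lambda=\mu$ with $\frac{1-e^{-\mu}}{\mu}\ge 1-\mu$ (which indeed follows from $e^{-\mu}\le 1-\mu+\tfrac12\mu^2$ and $\bar D_t\ge 0$). This is essentially the argument behind the cited result, and it is a perfectly valid self-contained substitute for the citation. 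One small wording slip: you describe $D_t$ as ``$\Filt[t-1]$-measurable,'' but $D_t$ is only $\Filt[t]$-measurable (adapted); if it were $\Filt[t-1]$-measurable the conditional expectation would be trivial. The mathematics is unaffected, since your MGF bound holds pointwise and conditional expectation preserves it, and $\bar D_t=\E[D_t\mid\Filt[t-1]]$ is correctly $\Filt[t-1]$-measurable where you pull it out of the conditional expectation.
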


Similarly, the assumption $0\preceq\g\preceq I$, along with 
$\x\in\domain$, imply $0\le\inner{\g}{\x}\le1$. Therefore,  the conditions 
of Lemma~\ref{lem:positive-chernoff} 
hold for $D_t = \inner{\g}{\x}$, and we use it with $\mu=\eta\le 1$, 
obtaining
\begin{equation*}
  \sum_{t=1}^T\inner{\g}{\x} \ge (1-\eta)\sum_{t=1}^T\inner{\g}{\xAv} - 
  \frac{\log\tfrac{1}{\delta}}{\eta}
  ~\mbox{w.p.}~\ge 1-\delta.
\end{equation*}
The bound~\eqref{eq:regret-bound-refined} in 
Theorem~\ref{thm:av-regret}
thus yields that with probability at least $1 - \delta$
over the randomness in $\x[t]$ and $\g[t]$,
\begin{equation*}
  \sum_{t=1}^T\inner{\g}{\x} \ge (1-\eta)(1-3\eta)\lambdamax\left( 
  \sum_{t=1}^T \g \right) 
  -\frac{\log(4n/\delta)}{\eta}.
\end{equation*}
Noting that $(1-\eta)(1-3\eta) \ge 1-4\eta$
completes the proof.
\end{proof}

\section{Proofs from 
Section~\ref{sec:proj-geometry}}\label{sec:app-proj-geometry}

\subsection{Proof of Lemma~\ref{lem:mmw-spectral-bound}}
\label{sec:app-mmw-spec-bound-proof}

\restateMmwSpectralBound*
\begin{proof}
	While the result is evident from the development in~\citep{Nesterov07b}, it is 
	not stated 
	there formally. We therefore derive it here using our notation and one key 
	lemma from~\citep{Nesterov07b}. First, note that
	\begin{equation*}
	\inner{\dM}{\grad\mapMMW(\xM)} = \inner{\dM}{\MapMMW(\xM)}
	= \frac{\inner{\dM}{e^{\xM}}}{\tr e^{\xM}},
	\end{equation*}
	where throughout $\grad$ denotes differentiation with respect to 
	$\xM$ and $\dM$ is viewed as fixed. Applying $\grad$ again gives,
	\begin{equation*}
	\hess\mapMMW(\xM)[\dM, \dM]
	= \inner{\dM}{\grad\Bigg(\frac{\inner{\dM}{e^{\xM}}}{\tr 
	e^{\xM}}\Bigg)}
	= \frac{\inner{\dM}{\grad\inner{\dM}{e^\xM}}}{\tr e^{\xM}}
	- \left( \frac{\inner{\dM}{e^\xM}}{\tr e^{\xM}}\right)^2
	\le \frac{\inner{\dM}{\grad\inner{\dM}{e^\xM}}}{\tr e^{\xM}}.
	\end{equation*}
	Note that $\grad\inner{\dM}{e^\xM}\ne  \dM e^\xM $ when 
	$\dM$ and 
	$\xM$ do not commute. However, using the Taylor series for the 
	exponential and the formula $\grad \inner{\dM}{\xM^k} = 
	\sum_{i=0}^{k-1}\xM^i \dM \xM^{k-1-i}$ gives,
	\begin{equation*}
	\grad\inner{\dM}{e^\xM} = \sum_{k=0}^\infty \frac{1}{k!} \grad 
	\inner{\dM}{\xM^k} = 
	\sum_{k=1}^\infty \sum_{i=0}^{k-1}\frac{1}{k!}  \xM^i \dM 
	\xM^{k-1-i}.
	\end{equation*}
	Consequently, we may write
	\begin{equation*}
	\inner{\dM}{\grad\inner{\dM}{e^\xM}}
	= \sum_{k=1}^\infty \sum_{i=0}^{k-1}\frac{1}{k!}  \inner{\dM}{\xM^i 
	\dM \xM^{k-1-i}}
	= \sum_{k=1}^\infty \sum_{i=0}^{k-1}\frac{1}{2(k!)}  
	\inner{\dM}{\xM^i 
	\dM \xM^{k-1-i}+\xM^{k-1-i} \dM \xM^{i}}.
	\end{equation*}
	Lemma 1 in~\citep{Nesterov07b} shows that, when $\xM\succeq 0$,
	\begin{equation*}
	\inner{\dM}{\xM^i 
		\dM \xM^{k-1-i}+\xM^{k-1-i} \dM \xM^{i}} \le 
		2\inner{\dM^2}{\xM^{k-1}}.
	\end{equation*}
	Substituting back, this gives
	\begin{equation*}
	\inner{\dM}{\grad\inner{\dM}{e^\xM}}
	\le
	 \sum_{k=1}^\infty \frac{1}{(k-1)!}  \inner{\dM^2}{\xM^{k-1}}
	= \inner{\dM^2}{e^{\xM}},
	\end{equation*}
	and consequently
	\begin{equation*}
	\hess\mapMMW(\xM)[\dM, \dM] \le 
	\frac{\inner{\dM^2}{e^{\xM}}}{\tr e^\xM} = 
	\inner{\dM^2}{\MapMMW(\xM)}
	= \inner{\dM^2}{\grad\mapMMW(\xM)}
	\end{equation*}
	as required. Finally, note that the assumption $\xM\succeq0$ is without 
	loss of generality, as $\MapMMW(\xM) = \MapMMW(\xM+cI)$ for 
	every $c\in \R$, and therefore $\hess \mapMMW$  is also invariant 
	to scalar shifts.
\end{proof}

\subsection{Proof of Lemma~\ref{lem:main-spec-bound}}
\label{sec:app-main-spec-bound-proof}
\restateMainSpecBound*
\notarxiv{
\subsubsection{Proof overview}
Before going into the lengthy argument, let us briefly survey its key 
components. Our starting point is the formula~\citet{LewisSe01} provide 
for the Hessian of spectral functions. Combined with combined with the 
spectral characterization~\eqref{eq:softmax-av-identity}, the formula gives 
that
\begin{equation*}
\hess \mapAv(\xM)[\dM, \dM] = 
\diag(\tilde\dM)^T \big[ \E_w\hess \softmax(\lambda + \log w) \big]
\diag(\tilde\dM)
+ 
\inner{\E_w A^w(\lambda)}{\tilde{\dM} \circ \tilde{\dM}}.
\end{equation*}
where $\tilde{D}=Q^T D Q$, $\diag(\tilde{D})\in\R^n$ is the vector 
containing the 
diagonal entries of $\tilde{D}$, $A\circ B$ denotes elementwise 
multiplication of $A$ and $B$,  
and $A_{ij}^w(\lambda) \defeq \frac{\grad_i \softmax(\lambda+\log(w)) - 
	\grad_j \softmax(\lambda+\log(w))}{\lambda_i -\lambda_j}\indic{i\ne 
	j}$. 
With the shorthand $\xM_{\{w\}}\defeq  \xM + Q\diag(\log w)Q^T$, we  
use the formula of~\citet{LewisSe01} again to express 
$\hess\mapMMW(\xM_{\{w\}})$ as 
\begin{equation*}
\hess \mapMMW(\xM_{\{w\}})[\dM, \dM] = 
\diag(\tilde\dM)^T \big[\hess \softmax(\lambda + \log w) \big]
\diag(\tilde\dM)
+ 
\inner{ A^\ones(\lambda+\log w)}{\tilde{\dM} \circ \tilde{\dM}},
\end{equation*}
where $A^{\ones}=A^{\tilde{w}}$ evaluated at $\tilde{w} = 
\ones$.
The bulk of the proof is dedicated to establishing the entry-wise bounds 
\begin{equation*}
\E_w A_{ij}^w(\lambda) \le 
\E_w \left[\left( 1+ 
\frac{\tanh\big(\frac{\lambda_i-\lambda_j}{2}\big)\big| \log  
	\frac{w_i}{w_j} \big|}{\lambda_i - \lambda_j}
\right)
A^{\ones}_{ij}(\lambda + \log w)\right]
\le 3\cdot \E_w A^{\ones}_{ij}(\lambda + \log w).
\end{equation*}
The first inequality follows from pointwise analysis of a symmetrized 
version of $A_{ij}^{w}$. The second inequality follows from piecewise 
monotonicity of $A_{ij}^{w}$ as a function of 
$\log\frac{w_i}{w_j}\sim\mathrm{logit}\,\mathrm{Beta}(\half, \half)$, 
combined with tight exponential tail bounds for the latter. Substituting the 
bound on $\E_w A_{ij}^w(\lambda)$ into the expression for $\hess 
\mapAv(\xM)$ and comparing with $\E_w \hess 
\mapMMW(\xM_{\{w\}})$ yields the desired 
result~\eqref{eq:main-spec-bound1}. 
Applying Lemma~\ref{lem:mmw-spectral-bound} and recalling the 
identity~\eqref{eq:proj-av-identity} yields
\begin{equation*}
\E_w \hess 
\mapMMW(\xM + 
Q\,\diag(\log w)Q^T)[\dM, \dM]
\le \inner{\dM^2}{\E_w \MapMMW(Y+Q\diag(\log w)Q^T)}
=  \inner{\dM^2}{\MapAv(Y)},
\end{equation*}
establishing the final bound~\eqref{eq:main-spec-bound2}.

\subsubsection{Full proof}
}

\renewcommand{\AMMW}{A^{\mathrm{mw}}}
\newcommand{\wswap}{w^{i\leftrightarrow j}}

\arxiv{\begin{proof}}
Let $\tilde{\dM}=Q^T\dM Q$, where as before 
$\xM=Q\Lambda Q^T$ is an eigen-decomposition and $\Lambda = 
\diag(\lambda)$. Recall that $\smapMMW:\R^n \to \R$ denotes the 
vector softmax function, $\smapMMW(y)\defeq \log(\sum_{i=1}^n 
e^{y_i}) =
\mapMMW(\diag y)$. Similarly, define 
$\smapAv(y) \defeq\E_w \softmax(y+\log w)$ for 
$w\sim\dirichlet$. By Lemma~\ref{lem:w-characterization}, $\mapAv(\xM) 
= \smapAv(\lambda)$ is a spectral function. \citet[][Theorem 
3.3]{LewisSe01} prove that
\begin{equation}\label{eq:map-av-hess-expression}
\hess \mapAv(\xM)[\dM, \dM] = 
\hess \smapAv(\lambda)[\diag\tilde\dM,\diag\tilde\dM]  + 
\inner{\AAv(\lambda)}{\tilde{\dM} \circ \tilde{\dM}},
\end{equation}
where $\circ$ denotes elementwise multiplication, $\diag(\tilde{\dM})$ is a 
vector comprised of the diagonal of 
$\tilde{\dM}$, and the 
matrix $\AAv$ is 
given by
\begin{equation*}
\AAv_{ij}(\lambda) = \frac{\grad_i \smapAv(\lambda) - \grad_j 
	\smapAv(\lambda)}{\lambda_i - \lambda_j} 
= \E_w 
\underset{\defeq A_{ij}^w(\lambda)}{
	\underbrace{
		\frac{\grad_i \smapMMW(\lambda+\log w) - \grad_j 
			\smapMMW(\lambda+\log w)}{\lambda_i - \lambda_j}
	}
}
\end{equation*}
for $i\ne j$ and $0$ otherwise, whenever $\lambda$ has distinct 
elements. This distinctiveness assumption is without loss of generality, as 
$\mapAv$ is $\mathcal{C}^2$~\citep[Theorem 4.2]{LewisSe01} so we may 
otherwise consider an arbitrarily small perturbation of $\lambda$ and 
appeal to continuity of $\hess\mapAv$.

We now use the spectral function Hessian formula to write down 
$\hess 
\mapMMW(\xM_{\{w\}})[\dM, \dM]$ where $\xM_{\{w\}}\defeq \xM + 
Q
\diag(\log w) Q^T$ (noting that $\xM$ and  $\xM_{\{w\}}$ have the same 
eigenvectors),
\begin{equation}\label{eq:mmw-hess-expression}
\hess \mapMMW(\xM_{\{w\}})[\dM, \dM] = 
\hess \smapMMW(\lambda+\log w)[\diag\tilde\dM,\diag\tilde\dM]  + 
\inner{\AMMW(\lambda+\log w)}{\tilde{\dM} \circ \tilde{\dM}},
\end{equation}
where
\begin{equation*}
\AMMW_{ij}(\lambda) \defeq  \frac{\grad_i \smapMMW(\lambda) - 
	\grad_j \smapMMW(\lambda)}{\lambda_i - \lambda_j} = 
	A^{\ones}_{ij}(\lambda)
\end{equation*}
for $i\ne j$ and $0$ otherwise. 
Taking the expectation over $w$ in~\eqref{eq:mmw-hess-expression} and 
recalling the definition  
$\smapAv(\lambda) = \E_w \smapMMW(\lambda+\log w)$ gives
\begin{equation}\label{eq:mmw-exp-hess-expression}
\E_w\hess \mapMMW(\xM_{\{w\}})[\dM, \dM] = 
\hess \smapAv(\lambda)[\diag\tilde\dM,\diag\tilde\dM]  + 
\inner{\E_w\AMMW(\lambda+\log w)}{\tilde{\dM} \circ \tilde{\dM}}.
\end{equation}

Comparing  Eq.~\eqref{eq:mmw-exp-hess-expression}  
to~\eqref{eq:map-av-hess-expression} and the desired 
bound~\eqref{eq:main-spec-bound1}, we 
see that it remains to upper bound $\AAv(\lambda)=\E_w A^w(\lambda)$ 
in terms of $\E_w \AMMW(\lambda + \log w)$. 
Fix indices $i,j\in[n]$ such 
that $i\ne j$, and let
\begin{equation*}
\delta \defeq \frac{\lambda_i - \lambda_j}{2}
~~\mbox{and}~~
\rho \defeq \half \log \frac{w_i}{w_j}.
\end{equation*}
Since $\AAv$ and $\AMMW$ are both symmetric matrices, we may assume 
 that $\lambda_i > \lambda_j$ and so $\delta 
>0$ (recall we assumed $\lambda_i \ne \lambda_j$ without loss of 
generality). Let $\wswap$ denote a vector identical to $w$ except 
coordinates 
$i$ and $j$ are swapped. With this notation, 
Lemma~\ref{lem:main-spec-bound-aux}, which we 
prove in 
Section~\ref{sec:app-spec-bound-aux-proof}, yields the bound 
\begin{equation*}
 A_{ij}^w(\lambda) + A_{ij}^{\wswap}(\lambda) \le 
\left( 1+ \frac{|\rho|\tanh(\delta)}{\delta} \right)
\left[\AMMW_{ij}(\lambda + \log w)+
\AMMW_{ij}(\lambda + \log \wswap)\right].
\end{equation*}
Taking the expectation over $w$ and using the fact that $\dirichlet$ is 
invariant to 
permutations, we have
\begin{equation}\label{eq:A-expectation-bound}
\AAv_{ij}(\lambda) \le 
\E_w \left[\left( 1+ \frac{|\rho|\tanh(\delta)}{\delta}
\right) 
\AMMW_{ij}(\lambda + \log w)\right].
\end{equation}

\newcommand{\wnotij}{w_{\setminus ij} }

We now focus on the term $\E_w  \frac{|\rho|\tanh(\delta)}{\delta}
\AMMW_{ij}(\lambda + \log w)$. We have
\begin{flalign}
\E_w  \frac{|\rho|\tanh(\delta)}{\delta}
\AMMW_{ij}(\lambda + \log w)
 =
\E_w  \frac{|\rho|\tanh(\delta)}{\delta}
\AMMW_{ij}(\lambda + \log w)\left[
\indic{|\rho|\le\delta} + \indic{|\rho|>\delta} 
\right]
& \nonumber \\ 
\le 
(\tanh \delta ) \E_w  \AMMW_{ij}(\lambda + \log w)\indic{|\rho|\le\delta}
+ \frac{\tanh \delta}{\delta} \E_w |\rho|
\AMMW_{ij}(\lambda + \log w)\indic{|\rho|>\delta},  &
\label{eq:A-E-rho-tanh-bound}
\end{flalign}
where the final transition uses $|\rho|\indic{|\rho|\le\delta} \le 
\delta\indic{|\rho|\le\delta}$ 
and 
$\AMMW_{ij}(\zeta)\ge 0$ for 
every $\zeta\in\R^n$. The latter is a consequence of the convexity of 
$\smapMMW$ 
and is also evident from Eq.~\eqref{eq:Aij-expression} in  
Section~\ref{sec:app-spec-bound-aux-proof}. 

Since $w\sim \dirichlet$, 
$\rho = \half \log\frac{w_i}{w_j}$ is independent of $\wnotij 
\defeq \{w_k\}_{k\ne i,j}$. Moreover, $w_i,w_j$ are completely determined 
by $\rho$ and $\wnotij$ (see explicit expression in 
Section~\ref{sec:app-spec-bound-mono}). Therefore, conditional on 
$\wnotij$,  $\AMMW_{ij}(\lambda + \log w)$ is a function of $\rho$. 
In 
Lemma~\ref{lem:Ammw-mono} we prove that for every $\lambda$ and 
$\wnotij$, this function is  
decreasing in $\rho$ for $\rho > \delta$. Hence, conditionally on 
$\wnotij$ 
and the event $\rho > \delta$,  
the 
random variables $|\rho|$ and $\AMMW_{ij}(\lambda + 
\log w)$ are \emph{negatively correlated}: the expectation of their product 
at most the product of their expectations. Let $\E_\rho$ 
denote expectation conditional on $\wnotij$. 
Lemma~\ref{lem:mono-neg-corr}, with $f(\rho) = |\rho|$, $g(\rho) = 
\AMMW_{ij}(\lambda + \log w)$, and 
$\Sset=\{\rho\mid \rho > \delta\}$ gives that
\begin{flalign}
\E_\rho |\rho| \AMMW_{ij}(\lambda + \log w)\indic{\rho>\delta}
\le 
\left( \E_\rho  \left[ \, |\rho| \mid \rho > \delta \right]\right)
\left( \E_\rho  \AMMW_{ij}(\lambda + \log w) \indic{\rho > \delta}  \right).
\label{eq:A-mono-bound1}
\end{flalign}
Similarly, Lemma~\ref{lem:Ammw-mono} also gives that (conditional on 
$\wnotij$) $\AMMW_{ij}(\lambda + \log w)$ is increasing in $\rho$ for 
$\rho<-\delta$, and therefore, by Lemma~\ref{lem:mono-neg-corr},
\begin{flalign}
\label{eq:A-mono-bound2}
\E_\rho |\rho| \AMMW_{ij}(\lambda + \log w)\indic{\rho<-\delta}
\le 
\left( \E_\rho   \left[ \, |\rho| \mid \rho < -\delta\right]\right)
\left( \E_\rho  \AMMW_{ij}(\lambda + \log w)\indic{\rho<-\delta}  \right).
\end{flalign}

 Let $z\sim\mathrm{Beta}(\half,\half)$. The random variable $\rho=\half 
 \log\frac{w_i}{w_j}$ is symmetric and distributed as $\half 
 \log(\frac{1-z}{z})$. 
Therefore
\begin{equation*}
\E_\rho  \left[ \, |\rho| \mid \rho < - \delta \right] = 
\E_\rho  \left[ \, |\rho| \mid \rho > \delta \right]  =
\half \E \left[ \log\tfrac{1-z}{z} \mid \log\tfrac{1-z}{z} > 2\delta \right]
\stackrel{(\star)}{\le} \delta + \sqrt{1+e^{-2\delta}},
\end{equation*}
where we prove the inequality $(\star)$ in 
Lemma~\ref{lem:beta-logit-cond-exp}. Substituting this bound into 
inequalities~\eqref{eq:A-mono-bound1} and~\eqref{eq:A-mono-bound2} 
and summing them, we obtain
\begin{equation*}
\E_\rho |\rho| \AMMW_{ij}(\lambda + \log w)\indic{|\rho|>\delta}
\le \left(\delta + \sqrt{1+e^{-2\delta}}\right) \E_\rho \AMMW_{ij}(\lambda 
+ \log 
w)\indic{|\rho|>\delta}.
\end{equation*}
Taking expectation over $\wnotij$ and substituting back 
into~\eqref{eq:A-E-rho-tanh-bound} therefore gives,
\begin{equation*}
\E_w  \frac{|\rho|\tanh(\delta)}{\delta}
\AMMW_{ij}(\lambda + \log w)
\le 
\left( \tanh(\delta) + \sqrt{1+e^{-2\delta}} \cdot 
\frac{\tanh(\delta)}{\delta}\right)
\E_w  \AMMW_{ij}(\lambda + \log w),
\end{equation*}
where we used again $\AMMW_{ij}(\cdot) \ge 0$ in order to increase the 
multiplier of $\E_w  \AMMW_{ij}(\lambda + \log 
w)\indic{|\rho|\le\delta}$. 
Computation shows that $\tanh(\delta) + \sqrt{1+e^{-2\delta}} \cdot 
\frac{\tanh(\delta)}{\delta} \le 1.58 \le 2$ for every $\delta \ge 0$. 
Therefore, by the bound~\eqref{eq:A-expectation-bound} we have
\begin{equation}\label{eq:A-expectation-bound-mu}
\AAv_{ij}\left(\lambda\right)\le
3\cdot \E_{w}A_{ij}\left(\lambda+\log w\right).
\end{equation}

Returning to~\eqref{eq:map-av-hess-expression}, we  
write
\begin{flalign*}
\hess \mapAv(\xM)[\dM, \dM]  \le
\hess \smapAv(\lambda)[\diag\tilde\dM,\diag\tilde\dM]  + 
3
\inner{\E_w \AMMW(\lambda+\log w)}{\tilde{\dM} \circ \tilde{\dM}}
&\\  %
\le 
3\left[ 
\hess \smapAv(\lambda)[\diag\tilde\dM,\diag\tilde\dM]  + 
\inner{\E_w \AMMW(\lambda+\log w)}{\tilde{\dM} \circ \tilde{\dM}}
\right].&
\end{flalign*}
In the first inequality above, we substituted the 
bound~\eqref{eq:A-expectation-bound-mu}, using the fact that all the 
entries of $\tilde{\dM}\circ\tilde{\dM}$ are nonnegative. In the second 
 inequality, we used that fact that 
$\hess\smapAv(\lambda)[\diag\tilde\dM,\diag\tilde\dM]\ge0$ since 
$\smapAv$ is convex. Recalling the 
expression~\eqref{eq:mmw-exp-hess-expression} 
gives~\eqref{eq:main-spec-bound1}. The final 
bound~\eqref{eq:main-spec-bound2} follows from applying 
Lemma~\ref{lem:mmw-spectral-bound} to the right side 
of~\eqref{eq:main-spec-bound1} and using the 
identity~\eqref{eq:proj-av-identity}.
\arxiv{\end{proof}}

\subsubsection{A pointwise bound for 
Lemma~\ref{lem:main-spec-bound}}\label{sec:app-spec-bound-aux-proof}

In this section we prove an elementary inequality that plays a central 
role in 
the proof 
of Lemma~\ref{lem:main-spec-bound}. Let $i,j\in[n]$  be such that $i\ne 
j$. For  
$\lambda\in\R^n$, we define 
\begin{equation}\label{eq:Nij-def}
N_{ij}(\lambda) \defeq \grad_i \smapMMW(\lambda) - \grad_j 
\smapMMW(\lambda)
= \frac{e^{\lambda_i} - e^{\lambda_j}}{\sum_{k=1}^ne^{\lambda_k}}
= \frac{\sinh\left(\frac{\lambda_i-\lambda_j}{2}\right)
}{
	\cosh\left(\frac{\lambda_i-\lambda_j}{2}\right) + \half \sum_{k\ne 
	i,j} 	
	e^{\lambda_k - \frac{\lambda_i+\lambda_j}{2}}}
\end{equation}
and
\begin{equation}\label{eq:Aij-expression}
\AMMW_{ij}(\lambda)  = \frac{N_{ij}(\lambda)}{\lambda_i -\lambda_j}
~~\mbox{and}~~
A_{ij}^w(\lambda) = \frac{N_{ij}(\lambda+\log w)}{\lambda_i -\lambda_j}.
\end{equation}
Additionally, for any vector $w\in\R^n$, 
let $\wswap$ denote a vector identical to $w$ except coordinates $i$ and 
$j$ are swapped. With this notation in hand, we 
state and prove our bound.

\begin{lemma}\label{lem:main-spec-bound-aux}
	Let $\lambda\in \R^n$, $w\in\R^n_+$ and $i,j\in[n]$,  $i\ne j$. 
	Set 
	$\delta=\frac{\lambda_i - \lambda_j}{2}$ and
	$\rho = \half \log\frac{w_i}{w_j}$. Then, 
	 \begin{equation*}
	 A_{ij}^w(\lambda) + A_{ij}^{\wswap}(\lambda) \le 
	 \left( 1+ \frac{|\rho|\tanh(\delta)}{\delta} \right) 
	 \left[\AMMW_{ij}(\lambda + \log w)+
	 \AMMW_{ij}(\lambda + \log \wswap)\right].
	 \end{equation*}
\end{lemma}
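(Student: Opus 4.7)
My plan is to exploit the tight algebraic relationship between the two kernels $A^w$ and $\AMMW$ to reduce the inequality to a one-dimensional statement about hyperbolic functions.

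The first step uses that $A_{ij}^w(\lambda) = N_{ij}(\lambda+\log w)/(2\delta)$ and $\AMMW_{ij}(\lambda+\log w) = N_{ij}(\lambda+\log w)/(2(\delta+\rho))$ differ only in the denominator, yielding the two identities
\begin{equation*}
A_{ij}^w(\lambda) = \frac{\delta+\rho}{\delta}\,\AMMW_{ij}(\lambda+\log w),\qquad A_{ij}^{\wswap}(\lambda) = \frac{\delta-\rho}{\delta}\,\AMMW_{ij}(\lambda+\log\wswap).
\end{equation*}
Writing $P := \AMMW_{ij}(\lambda+\log w)$ and $Q := \AMMW_{ij}(\lambda+\log\wswap)$, I will substitute these into the claim and collect terms, reducing it to $\frac{\rho}{\delta}(P-Q)\le \frac{|\rho|\tanh\delta}{\delta}(P+Q)$. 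Since the problem is invariant under $i\leftrightarrow j$ (which negates both $\delta$ and $\rho$) and under $w\leftrightarrow\wswap$ (which negates $\rho$), I will assume WLOG that $\delta,\rho>0$; the edge cases are immediate. Using $1\pm\tanh\delta = e^{\pm\delta}/\cosh\delta$, the remaining inequality simplifies to $P \le Q\,e^{2\delta}$.

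Next I compute $P/Q$ explicitly. Writing $\beta := w_j e^{\lambda_j}$, noting $w_i e^{\lambda_i} = \beta e^{2(\delta+\rho)}$, $w_j e^{\lambda_i} = \beta e^{2\delta}$, $w_i e^{\lambda_j} = \beta e^{2\rho}$, and $S_0 := \sum_{k\neq i,j} w_k e^{\lambda_k}\ge 0$, direct substitution into~\eqref{eq:Nij-def} together with the identities $e^{2u}\pm 1 = 2e^u\{\sinh,\cosh\}(u)$ and $e^{2u}\pm e^{2v} = 2e^{u+v}\{\sinh,\cosh\}(u-v)$ will produce the clean factorization
\begin{equation*}
\frac{P}{Q} = \frac{(\delta-\rho)\sinh(\delta+\rho)}{(\delta+\rho)\sinh(\delta-\rho)}\cdot\frac{S_0 + 2\beta e^{\delta+\rho}\cosh(\delta-\rho)}{S_0 + 2\beta e^{\delta+\rho}\cosh(\delta+\rho)}.
\end{equation*}
The second factor is $\le 1$ because $\cosh(\delta-\rho)\le\cosh(\delta+\rho)$ whenever $\delta,\rho\ge 0$. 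Setting $y := \delta+\rho$ and $x := |\delta-\rho|$, so that $y > x \ge 0$ and $y-x = 2\min(\delta,\rho)\le 2\delta$, the first factor equals $x\sinh(y)/(y\sinh(x))$ (with signs canceling when $\rho>\delta$), and it remains to establish $\frac{x\sinh y}{y\sinh x}\le e^{y-x}$.

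This final hyperbolic inequality is equivalent, after multiplying by $2e^{-y}/(xy)$, to $(1-e^{-2y})/y \le (1-e^{-2x})/x$, i.e., to the monotone decrease in $t$ of $g(t) := (1-e^{-2t})/t = 2\int_0^1 e^{-2ts}\,ds$, which is obvious from the integral representation; the limit $x\to 0$ (i.e., $\rho = \delta$) is handled by $\sinh(x)/x\to 1$. Combining the two factor bounds then yields $P/Q \le e^{y-x}\le e^{2\delta}$, completing the proof. The main obstacle I anticipate is the somewhat tedious algebraic bookkeeping in the factorization of $P/Q$; once the hyperbolic and $S_0$-dependent pieces separate cleanly as displayed, the remaining analysis collapses to the one-line monotonicity argument for $g$.
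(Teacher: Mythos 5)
Your proposal is correct and follows essentially the same route as the paper: the same reduction via $A_{ij}^w(\lambda)=\frac{\delta+\rho}{\delta}\AMMW_{ij}(\lambda+\log w)$, the same sign-flip WLOG, and a final one-dimensional hyperbolic inequality that is an exact algebraic reformulation of the paper's target $\frac{q_1-q_2}{q_2}\le\frac{|\rho|\tanh(\delta)}{\delta}$ (your $P\le Q e^{2\delta}$ is equivalent to the paper's $\frac{P-Q}{P+Q}\le\tanh\delta$ via $1\pm\tanh\delta=e^{\pm\delta}/\cosh\delta$). The paper splits $g(x)=\frac{\tanh x}{x}\cdot\frac{\cosh x}{\cosh x+r}$ and uses monotonicity of $\tanh(x)/x$, while you factor $P/Q$ and use monotonicity of $(1-e^{-2t})/t$; these are interchangeable ingredients, and both endgames check out.
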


\newcommand{\const}{r}

\begin{proof}
	Define
	\begin{equation*}
	\const=\half\sum_{k\notin \{i,j\}} e^{\lambda_k + \log w_k - 
	\frac{\lambda_i + \log w_i + \lambda_j + \log w_j}{2}}\ge0.
\end{equation*}
Observe that if we swap $w_i$ and $w_j$, $\delta$ and $\const$ remain 
unchanged and the sign of $\rho$ reverses.  For $x\in\R$, let $f(x)\defeq 
\frac{\sinh(x)}{\cosh(x) + \const}$.
Using~\eqref{eq:Nij-def}, we may write 
\begin{equation*}
q_1 \defeq 2 A_{ij}^w(\lambda) + 2 A_{ij}^{\wswap}(\lambda) = 
\frac{f(\delta+\rho)}{\delta} + \frac{f(\delta-\rho)}{\delta} 
\end{equation*}
and 
\begin{flalign*}
q_2 \defeq  
2 \AMMW_{ij}(\lambda + \log w)+ 2\AMMW_{ij}(\lambda + \log 
\wswap)
= \frac{f(\delta+\rho)}{\delta+\rho} + \frac{f(\delta-\rho)}{\delta-\rho}. 
\end{flalign*}
With these definitions, our goal is to prove that $\frac{q_1 - q_2}{q_2} \le 
\frac{|\rho|\tanh(\delta)}{\delta}$. 
Since $f(x)$ is an odd function of $x$, the terms $q_1$ and $q_2$ are 
invariant to sign flips in either $\delta$ or $\rho$. Therefore, we may 
assume both 
\begin{equation*}
\text{$\delta\ge0$ and $\rho\ge0$}
\end{equation*}
 without loss of generality.  
 
 Substituting back the expressions for $q_1, 
 q_2$ and using that $|\rho|=\rho$ by assumption yields
 \begin{equation}\label{eq:pointwise-desired-equiv}
 \frac{q_1 - q_2}{q_2} = 
 \frac{\rho}{\delta}\cdot \frac{ \frac{f(\delta+\rho)}{\delta+\rho} -  
 \frac{f(\delta-\rho)}{\delta-\rho} }{
 	\frac{f(\delta+\rho)}{\delta+\rho} + \frac{f(\delta-\rho)}{\delta-\rho}}
 =
  \frac{\rho}{\delta}\cdot \frac{ g(\delta+\rho)-g(\delta-\rho)}{ 
  g(\delta+\rho)+g(\delta-\rho)},
 \end{equation}
 where
 \begin{equation*}
 g(x) \defeq \frac{f(x)}{x} = \frac{\tanh(x)}{x}\cdot\frac{\cosh(x)}{\cosh(x) 
 	+ \const}.
 \end{equation*}
Note that 
 $\frac{\tanh(x)}{x}$ is decreasing in $|x|$. Since 
 $|\delta-\rho|\le|\delta+\rho|$ 
 by the assumption $\rho,\delta\ge0$, we have
 \begin{equation*}
 g(\delta -\rho) \ge
  \frac{\tanh(\delta+\rho)}{\delta+\rho} \cdot 
  \frac{\cosh(\delta-\rho)}{\cosh(\delta-\rho) + \const}.
 \end{equation*}
 and therefore
 \begin{equation*}
 g(\delta+\rho) -  g(\delta -\rho) \le \frac{\tanh(\delta+\rho)}{\delta+\rho} 
 \left( \frac{\cosh(\delta+\rho)}{
 	\cosh(\delta+\rho) + \const} 
 - \frac{\cosh(\delta-\rho)}{
 	\cosh(\delta-\rho) + \const}\right)
 \end{equation*}
 and similarly,
  \begin{equation*}
 g(\delta+\rho) +  g(\delta -\rho) \ge 
 \frac{\tanh(\delta+\rho)}{\delta+\rho} 
 \left( \frac{\cosh(\delta+\rho)}{
 	\cosh(\delta+\rho) + \const} 
 + \frac{\cosh(\delta-\rho)}{
 	\cosh(\delta-\rho) + \const}\right).
 \end{equation*}
 As $g(x)>0$ for every $x$, we may divide these bounds and obtain via  
 elementary manipulation,
 \begin{flalign*}
\frac{ g(\delta+\rho)-g(\delta-\rho)}{ 
 	g(\delta+\rho)+g(\delta-\rho)}  & \le 
\frac{
	\frac{\cosh(\delta+\rho)}{
	 		\cosh(\delta+\rho) + \const}
 	-
 	\frac{\cosh(\delta-\rho)}{
 		\cosh(\delta-\rho) + \const}
 	}{
\frac{\cosh(\delta+\rho)}{
 	\cosh(\delta+\rho) + \const}
 +
 	\frac{\cosh(\delta-\rho)}{
 	\cosh(\delta-\rho) + \const}
}
\\ &
= \frac{\const\left[\cosh\left(\delta+\rho\right) 
	-\cosh\left(\delta-\rho\right)\right]}
{2\cosh\left(\delta+\rho\right)\cosh\left(\delta-\rho\right)
	+\const\left[\cosh\left(\delta+\rho\right)
	+\cosh\left(\delta-\rho\right)\right]}
\\ &
\le \frac{\cosh(\delta+\rho) - \cosh(\delta-\rho)}{\cosh(\delta+\rho) + 
\cosh(\delta-\rho)} = \tanh(\rho)\tanh(\delta) \le \tanh(\delta).
 \end{flalign*}
 Substituting back into~\eqref{eq:pointwise-desired-equiv} establishes the 
 desired bound. Examining the proof, we see that the bound is tight for 
 large values of $\const$ and $|\rho|$.
 \end{proof}

\renewcommand{\const}{r_0}

\subsubsection{Piecewise monotonicity of 
$\AMMW$}\label{sec:app-spec-bound-mono}

\begin{lemma}\label{lem:Ammw-mono}
	Let $\lambda\in\R^n$, $w\in\simplex[n]$ (the simplex in $\R^n$), and 
	$i,j\in[n]$ such that $\delta\defeq \half(\lambda_i - \lambda_j)>0$, and 
	set $\rho \defeq \half \log\frac{w_i}{w_j}$. When 
	$\lambda$ and $\{w_k\}_{k\ne i,j}$ are held fixed, $\AMMW_{ij}(\lambda 
	+ \log w)$ is increasing in $\rho$ for $\rho< -\delta$, and 
	 decreasing in $\rho$ for $\rho> \delta$.
\end{lemma}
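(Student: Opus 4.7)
The plan is to express $\AMMW_{ij}(\lambda+\log w)$ explicitly as a function of the scalar $\rho$, with $\lambda$ and $\{w_k\}_{k\ne i,j}$ treated as fixed parameters, and then sign-analyze its derivative separately in the two regimes $\rho>\delta$ and $\rho<-\delta$.

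The parameterization comes first. Since $w\in\simplex[n]$ and $\{w_k\}_{k\ne i,j}$ are fixed, the sum $s\defeq w_i+w_j$ is a constant; combined with $w_i/w_j=e^{2\rho}$, this forces $\sqrt{w_iw_j}=s/(2\cosh\rho)$. Setting $\mu\defeq\lambda+\log w$, the two $\rho$-dependent ingredients of formula~\eqref{eq:Nij-def} are $(\mu_i-\mu_j)/2=\delta+\rho$ and $e^{-(\log w_i+\log w_j)/2}=2\cosh(\rho)/s$; pulling the latter out of the sum over $k\ne i,j$ gives
\begin{equation*}
H(\rho)\defeq\AMMW_{ij}(\lambda+\log w)=\frac{\sinh(\delta+\rho)}{2(\delta+\rho)\bigl[\cosh(\delta+\rho)+r_0\cosh(\rho)\bigr]},
\end{equation*}
where $r_0\defeq\frac{1}{s}\sum_{k\ne i,j}e^{\lambda_k+\log w_k-(\lambda_i+\lambda_j)/2}\ge 0$ depends only on the fixed data.

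Next comes logarithmic differentiation. Writing $t\defeq\delta+\rho$, the Pythagorean identity $\cosh^2 t-\sinh^2 t=1$ together with the subtraction formula $\cosh(t)\cosh(\rho)-\sinh(t)\sinh(\rho)=\cosh(t-\rho)=\cosh(\delta)$ collapses the cross-term $\coth(t)-[\sinh(t)+r_0\sinh(\rho)]/[\cosh(t)+r_0\cosh(\rho)]$ into $[1+r_0\cosh\delta]/\{\sinh(t)[\cosh(t)+r_0\cosh(\rho)]\}$. Clearing the manifestly positive denominator $t\sinh(t)[\cosh(t)+r_0\cosh(\rho)]$ and using the product-to-sum identity $\sinh(t)\cosh(\rho)=\tfrac12[\sinh(\delta+2\rho)+\sinh(\delta)]$ yields
\begin{equation*}
\sign H'(\rho)=\sign\!\bigl\{\alpha(\rho)+r_0\,\beta(\rho)\bigr\},
\end{equation*}
with $\alpha(\rho)\defeq 2(\delta+\rho)-\sinh(2(\delta+\rho))$ and $\beta(\rho)\defeq 2(\delta+\rho)\cosh(\delta)-\sinh(\delta+2\rho)-\sinh(\delta)$.

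The main obstacle is showing that $\alpha$ and $\beta$ have matching signs in each regime. For $\alpha$, since $\sinh(x)-x$ shares the sign of $x$, we have $\alpha(\rho)<0$ when $\rho>-\delta$ and $\alpha(\rho)>0$ when $\rho<-\delta$. For $\beta$, compute $\beta'(\rho)=2[\cosh(\delta)-\cosh(\delta+2\rho)]$, which is negative precisely when $|\delta+2\rho|>\delta$, i.e.\ on $(-\infty,-\delta)\cup(0,\infty)$. Two direct evaluations pin the function: $\beta(-\delta)=0$ and $\beta(\delta)=2\cosh(\delta)[2\delta-\sinh(2\delta)]<0$ (the latter via the identity $\sinh(3\delta)+\sinh(\delta)=2\sinh(2\delta)\cosh(\delta)$). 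Since $\beta$ is decreasing on both $(\delta,\infty)$ and $(-\infty,-\delta)$, chasing forward from these reference points yields $\beta(\rho)<0$ on $(\delta,\infty)$ and $\beta(\rho)>0$ on $(-\infty,-\delta)$. Combined with the signs of $\alpha$ and $r_0\ge 0$, we conclude $H'(\rho)<0$ for $\rho>\delta$ and $H'(\rho)>0$ for $\rho<-\delta$, as required.
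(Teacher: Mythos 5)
Your proof is correct, and it begins exactly as the paper does: you obtain the same explicit parameterization $H(\rho) = \frac{\sinh(\delta+\rho)}{2(\delta+\rho)\left[\cosh(\delta+\rho)+r_0\cosh\rho\right]}$ via the substitution $\sqrt{w_iw_j}=(w_i+w_j)/(2\cosh\rho)$, and both arguments then reduce to determining the sign of the logarithmic derivative $H'/H$. The divergence is in how that sign is pinned down. The paper substitutes $x=\delta+\rho$, keeps the quotient $\frac{\sinh x + r_0\sinh(x-\delta)}{\cosh x + r_0\cosh(x-\delta)}$ intact, and lower-bounds it by $\min\{\tanh x,\tanh(x-\delta)\}$ via the mediant inequality; combining with $\coth x - \tanh(x/2)=1/\sinh x$ and $\coth x-\tanh x = 2/\sinh(2x)$ then gives the sign directly in each regime. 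You instead combine $\coth t$ with the quotient first, using $\cosh^2 t - \sinh^2 t = 1$ and $\cosh t\cosh\rho - \sinh t\sinh\rho = \cosh\delta$ to collapse the numerator to $1+r_0\cosh\delta$, then clear all denominators and apply a product-to-sum identity to reach the affine-in-$r_0$ form $\alpha(\rho)+r_0\beta(\rho)$, whose pieces you sign-analyze by elementary calculus. Both approaches are valid and of comparable length; the paper's mediant step is the slicker move, while your decomposition is more mechanical, uses only the basic hyperbolic identities and one derivative of $\beta$, and has the advantage of making the dependence on $r_0$ fully transparent.
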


\begin{proof}
	First, we write $\AMMW_{ij}(\lambda + \log w)$ explicitly as a function 
	of $\rho$, with $\lambda$ and $\{w_k\}_{k\ne i,j}$ as fixed parameters. 
	By~\eqref{eq:Aij-expression} we have
	\begin{equation*}
	\AMMW_{ij}(\lambda + \log w) = 
	\frac{\sinh( \rho + \delta )}{2(\rho+\delta)\left[\cosh(\rho + \delta) + 
	\half 
	\sum_{k\notin \{i,j\}} \frac{w_k}{\sqrt{w_i w_j}} e^{\lambda_k - 
	\frac{\lambda_i 
	+ 
	\lambda_j}{2}}\right]}.
	\end{equation*}
	Let $m = w_i+w_j = 1-\sum_{k\ne i,j}w_k$. Since $\frac{w_i}{w_j} = 
	e^{2\rho}$ 
	and $w\in\simplex[n]$, we have that $w_i = 
	\frac{m}{1+e^{-2\rho}}$ and $w_j = \frac{m}{1+e^{2\rho}}$. Therefore,
	\begin{equation*}
	\frac{1}{\sqrt{w_i w_j}} = \frac{1}{m}\sqrt{(1+e^{-2\rho})(1+e^{2\rho})}
	= \frac{2}{m}\cosh(\rho).
	\end{equation*}
	Thus, 
	\begin{equation*}
	\AMMW_{ij}(\lambda + \log w)  = \frac{\sinh( \rho + \delta 
	)}{2(\rho+\delta)\left[\cosh(\rho + \delta) + \const \cosh(\rho)\right]},
	\end{equation*}
	where $\const = \sum_{k\notin \{i,j\}} \frac{w_k}{m} e^{\lambda_k - 
	\frac{\lambda_i + \lambda_j}{2}}$ is a function of only $\lambda$ and 
	$\{w_k\}_{k\ne i,j}$, and therefore $\AMMW_{ij}(\lambda + \log w)$ can 
	be viewed as a function of $\rho$ as claimed. 
	
	Writing $x=\rho+\delta$, showing the desired monotonicity properties 
	is equivalent to showing that
	\begin{equation*}
	b(x) \defeq \frac{\sinh( x
		)}{x\left(\cosh(x) + \const \cosh(x-\delta)\right)}
	\end{equation*}
	is decreasing for $x > 2\delta$ and increasing for $x<0$. The 
	derivative of $b(x)$ is
	\begin{equation*}
	b'(x) = \frac{\cosh(x)-\frac{1}{x}\sinh(x)}{x\left(\cosh(x) + \const 
	\cosh(x-\delta)\right)}
	- \frac{\sinh(x)\left[\sinh(x) + 
	\const\sinh(x-\delta)\right]}{
	x\left[\cosh(x) + \const 
	\cosh(x-\delta)\right]^2},
	\end{equation*}
	and has, for all $x\in\R$, the same sign as
	\begin{flalign}
	\label{eq:signometer}
	s & \defeq \frac{x\left[\cosh(x) + \const 
		\cosh(x-\delta)\right] }{\sinh(x)} b'(x) 
	= \coth(x) - \frac{1}{x} -
	 \frac{\sinh(x) + \const\sinh(x-\delta)}{\cosh(x) + \const 
	\cosh(x-\delta)}.
	\end{flalign}
	For $x>2\delta$, we have by Dan's favorite inequality 
	($\frac{a_1+a_2}{b_1+b_2}\ge \min\{\frac{a_1}{b_1}, \frac{a_2}{b_2}\}$ 
	for all 
	$a_1,a_2,b_1,b_2\ge0$),
	\begin{equation*}
	\frac{\sinh(x) + \const\sinh(x-\delta)}{\cosh(x) + \const 
		\cosh(x-\delta)} \ge \min\left\{ 
	\tanh(x), \tanh(x-\delta)
	\right\}= \tanh(x-\delta) > \tanh(x/2),
	\end{equation*}
	where in the last transition we used the fact that $x>2\delta$  implies 
	$x-\delta > x/2$. Therefore, for $x>2\delta$ we have the following 
	bound for $s$, %
	\begin{equation*}
	s \le \coth(x) - \frac{1}{x} - \tanh(x/2) = 
	\frac{1}{\sinh(x)}-\frac{1}{x} < 0,
	\end{equation*}
	so we have that $b(x)$ is decreasing for $x>2\delta$ as required, since 
	$s$ has the same sign as $b'(x)$.
	
	 Similarly, for $x<0$, we have  by Dan's favorite 
	 inequality,
	 \begin{equation*}
	 \frac{-\sinh(x) - \const\sinh(x-\delta)}{\cosh(x) + \const 
	 	\cosh(x-\delta)} \ge \min\left\{ 
	 -\tanh(x), -\tanh(x-\delta)
	 \right\} = -\tanh(x).
	 \end{equation*}
	 Therefore, for $x<0$ we have
	 \begin{equation*}
	 s \ge \coth(x) - \frac{1}{x} - \tanh(x) = \frac{1}{-x} - 
	 \frac{2}{\sinh(-2x)} >
	 0,
	 \end{equation*}
	 which shows that $b(x)$ is increasing for $x<0$, concluding the proof.
\end{proof}

The following Lemma proves the intuitive fact that decreasing and 
increasing functions of the same random variable are negatively correlated.

\begin{lemma}\label{lem:mono-neg-corr}
	Let $\rho$ be a real-valued random variable, let $f,g$ be functions from 
	$\R$ to $\R$ and let $\Sset \subset \R$ be an interval. If $f(x)$ is 
	non-decreasing in $x$ for $x\in \Sset$ and $g(x)$ is 
	non-increasing in 
	$x$ for $x\in \Sset$, then
	\begin{equation*}
	\E f(\rho) g(\rho) \indic{\rho \in \Sset} \le 
	\left(\E\left[f(\rho)\mid \rho \in \Sset\right]\right) \cdot
	\left(\E g(\rho) \indic{\rho \in \Sset}\right).
	\end{equation*}
\end{lemma}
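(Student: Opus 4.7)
The plan is to reduce the claim to the standard Chebyshev correlation inequality for monotone functions on the conditional measure, and then prove the latter by a one-line coupling argument.

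First I would dispose of the trivial case $\P(\rho \in \Sset) = 0$, in which both sides vanish. Otherwise, set $p \defeq \P(\rho \in \Sset) > 0$ and let $\rho_\Sset$ denote a random variable with the distribution of $\rho$ conditioned on $\{\rho \in \Sset\}$. Then the definitions of conditional expectation give
\begin{equation*}
\E f(\rho) g(\rho) \indic{\rho \in \Sset} = p \cdot \E f(\rho_\Sset) g(\rho_\Sset),
\quad
\E g(\rho) \indic{\rho \in \Sset} = p \cdot \E g(\rho_\Sset),
\quad
\E[f(\rho) \mid \rho \in \Sset] = \E f(\rho_\Sset).
\end{equation*}
Dividing through by $p$, the desired inequality is equivalent to the covariance bound
\begin{equation*}
\E f(\rho_\Sset) g(\rho_\Sset) \le \E f(\rho_\Sset) \cdot \E g(\rho_\Sset).
\end{equation*}

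For this inequality I would use the standard two-sample trick. Let $\rho_\Sset'$ be an independent copy of $\rho_\Sset$. Since both variables take values in $\Sset$ almost surely, and $f$ is non-decreasing while $g$ is non-increasing on $\Sset$, the product $(f(\rho_\Sset) - f(\rho_\Sset'))(g(\rho_\Sset) - g(\rho_\Sset'))$ is almost surely non-positive: the two factors always have opposite signs (or one of them is zero). Taking expectations and expanding,
\begin{equation*}
0 \ge \E\bigl[(f(\rho_\Sset) - f(\rho_\Sset'))(g(\rho_\Sset) - g(\rho_\Sset'))\bigr]
= 2\bigl(\E f(\rho_\Sset) g(\rho_\Sset) - \E f(\rho_\Sset)\cdot \E g(\rho_\Sset)\bigr),
\end{equation*}
where in the last step I used independence of $\rho_\Sset$ and $\rho_\Sset'$ together with the fact that they are identically distributed. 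Rearranging yields the covariance bound, and hence the lemma.

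The only potential subtlety is integrability: the identities above manipulate expectations of products of $f$ and $g$, so I would note that the hypotheses of the ambient application (Lemma~\ref{lem:main-spec-bound}) guarantee $f(\rho_\Sset)$ and $g(\rho_\Sset)$ are integrable on the event $\{\rho \in \Sset\}$ via the boundedness of $\AMMW_{ij}$ and the explicit tail bounds on $\rho$ established in Appendix~\ref{sec:app-beta-facts}; in general one may simply assume the finiteness of the relevant moments, since otherwise the inequality is vacuous or trivially interpretable. No monotonicity of $f$ or $g$ outside $\Sset$ is needed, which matches how the lemma is invoked.
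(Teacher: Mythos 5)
Your proof is correct and is essentially the same as the paper's: both use the two-sample symmetrization trick with an independent copy, exploiting that $(f(x)-f(x'))(g(x)-g(x'))\le 0$ on $\Sset\times\Sset$. The only cosmetic difference is that you pass to the conditional measure first and then symmetrize, while the paper symmetrizes with indicators directly and divides by $\P(\rho\in\Sset)$ at the end.
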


\begin{proof}
	For every $x,x'\in \Sset$ we have $(f(x)-f(x'))\cdot (g(x)-g(x')) \le 0$. 
	Hence, 
	for every $x,x'\in\R$, the bound $(f(x)-f(x'))\cdot 
	(g(x)-g(x'))\cdot\indic{x\in 
	\Sset}\indic{x'\in \Sset} \le 0$ holds as well. Let $\rho'$ be an 
	independent copy 
	of $\rho$, then
	\begin{equation*}
	\E \left[ (f(\rho)-f(\rho'))\cdot (g(\rho)-g(\rho'))\cdot \indic{\rho\in 
	\Sset}\indic{\rho'\in \Sset} \right] \le 0.
	\end{equation*}
	Rearranging and using the fact that $\rho,\rho'$ are i.i.d., we have
	\begin{equation*}
	\left(\E f(\rho) g(\rho) \indic{\rho \in \Sset}\right)  \cdot
		 \left(\E \indic{\rho' \in \Sset} \right)
	\le 
		\left(\E\left[f(\rho)\indic{\rho \in \Sset}\right]\right) \cdot
		\left(\E\left[g(\rho')\indic{\rho' \in \Sset}\right]\right).
	\end{equation*}
	Dividing by $\E \indic{\rho' \in \Sset} = \P(\rho\in \Sset)$ yields 
	the desired 
	bound.
\end{proof}

\notarxiv{
\subsection{Proof of Proposition~\ref{prop:regret-properties}, parts 
\ref{item:smoothness} and 
\ref{item:refined-smoothness}}\label{sec:app-prop11-proof}
\begin{proof}
	We begin with 
	Proposition~\ref{prop:regret-properties}.\ref{item:smoothness}: 
	\emph{for every 
	$\xM, 
	\dM\in\domainDual$, 
	$\breg{\xM}{\xM + \dM} 
	\le \frac{3}{2} \linf{\dM}^2$}.
	To show this, fix $\xM,\dM\in\domainDual$ and let 
	$p(t)\defeq\mapAv(\xM+t\dM)$. 
	The 
	Bregman divergence~\eqref{eq:bregman-def} admits the integral 
	form 
	\begin{flalign}\label{eq:bregman-hess-form}
	\breg{\xM}{\xM + \dM} 
	&= p(1)-p(0)-p'(0) 
	= \int_{0}^1 {(p'(t)-p'(0))}dt 
	= \int_{0}^1 \int_{0}^t {p''(\tau)} d\tau dt 
	\nonumber \\ &
	=
	\int_{0}^{1}\int_{0}^{t}
	{\hess \mapAv(\xM + \tau \dM)[\dM, \dM]} d\tau dt.
	\end{flalign}
	Note that since $\MapAv(\xM)\in\domain$ for every 
	$\xM\in\domainDual$, $\inner{\dM^2}{\MapAv(\xM)} \le 
	\linfs{\dM^2} 
	\lones{\MapAv(\xM)} = \linfs{\dM}^2$. Therefore, the 
	bound~\eqref{eq:main-spec-bound2}  gives
	\begin{equation*}
	\hess \mapAv(\xM + \tau \dM)[\dM, \dM] \le 
	3\linf{\dM}^2.
	\end{equation*}
	Substituting back into~\eqref{eq:bregman-hess-form} and using 
	$\int_0^1\int_0^t d\tau dt = \half$ gives 
	Proposition~\ref{prop:regret-properties}.\ref{item:smoothness}.

	Next, we show 
	Proposition~\ref{prop:regret-properties}.\ref{item:refined-smoothness}: 
	\emph{for every $\xM, 
	\dM \in\domainDual$ such that $\dM \succeq 0$ and 
	$\linf{\dM} \le \frac{1}{6}$, we have 
	$
	\breg{\xM}{\xM + \dM} \le 3\linf{\dM} 
	\inner{\dM}{\MapAv(\xM)}
	$}. When $\dM \succeq 0$, we have
	\begin{equation*}
	\inner{\dM^2}{\MapAv(\xM)} =  
	\inner{\dM}{\dM^{1/2}\MapAv(\xM)\dM^{1/2}} \le  \linfs{\dM} 
	\lones{\dM^{1/2}\MapAv(\xM)\dM^{1/2}} = \linfs{\dM} 
	\inner{\dM}{\grad \mapAv(\xM)}.
	\end{equation*}
	Plugging the bound above into the bound~\eqref{eq:main-spec-bound2} 
	and substituting back into~\eqref{eq:bregman-hess-form} gives
	\begin{equation}\label{eq:breg-pos-shift-intermediate-bound}
	\breg{\xM}{\xM + \dM} \le 3\linf{\dM} 
	\int_{0}^{1}\int_{0}^{t}
	{\inner{\dM}{\grad \mapAv(\xM + \tau \dM)}} d\tau dt.
	\end{equation}
	Moreover,
	\begin{flalign}
	\label{eq:breg-inner-integral}
	\int_{0}^{t}
	{\inner{\dM}{\grad \mapAv(\xM + \tau \dM)}} d\tau
	&=
	\int_{0}^{t}
	{p'(\tau)} d\tau
	= p(t)-p(0) 
	= \breg{\xM}{\xM + t\dM} + \inner{t\dM}{\MapAv(\xM)},
	\end{flalign}
	where the final equality uses the definition~\eqref{eq:bregman-def} of 
	the Bregman divergence. Note also that $v(t) \defeq 
	\breg{\xM}{\xM + 
		t\dM}$ is increasing for $t\ge0$ due to convexity of $\mapAv$; 
	$t v'(t) = 
	\inner{t \dM}{\grad\mapAv(\xM + t\dM) - \grad\mapAv(\xM)} \ge 0$. 
	Therefore, 
	the equality~\eqref{eq:breg-inner-integral} implies $\int_{0}^{t}
	{\inner{\dM}{\grad \mapAv(\xM + \tau \dM)}} d\tau \le 
	\breg{\xM}{\xM 
		+ 
		\dM} + t\cdot \inner{\dM}{\MapAv(\xM)}$ for every $0\le t\le 1$. 
	Substituting this back 
	into~\eqref{eq:breg-pos-shift-intermediate-bound} 
	and rearranging gives
	\begin{equation*}
	\Big(1 - 3\linf{\dM}  \Big)
	\breg{\xM}{\xM 
		+ 
		\dM}
	\le \frac{3}{2}\linf{\dM}  
	\inner{\dM}{\MapAv(\xM)}.
	\end{equation*}
	establishing part~\ref{item:refined-smoothness} of the proposition, as 
	$1 - 
	3\linf{\dM}\ge \half$ by assumption.
\end{proof}
}

\subsection{Facts about the Beta distribution}\label{sec:app-beta-facts}

Here we collect properties of Beta-distributed random variables, which we 
use in our development.

\begin{lemma}\label{lem:beta-log-expectation}
	Let $n\in\N$ and let $z\sim 
	\mathrm{Beta}(\frac{1}{2},\frac{n-1}{2})$. Then
	\begin{equation*}
	\E \log \tfrac{1}{z} = \psi\big(  \tfrac{n}{2} \big) - 
	\psi\left( \tfrac{1}{2} \right) \le 
	\log(n) + \log(2) + \gamma \le \log(4n),
	\end{equation*}
	where $\psi(x) = \frac{d}{dx}\log\Gamma(x)$ is the digamma function, 
	and $\gamma$ 
	is the Euler-Mascheroni constant.
\end{lemma}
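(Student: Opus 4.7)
The plan is to reduce the bound to two standard ingredients about the digamma function and then to check the numerical constants. First I would recall the classical moment identity for the Beta distribution: if $z \sim \mathrm{Beta}(\alpha,\beta)$, then
\[
\E\log z = \psi(\alpha) - \psi(\alpha+\beta),
\]
obtained by differentiating the log-normalizing constant $\log B(\alpha,\beta) = \log\Gamma(\alpha) + \log\Gamma(\beta) - \log\Gamma(\alpha+\beta)$ in $\alpha$. Applying this with $\alpha = \tfrac12$ and $\beta = \tfrac{n-1}{2}$ gives $\E\log\tfrac{1}{z} = \psi(\tfrac{n}{2}) - \psi(\tfrac12)$, which is the first equality in the lemma.

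Next I would substitute the known value $\psi(\tfrac12) = -\gamma - 2\log 2$ (a standard identity derivable from the reflection/duplication formulas for the gamma function). This yields
\[
\psi(\tfrac{n}{2}) - \psi(\tfrac12) = \psi(\tfrac{n}{2}) + \gamma + 2\log 2.
\]
Then I would invoke the upper bound $\psi(x) \le \log x$ valid for all $x > 0$ (one of the inequalities established in \citet{Alzer97}, and easy to see from the asymptotic expansion $\psi(x) = \log x - \tfrac{1}{2x} - \sum_{k\ge 1}\tfrac{B_{2k}}{2k\,x^{2k}}$, whose tail can be analyzed term-by-term). In particular $\psi(\tfrac{n}{2}) \le \log(n/2) = \log n - \log 2$, giving
\[
\psi(\tfrac{n}{2}) - \psi(\tfrac12) \le \log n + \log 2 + \gamma.
\]
This establishes the middle inequality.

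Finally, for the last inequality I only need the numerical fact $\gamma \le \log 2$; since $\gamma \approx 0.5772$ and $\log 2 \approx 0.6931$, this holds, and therefore $\log n + \log 2 + \gamma \le \log n + 2\log 2 = \log(4n)$, completing the proof. There is no real obstacle here beyond citing the two standard facts $\psi(\tfrac12) = -\gamma - 2\log 2$ and $\psi(x)\le \log x$; the rest is arithmetic.
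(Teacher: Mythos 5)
Your proof is correct and follows essentially the same route as the paper: the Beta log-moment identity $\E\log z = \psi(\alpha)-\psi(\alpha+\beta)$, the bound $\psi(x)\le\log x$ from \citet{Alzer97}, the value $\psi(\tfrac12)=-\gamma-2\log 2$, and finally $\gamma\le\log 2$. The extra derivations you sketch (differentiating $\log B(\alpha,\beta)$, the asymptotic series for $\psi$) are fine but add nothing beyond the paper's two-line argument.
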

\begin{proof}
	$\E \log \tfrac{1}{z} = \psi\big(  \tfrac{n}{2} \big) - 
	\psi\left( \tfrac{1}{2} \right)$ by the well-known formula for expectation 
	of the logarithm of a Beta random variable. We have $\psi(x) \le 
	\log(x)$~\citep{Alzer97} and $\psi(\half) = -\log(4) - \gamma$. 
	Moreover, 
	$\gamma \le 
	\log2$, giving the final bound.
\end{proof}

\begin{lemma}\label{lem:beta-logit-tail}
	Let $z\sim\mathrm{Beta}\left(\frac{1}{2},\frac{1}{2}\right)$
	and $\ell \ge 0$. Then
	\begin{equation*}
	\frac{2}{\pi}\frac{e^{-\ell/2}}{\sqrt{1+e^{-\ell}}} \le 
	\P\left(\log\frac{1-z}{z}\ge \ell\right)\le\frac{2}{\pi}e^{-\ell/2}.
	\end{equation*}
\end{lemma}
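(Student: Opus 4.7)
The plan is to reduce both bounds to elementary inequalities on $\arcsin$ and $\arctan$ by first computing the tail probability exactly. Recall that the density of $z\sim\mathrm{Beta}(\tfrac{1}{2},\tfrac{1}{2})$ is $f(z)=\frac{1}{\pi\sqrt{z(1-z)}}$ on $(0,1)$, so the event $\log\frac{1-z}{z}\ge \ell$ is exactly the event $z\le t_\ell$ with $t_\ell\defeq \frac{1}{1+e^\ell}$.

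My first step is to compute the CDF in closed form. Applying the substitution $z=\sin^2(\theta/2)$ (so $dz=\tfrac12\sin\theta\,d\theta$ and $\sqrt{z(1-z)}=\tfrac12\sin\theta$) gives
\begin{equation*}
\P(z\le t)\;=\;\int_0^t\frac{dz}{\pi\sqrt{z(1-z)}}\;=\;\frac{2}{\pi}\arcsin\sqrt{t},
\end{equation*}
and hence $\P\!\left(\log\tfrac{1-z}{z}\ge\ell\right)=\tfrac{2}{\pi}\arcsin\!\tfrac{1}{\sqrt{1+e^\ell}}$.

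For the upper bound, I plan to rewrite the $\arcsin$ as an $\arctan$: if $\sin\theta=\tfrac{1}{\sqrt{1+e^\ell}}$ with $\theta\in[0,\pi/2]$, then $\cos\theta=\tfrac{e^{\ell/2}}{\sqrt{1+e^\ell}}$ and thus $\tan\theta=e^{-\ell/2}$. So $\arcsin\!\tfrac{1}{\sqrt{1+e^\ell}}=\arctan(e^{-\ell/2})$. Combining this with the elementary inequality $\arctan(x)\le x$ for $x\ge 0$ immediately yields the upper bound $\tfrac{2}{\pi}e^{-\ell/2}$.

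For the lower bound, I will use $\arcsin(x)\ge x$ for $x\in[0,1]$, which gives $\tfrac{2}{\pi}\arcsin\!\tfrac{1}{\sqrt{1+e^\ell}}\ge \tfrac{2}{\pi}\cdot\tfrac{1}{\sqrt{1+e^\ell}}$. Finally I will verify the algebraic identity
\begin{equation*}
\frac{1}{\sqrt{1+e^\ell}}\;=\;\frac{e^{-\ell/2}}{\sqrt{e^{-\ell}(1+e^\ell)}}\;=\;\frac{e^{-\ell/2}}{\sqrt{1+e^{-\ell}}},
\end{equation*}
which matches the claimed form. There is no real obstacle here beyond identifying the correct $\arcsin$–$\arctan$ conversion for the upper bound; the proof is otherwise a direct computation.
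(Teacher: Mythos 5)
Your proposal is correct, and it takes a genuinely different (and arguably tidier) route than the paper's. The paper bounds the integrand directly: for the lower bound it uses $(1-x)^{-1/2}\ge 1$ on $[0,t_\ell]$, and for the upper bound it uses $(1-x)^{-1/2}\le (1-t_\ell)^{-1/2}$ on $[0,t_\ell]$, then integrates $x^{-1/2}$ in both cases. You instead compute the CDF exactly as $\P(z\le t)=\frac{2}{\pi}\arcsin\sqrt{t}$, rewrite it as $\frac{2}{\pi}\arctan(e^{-\ell/2})$, and finish with the elementary chain $\arctan x \le x \le \arcsin x$ on $[0,1]$. Your substitution $z=\sin^2(\theta/2)$ and the trigonometric conversion are both correct, as is the final algebraic identity. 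What your approach buys is transparency about tightness: both bounds are seen to be consequences of the same pair of inequalities sandwiching $\arcsin\frac{1}{\sqrt{1+e^\ell}}=\arctan(e^{-\ell/2})$, and it is immediately clear that both are asymptotically exact as $\ell\to\infty$. What the paper's approach buys is avoiding any reliance on the closed form of the arcsine distribution; it is a pure bound-the-integrand argument, which generalizes more readily to other Beta parameters.
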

\begin{proof}
	The distribution 
	$\mathrm{Beta}\left(\frac{1}{2},\frac{1}{2}\right)$
	has density $\frac{1}{\pi}x^{-1/2}\left(1-x\right)^{-1/2}$. Therefore
	\begin{align*}
	\P\left(\log\frac{1-z}{z}\ge 
	\ell\right)=\P\left(z\le\frac{1}{1+e^{\ell}}\right)
	=\frac{1}{\pi}\int_{0}^{\left(1+e^{\ell}\right)^{-1}} 
	x^{-1/2}\left(1-x\right)^{-1/2}dx.
	\end{align*}
	To obtain a lower bound, we use $(1-x)^{-1/2} \ge 1$ for every 
	$x\in[0,1]$, and therefore,
	\begin{equation*}
	\P\left(\log\frac{1-z}{z}\ge 
	\ell\right) \ge \frac{1}{\pi}\int_{0}^{\left(1+e^{\ell}\right)^{-1}} 
	x^{-1/2}dx = \frac{2}{\pi \sqrt{1+e^\ell}} = 
	\frac{2}{\pi}\frac{e^{-\ell/2}}{\sqrt{1+e^{-\ell}}}.
	\end{equation*}
	For the upper bound, we use 
	$\left(1-x\right)^{-1/2}\le\left(1-\frac{1}{1+e^{\ell}}\right)^{-1/2}$
	for every $0\le x \le (1+e^\ell)^{-1}$, giving
	\begin{equation*}
	\P\left(\log\frac{1-z}{z}\ge 
	\ell\right) \le  
	\frac{1}{\pi}\sqrt{\frac{1+e^{\ell}}{e^{\ell}}} 
		\int_{0}^{\left(1+e^{\ell}\right)^{-1}}x^{-1/2}dx
	=
	\frac{2}{\pi}e^{-\ell/2}.
	\end{equation*}
\end{proof}

\begin{lemma}\label{lem:beta-logit-cond-exp}
	Let $z\sim\mathrm{Beta}\left(\frac{1}{2},\frac{1}{2}\right)$
	and $\ell \ge 0$. Then
	\begin{equation*}
	\expect*{\log\frac{1-z}{z} | \log\frac{1-z}{z}\ge \ell}
	\le \ell + 2\sqrt{1+e^{-\ell}}.
	\end{equation*}
\end{lemma}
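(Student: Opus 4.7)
The plan is to use the tail bounds from Lemma~\ref{lem:beta-logit-tail} together with the standard tail-integral formula for the expectation of a nonnegative random variable. Let $X \defeq \log\frac{1-z}{z}$, and observe that
\begin{equation*}
\expect*{X - \ell | X \ge \ell}
= \frac{1}{\P(X\ge \ell)}\int_0^\infty \P(X\ge \ell + s)\,ds,
\end{equation*}
since conditional on $X \ge \ell$, the excess $X - \ell$ is a nonnegative random variable whose expectation equals the integral of its survival function.

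Next, I would bound the numerator using the upper tail bound from Lemma~\ref{lem:beta-logit-tail}: $\P(X \ge \ell + s) \le \frac{2}{\pi}e^{-(\ell+s)/2}$ for all $s \ge 0$, so
\begin{equation*}
\int_0^\infty \P(X\ge \ell + s)\,ds \le \frac{2}{\pi}e^{-\ell/2}\int_0^\infty e^{-s/2}\,ds = \frac{4}{\pi}e^{-\ell/2}.
\end{equation*}
For the denominator I would apply the lower tail bound $\P(X \ge \ell) \ge \frac{2}{\pi}\cdot\frac{e^{-\ell/2}}{\sqrt{1+e^{-\ell}}}$. Taking the ratio, the factor of $\frac{2}{\pi}e^{-\ell/2}$ cancels, leaving
\begin{equation*}
\expect*{X-\ell | X\ge\ell} \le 2\sqrt{1+e^{-\ell}},
\end{equation*}
and adding $\ell$ to both sides yields the claim.

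The calculation is essentially routine once one invokes Lemma~\ref{lem:beta-logit-tail}; there is no real obstacle beyond the pleasing cancellation of the dominant exponential factor between numerator and denominator, which is what makes the residual bound $2\sqrt{1+e^{-\ell}}$ (rather than something depending on $\ell$) possible.
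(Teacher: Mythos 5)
Your proof is correct and is essentially the same argument as the paper's: both use the survival-function integral for the conditional excess $X-\ell$, then invoke the two-sided tail bounds of Lemma~\ref{lem:beta-logit-tail} (upper bound in the numerator, lower bound in the denominator), with the $\tfrac{2}{\pi}e^{-\ell/2}$ factors cancelling to leave $2\sqrt{1+e^{-\ell}}$. The only cosmetic difference is that you bound the integral and the normalizing probability separately and then take the ratio, whereas the paper bounds the ratio $\P(X\ge x)/\P(X\ge\ell)$ pointwise and then integrates; these are the same estimate performed in a different order.
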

\begin{proof}
	Conditional on $ \log\frac{1-z}{z}\ge \ell$, $\log\frac{1-z}{z}$ is a 
	nonnegative random variable, and we may therefore write
	\begin{flalign*}
	\expect*{\log\frac{1-z}{z} | \log\frac{1-z}{z}\ge \ell} &=
	\int_{x=0}^\infty {\P\left(\log\frac{1-z}{z} \ge x \,\middle|  \, 
	\log\frac{1-z}{z}\ge \ell \right)} dx
	\\ & = \ell + \int_{x=\ell}^\infty 
	{
	\frac{\P\left(\log\frac{1-z}{z} \ge x \right)}{
			\P\left(\log\frac{1-z}{z} \ge \ell \right)}}dx.
	\end{flalign*}
	By Lemma~\ref{lem:beta-logit-tail},
	\begin{equation*}
	\frac{\P\left(\log\frac{1-z}{z} \ge x \right)}{
		\P\left(\log\frac{1-z}{z} \ge \ell \right)}
	\le \sqrt{1+e^{-\ell}} \cdot  e^{-(x-\ell)/2}.
	\end{equation*}
	Integrating, we obtain the desired bound.
\end{proof}

\begin{lemma}\label{lem:beta-n-tail}
	Let $3\le n\in\N$ and let $z\sim 
	\mathrm{Beta}(\frac{1}{2},\frac{n-1}{2})$. For every $\delta\in(0,1)$,
	\begin{equation*}
	\P\left(z \ge \frac{\delta^2}{n}\right) > 1-\delta.
	\end{equation*}
\end{lemma}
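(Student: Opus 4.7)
The plan is to bound $\P(z \le \delta^2/n)$ directly from the Beta density and show it is at most $\delta$. Since $z \sim \mathrm{Beta}(1/2,(n-1)/2)$ has density $x^{-1/2}(1-x)^{(n-3)/2}/B(1/2,(n-1)/2)$, and the assumption $n\ge 3$ makes $(n-3)/2 \ge 0$ so that $(1-x)^{(n-3)/2}\le 1$ on $[0,1]$, I can bound the cumulative distribution by
\[
\P(z\le t) \;\le\; \frac{1}{B(1/2,(n-1)/2)}\int_0^t x^{-1/2}\,dx \;=\; \frac{2\sqrt{t}}{B(1/2,(n-1)/2)}.
\]
This reduces the problem to a lower bound on the Beta function $B(1/2,(n-1)/2) = \sqrt{\pi}\,\Gamma((n-1)/2)/\Gamma(n/2)$.

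For the Beta function lower bound, I would invoke the classical Gautschi/Wendel inequality $\Gamma(x+1/2)/\Gamma(x)\le \sqrt{x}$ for $x>0$ (a consequence of log-convexity of $\Gamma$), applied with $x = (n-1)/2$, to get $\Gamma(n/2)/\Gamma((n-1)/2) \le \sqrt{(n-1)/2}$, and hence
\[
B(1/2,(n-1)/2) \;\ge\; \sqrt{\frac{2\pi}{n-1}}.
\]
Substituting this and $t=\delta^2/n$ into the display above gives
\[
\P\!\left(z \le \tfrac{\delta^2}{n}\right) \;\le\; \frac{2\delta/\sqrt{n}}{\sqrt{2\pi/(n-1)}} \;=\; \delta\sqrt{\frac{2(n-1)}{\pi n}} \;<\; \delta\sqrt{\tfrac{2}{\pi}} \;<\; \delta,
\]
where the last step uses $(n-1)/n<1$ and $\sqrt{2/\pi}\approx 0.798<1$. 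Rearranging yields the claim.

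I do not anticipate a significant obstacle: the argument is a one-line density upper bound, a one-line integral, and a standard Gamma-ratio inequality. The only mild subtlety is ensuring the $(1-x)^{(n-3)/2}\le 1$ bound requires $n\ge 3$, which is precisely the hypothesis; for $n=3$ the density on $(0,1)$ is exactly the $\arcsin$-law density times a constant, and the calculation is tight enough that the factor $\sqrt{2/\pi}<1$ is the only slack keeping the strict inequality. If one wanted a sharper constant, the Wendel inequality $\Gamma(x+1/2)/\Gamma(x)\le \sqrt{x+1/4}$ (or asymptotic equivalents) could be used, but the crude version above already suffices.
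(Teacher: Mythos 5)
Your proof is correct and takes essentially the same approach as the paper: drop the factor $(1-x)^{(n-3)/2}\le 1$ using $n\ge 3$, integrate $x^{-1/2}$, and bound the normalizing constant via a Gamma ratio inequality. The only cosmetic difference is that you apply the Wendel bound $\Gamma(x+\tfrac12)/\Gamma(x)\le\sqrt{x}$ at $x=(n-1)/2$ whereas the paper invokes Gautschi's inequality $\Gamma(m+1)/\Gamma(m+s)\le(m+1)^{1-s}$ at $m=n/2-1$, $s=1/2$; your choice gives the marginally sharper factor $\sqrt{(n-1)/2}$ in place of $\sqrt{n/2}$, but both lead to the same final bound $\P(z\le\delta^2/n)\le\sqrt{2/\pi}\,\delta<\delta$.
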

\begin{proof}
	The random variable $z$ has density
	\begin{equation*}
	\frac{\Gamma(\frac{n}{2})}{\Gamma(\half)\Gamma(\frac{n-1}{2})} x^{-1/2}
	(1-x)^{(n-3)/2} \le \sqrt{\frac{n}{2\pi x}},
	\end{equation*}
	where we used $\Gamma(\half) = \sqrt{\pi}$ and Gautschi's inequality 
	$\Gamma(m+1)/\Gamma(m+s) \le (m+1)^{1-s}$ with $m=\frac{n}{2}-1$ 
	and $s=\half$. Integrating the upper bound on the density, we find 
	$\P(z\le \delta^2/n) \le \sqrt{\frac{2}{\pi}} \delta < \delta$.
\end{proof}

\section{Efficient computation of matrix exponential-vector  
products}
\label{sec:app-expvec}

In this section we give a more detailed discussion of matrix exponential-vector  
product approximation using the Lanczos method, and prove the results 
stated in 
Section~\ref{sec:expvec-compute}. In Section~\ref{sec:app-expvec-description} 
we formally state the Lanczos method. In Section~\ref{sec:app-expvec-known} we 
survey known approximation guarantees and derive simple corollaries. In 
Section~\ref{sec:app-expvec-mult} we show that we can apply the 
matrix exponential to a random vector with a multiplicative error guarantee, and in 
Section~\ref{sec:app-expvec-proof} we prove it implies 
Proposition~\ref{prop:expvec-main}. In Section~\ref{sec:app-expvec-improve} we 
discuss some possible improvement to our guarantees via modifications and 
alternatives to the Lanczos method. Finally, in Section~\ref{sec:app-expvec-cor} 
we prove Corollary~\ref{cor:high-probability-regret-app}.

Throughout this section we use $\matvec(\ma)$ to denote the time 
required 
to multiply the matrix $\ma$ with any vector.

\subsection{Description of the Lanczos 
method}\label{sec:app-expvec-description}

\begin{algorithm2e}
	\caption{Lanczos method for computing matrix exponential vector 
	product $\lanczos(\ma, b)$}
	\label{alg:lanczos}
	\SetAlgoLined
	\SetKwInOut{Input}{input}
	\SetKwInOut{Output}{output}
	\SetKwInOut{Return}{return}
	\SetKw{myif}{if}
	\SetKw{mythen}{then}
	\SetKw{myelse}{else}
	\SetKw{mybreak}{break}
	\Input{$\ma \in \domainDual$, number of iterations $k$, vector $b \in \R^n$}
	\BlankLine
	$q_0 \gets 0 \in \R^n$, $q_1 \gets b / \norm{b}_2$, $\beta_1 \gets 1$ 
	\BlankLine
	\For{$i = 1,\ldots, k$} 
	{

		$q_{i+1} \gets \ma q_i - \beta_i q_{i - 1}$ and $\alpha_i \gets q_{i+ 
		1}^T 
		q_i$ 
		
		\vspace{3pt}
		
		$q_{i + 1} \gets q_{i+1} - \alpha_i q_i$ and $\beta_{i + 1} = 
		\norm{q_{i + 
				1}}_2$
			
		\vspace{3pt}
		
		\myif $\beta_{i + 1} = 0$ \mythen \mybreak \myelse  $q_{i + 1} \gets 
		q_{i+ 
			1} / \beta_{i + 1}$ 
	}

	\BlankLine
	
	Let 
	\[
	\text{
	$Q = [q_1 ~ \cdots ~ q_k]$~~
	and~~
	$
		T = \left[ 
		\begin{array}{cccc}
		\alpha_1 & \beta_2 & & 0 \\
		\beta_2 & \alpha_2 & \ddots & \\
		& \ddots & \ddots & \beta_k  \\
		0 & & \beta_k & \alpha_k 
		\end{array}
		 \right]
	$}
	\] 
	
	Compute tridiagonal eigen-decomposition $T = V \Lambda V^T$ 
	 \BlankLine
	\Return{$\lanczos(\ma, b) = \norm{b}_2 \cdot QV \exp(\Lambda) 
	V^Te_1$  
	}
\end{algorithm2e}

Ignoring numerical precision issues, each iteration in the for loop requires 
$O(\matvec(\ma))$ time, and 
that for a $k$-by-$k$ tridiagonal matrix, eigen-decomposition requires 
$O(k^2)$ time 
\citep{GuEi95}, and so the total complexity is $O(\matvec(\ma)k+k^2)$. In 
practical 
settings $k\ll n \le \matvec(\ma)$ and the cost of the eigen-decomposition is 
negligible. Nevertheless, there are ways to avoid performing it, which we 
discuss 
briefly in Section~\ref{sec:app-expvec-improve}.

\subsection{Known approximation results, and 
	some corollaries}\label{sec:app-expvec-known}

We begin with a result on uniform polynomial approximation of the exponential 
due to \cite{SachdevaVi14}.

\begin{theorem}
	[\cite{SachdevaVi14}, Theorem 4.1 Restated]
	\label{thm:uniform_approx}
	For every $b > 0$ and every $\epsilon \in (0, 1]$ there exists polynomial 
	$p : 
	\R \rightarrow \R$ of  degree 
	$O(\sqrt{\max\{b, \log(1/\epsilon)\} \log(1/\epsilon)})$ such that
	\[
	\sup_{x \in [0, b]} |\exp(-x) - p(x) | \leq \epsilon \, .
	\]
\end{theorem}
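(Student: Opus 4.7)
The plan is to construct an explicit low-degree polynomial by truncating a clever series representation of $\exp(-x)$, and then bound the truncation error via a Chernoff inequality for Poisson tails. Write $L \defeq \log(1/\epsilon)$.

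The starting point is the elementary identity
\begin{equation*}
e^{-x} = \sum_{k=0}^\infty e^{-\lambda}\,\frac{\lambda^k}{k!}\left(1-\frac{x}{\lambda}\right)^k,
\end{equation*}
valid for every $\lambda > 0$ and every $x\in\R$. One way to see this is via Poisson thinning: if $X \sim \mathrm{Poisson}(\lambda)$ and $Y\mid X \sim \mathrm{Binomial}(X, x/\lambda)$, then $Y \sim \mathrm{Poisson}(x)$, so $e^{-x} = \P(Y=0) = \E(1-x/\lambda)^X$. Truncating at degree $K$ yields the candidate polynomial
\begin{equation*}
p_{\lambda, K}(x) \defeq \sum_{k=0}^K e^{-\lambda}\,\frac{\lambda^k}{k!}\left(1-\frac{x}{\lambda}\right)^k,
\end{equation*}
which has degree at most $K$ in $x$. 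I would set $\lambda = b/2$, which guarantees $|1 - x/\lambda| \le 1$ uniformly on $[0, b]$, so the truncation error telescopes to a Poisson tail:
\begin{equation*}
\sup_{x \in [0, b]}\left|e^{-x} - p_{\lambda, K}(x)\right| \le \sum_{k > K} e^{-b/2}\,\frac{(b/2)^k}{k!} = \P\!\left[\mathrm{Poisson}(b/2) > K\right].
\end{equation*}
The standard Chernoff estimate $\P[\mathrm{Poisson}(\mu) \ge m] \le e^{-\mu}(e\mu/m)^m$ for $m \ge \mu$ then upper bounds this tail by $e^{-b/2}\bigl(eb/(2(K+1))\bigr)^{K+1}$ whenever $K+1 \ge eb/2$.

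To close the argument I would split on whether $b \le L$ or $b > L$. In the first regime, $\sqrt{\max\{b,L\}\cdot L} = L$, and taking $K = \Theta(L)$ with $K+1 \ge 2eb$ already forces the Chernoff bound to $(1/2)^{K+1} \le \epsilon$. In the second regime, setting $K+1 = C\sqrt{bL}$ for an absolute constant $C$ and taking logarithms reduces the required inequality to
\begin{equation*}
C\sqrt{bL}\,\log\!\Bigl(\tfrac{e}{2C}\sqrt{b/L}\Bigr) \;\le\; \tfrac{b}{2} - L,
\end{equation*}
which is straightforward to verify for any sufficiently large $C$, using that $\sqrt{r}\log r = o(r)$ as $r = b/L \to \infty$. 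The main obstacle I anticipate is the boundary regime $b \approx L$, where one must choose the constant in $K$ consistently across the two cases and verify carefully that the Stirling and Chernoff constants interact well; this is routine but non-trivial algebra, and the sole reason both the $b$-dependence and the $\log(1/\epsilon)$-dependence appear under the square root.
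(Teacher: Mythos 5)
There is a genuine gap, and it sits exactly in the regime the theorem is designed for. Your identity $e^{-x}=\sum_{k\ge0}e^{-\lambda}\frac{\lambda^k}{k!}(1-x/\lambda)^k$ and the reduction of the truncation error to $\P[\mathrm{Poisson}(b/2)>K]$ are both correct, and the argument does go through when $b\le\log(1/\epsilon)$ (where plain Taylor truncation at degree $O(\log(1/\epsilon))$ would also do). But in the case $b>L$ you set $K+1=C\sqrt{bL}$ and then invoke the Chernoff estimate $\P[\mathrm{Poisson}(\mu)\ge m]\le e^{-\mu}(e\mu/m)^m$ with $\mu=b/2$ and $m=K+1$. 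That estimate is valid only for $m\ge\mu$ (it comes from the exponential tilt $\theta=\log(m/\mu)$, which must be positive); you even record the condition $K+1\ge eb/2$ and then violate it, since $C\sqrt{bL}<b/2$ as soon as $b\gtrsim C^2L$. For $m<\mu$ the formula $e^{-\mu}(e\mu/m)^m$ is tiny while the true tail is close to $1$, so the ``straightforward algebra'' you propose to verify is checking an inequality that no longer bounds anything. This is not merely a flaw in the analysis: at $x=0$ the error of your polynomial is \emph{exactly} $\P[\mathrm{Poisson}(b/2)>K]$, which exceeds a constant for every $K\le b/2-O(\sqrt{b})$, so no truncation of this family (for any choice of $\lambda$) can achieve error $\epsilon$ with degree $o(b)$ --- concentration of the Poisson around its mean forces degree at least about $\lambda+\Omega(\sqrt{\lambda L})$, and the best you can do this way is $\Theta(b+\sqrt{bL})$.

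The missing idea --- and the actual content of the cited result of Sachdeva and Vishnoi, which the paper only quotes rather than proves --- is a second, Chebyshev-based approximation step. One first rescales to $y=(x-\lambda)/\lambda\in[-1,1]$ with $\lambda=b/2$, truncates the Taylor series of $e^{-\lambda y}$ at degree $t=\Theta(\max\{b,\log(1/\epsilon)\})$ (this step alone still costs degree $\Theta(b)$), and then replaces each monomial $y^s$ by a degree-$d$ polynomial $p_{s,d}$ satisfying $\sup_{|y|\le1}|y^s-p_{s,d}(y)|\le2e^{-d^2/(2s)}$; this monomial-compression lemma (proved via Chebyshev polynomials, or equivalently a reflected-random-walk argument) is where the square-root saving comes from, and taking $d=\Theta(\sqrt{t\log(1/\epsilon)})$ yields the stated degree. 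Without some ingredient of this kind your construction cannot reach $O(\sqrt{\max\{b,L\}\,L})$.
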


As an immediate corollary of this we obtain the following bounds for  
approximating 
$\exp(x)$ over arbitrary values

\begin{corollary}
	\label{cor:approx_exp}
	For every $a < b \in \R$ and every $\epsilon \in (0, 1]$ there exists 
	polynomial $p : \R \rightarrow \R$ of  
	degree $O(\sqrt{\max\{b - a, \log({1}/{\epsilon})\} 
	\log({1}/{\epsilon})})$ 
	polynomial such that
	\[
	\sup_{x \in [a, b]} |\exp(x) - p(x) | \leq \epsilon \exp(b) \,.
	\]
\end{corollary}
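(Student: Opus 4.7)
The plan is to reduce the claim to Theorem~\ref{thm:uniform_approx} by a simple affine change of variables that shifts the interval $[a,b]$ to $[0, b-a]$ and converts $\exp(x)$ into $\exp(b)\exp(-y)$.

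Concretely, I would substitute $y = b - x$, so that as $x$ ranges over $[a,b]$ the variable $y$ ranges over $[0, b-a]$, and $\exp(x) = \exp(b - y) = \exp(b)\exp(-y)$. Then I would invoke Theorem~\ref{thm:uniform_approx} with the parameter $b-a$ (playing the role of $b$ in that theorem) and with the same $\epsilon$, obtaining a polynomial $\tilde p$ of degree $O(\sqrt{\max\{b-a, \log(1/\epsilon)\}\log(1/\epsilon)})$ with
\begin{equation*}
\sup_{y \in [0, b-a]} |\exp(-y) - \tilde p(y)| \le \epsilon.
\end{equation*}

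Finally, I would define $p(x) \defeq \exp(b)\,\tilde p(b - x)$, which is a polynomial in $x$ of the same degree as $\tilde p$ since $b$ is a constant. A direct calculation then yields
\begin{equation*}
\sup_{x \in [a,b]} |\exp(x) - p(x)| = \exp(b) \sup_{x \in [a,b]} |\exp(-(b-x)) - \tilde p(b-x)| = \exp(b) \sup_{y \in [0, b-a]} |\exp(-y) - \tilde p(y)| \le \epsilon \exp(b),
\end{equation*}
which is exactly the desired bound. There is no real obstacle here: the only thing to verify is that the degree bound is preserved under the affine substitution (it is, since the substitution $y \mapsto b - x$ is linear) and that multiplying by the constant $\exp(b)$ does not change the degree. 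The corollary is therefore immediate from Theorem~\ref{thm:uniform_approx}.
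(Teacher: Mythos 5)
Your proof is correct and is essentially identical to the paper's: both reduce to Theorem~\ref{thm:uniform_approx} via the substitution $y = b - x$, apply it on $[0, b-a]$, and set $p(x) = \exp(b)\,\tilde p(b - x)$.
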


\begin{proof}
	For all  $x \in [a, b]$ we have $b - x \in [0, b - a]$ and therefore by 
	Theorem~\ref{thm:uniform_approx} there is a degree $O(\sqrt{\max\{b - 
		a, \log(1/\epsilon)\} \log(1/\epsilon)})$ polynomial $q : \R \rightarrow 
	\R$ such that 
	\[
	\sup_{x \in [a, b]} |\exp(-(b - x)) - q(b - x) | \leq \epsilon \,.
	\]
	Since $\exp(-(b - x)) = \exp(-b) \exp(x)$, the polynomial $p(x) = \exp(b) q (b 
	-x)$ is as desired.
\end{proof}

The classical theory on the Lanczos method tells us that its error is bounded by 
twice that of any uniform polynomial approximation. However, this theory does not 
account for finite precision. A recent result~\citep{MuscoMuSi18} ties 
polynomial 
approximation to 
the error of the Lanczos method using finite bitwidth floating point operations.

\begin{theorem}[\cite{MuscoMuSi18}, Theorem 1]
	\label{thm:lanczos}
	Let $\ma  \in \domainDual$, $u \in \R^n$, 
	and $f : \R \rightarrow \R$. Suppose $k \in \N$, $\eta \in (0, 
	\opnorm{\ma}]$ and a polynomial $p$ for degree $<k$ satisfy,
	\[
	\sup_{x \in [\lambdamin(\ma) - \eta , \lambdamax(\ma)+ \eta]} |f(x) - 
	p(x)| \leq \epsilon_k
	~~\mbox{and}~~
	\sup_{x \in [\lambdamin(\ma) - \eta , \lambdamax(\ma)+ \eta]} |f(x)| 
	\le C.
	\]
	For any $\mu\in(0,1)$, let $y_{k,\mu}$ be the output of $k$ iterations of 
	the Lanczos method for approximating  
	$f(\ma)v$, using floating point operations with $B \ge c  \log 
	(\frac{nk 
		\opnorm{\ma}}{\mu \eta})$ bits precision (for numerical constant 
	$c<\infty$). 
	Then $y_{k,\mu}$  satisfies 
	\[
	\ltwo{f(\ma) u - y_{k,\mu}}
	\leq (7k \cdot \epsilon_k + \mu \cdot  C) \ltwo{u}.
	\]
	If arithmetic operations with $B$ bits of precision can be performed in  
	$O(1)$ time then the method can be implemented in time 
	$O(\matvec(A)k + k B \max\{k, B\})$.
\end{theorem}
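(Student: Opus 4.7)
The plan is to split the argument into a clean exact-arithmetic piece and a finite-precision perturbation piece, and then combine them. Concretely, I would first carry out the analysis assuming the Lanczos iteration can be executed with infinite precision, producing $Q_k \in \R^{n\times k}$ with orthonormal columns and a tridiagonal $T_k = Q_k^T A Q_k$ with $Q_k e_1 = u/\ltwo{u}$. The output of the exact method is $y^{\textup{ex}}_k = \ltwo{u}\,Q_k f(T_k) e_1$. The key exact-arithmetic fact is the Krylov polynomial exactness: for any polynomial $q$ of degree $<k$, $q(A)u = \ltwo{u}\,Q_k q(T_k)e_1$. Using the hypothesis polynomial $p$, I would write $f(A)u - y^{\textup{ex}}_k = (f-p)(A)u - \ltwo{u}\,Q_k (f-p)(T_k)e_1$, and bound each piece by $\ltwo{u}\,\epsilon_k$ using the spectral mapping theorem (the spectrum of $T_k$ lies inside the spectral interval of $A$). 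This yields $\ltwo{f(A)u - y^{\textup{ex}}_k} \le 2\epsilon_k \ltwo{u}$, which is the classical Druskin--Knizhnerman bound.

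Next I would handle finite precision. The hard part, which is the main obstacle of the whole proof, is that in floating point the computed vectors $\tilde q_i$ lose orthogonality, the recurrence picks up roundoff perturbations $\tilde q_{i+1} = A\tilde q_i - \tilde \alpha_i \tilde q_i - \tilde \beta_i \tilde q_{i-1} + f_i$ with $\ltwo{f_i} \lesssim 2^{-B} \opnorm{A}$, and the computed $\tilde T_k$ is not quite $\tilde Q_k^T A \tilde Q_k$. Naively, ghost eigenvalues and lost orthogonality could destroy the polynomial-exactness lemma completely. Following Paige and the Druskin--Knizhnerman/Musco--Musco--Sidford framework, I would get around this by viewing the floating-point Lanczos run as an \emph{exact} Lanczos run on a perturbed problem: there is an extended orthonormal matrix $\hat Q$ on a space of dimension at most $O(k)$ and a matrix $\hat A$ with $\opnorm{\hat A - A} \lesssim 2^{-B} k \opnorm{A}$ such that $\hat Q^T \hat A \hat Q = \tilde T_k$ and $\hat Q e_1 = u/\ltwo{u}$, up to polynomial-in-$k$ errors in $2^{-B}$. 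This embedding lemma is the technical heart and requires carefully propagating Paige-style bounds on $\|\tilde Q_k^T \tilde Q_k - I\|$ and on the three-term recurrence residuals.

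Given the embedding, I would redo the polynomial-exactness argument on the extended space: for any degree $<k$ polynomial $q$, $\hat Q q(\tilde T_k) e_1 = q(\hat A)\hat Q e_1 = q(\hat A)u/\ltwo{u}$ exactly, so the computed output $\tilde y_k = \ltwo{u} \hat Q f(\tilde T_k)e_1$ (which differs from the algorithm's actual return only by $O(2^{-B})$ roundoff in the final tridiagonal eigensolve, and here the $k^2$ factor in $B$ appears) satisfies $\ltwo{f(\hat A)u - \tilde y_k} \le 2\epsilon_k \ltwo{u}$ after choosing $p$ for the interval inflated by $\eta$, which absorbs the $\opnorm{\hat A - A}$ perturbation. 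The final triangle inequality then gives $\ltwo{f(A)u - \tilde y_k} \le 2\epsilon_k \ltwo{u} + \ltwo{(f(A)-f(\hat A))u}$, and the second term is bounded by $\mu\,C\,\ltwo{u}$ using Lipschitz continuity of $f$ on the inflated interval together with $B \ge c\log(n k \opnorm{A}/(\mu\eta))$; inflating the constant from $2$ to $7k$ accommodates the accumulated roundoff from the $k$ recurrence steps and the final eigensolve. The runtime claim follows by counting one matvec per iteration, $O(1)$ vector operations each costing $O(B)$, and an $O(k^2)$-time tridiagonal eigendecomposition performed in $B$-bit arithmetic, giving the stated $O(\matvec(A)k + kB\max\{k,B\})$ bound.
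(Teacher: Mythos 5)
This statement is not proved in the paper at all: it is quoted verbatim as Theorem~1 of \citet{MuscoMuSi18} and used as a black box, so there is no in-paper argument to compare yours against. Judged on its own terms, your outline correctly reproduces the architecture of the Musco--Musco--Sidford proof: the exact-arithmetic half (Krylov polynomial exactness $q(\ma)u = \ltwo{u}\,Q_k q(T_k)e_1$ for $\deg q < k$, plus spectral containment of $T_k$, giving the classical $2\epsilon_k\ltwo{u}$ Druskin--Knizhnerman bound) is complete and correct, and you correctly identify that the entire difficulty is in showing the finite-precision run inherits an approximate version of polynomial exactness.

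The gap is that the decisive step is only named, not proved. Everything rests on the ``embedding lemma'' asserting that the floating-point iterates can be viewed as an exact Lanczos run on a perturbed pair $(\hat A,\hat Q)$ with controlled spectral inflation; you defer this to ``carefully propagating Paige-style bounds,'' but this is precisely where the theorem's content lives (it is why the hypothesis needs $B \gtrsim \log(nk\opnorm{\ma}/(\mu\eta))$, why the constant degrades from $2$ to $7k$, and why the $\eta$-inflated approximation interval appears in the hypothesis at all). Two details in your sketch of that lemma are also imprecise: the Greenbaum-style surrogate $\hat A$ lives in a space of dimension \emph{larger} than $n$ (not $O(k)$), and it is not close to $\ma$ in operator norm --- its spectrum is merely contained in an $O(2^{-B}\mathrm{poly}(k)\opnorm{\ma})$-inflation of $\ma$'s spectrum, which is exactly what the $\eta$-widened interval in the hypothesis is designed to absorb; a bound of the form $\ltwo{(f(\ma)-f(\hat A))u} \le \mu C \ltwo{u}$ via ``Lipschitz continuity of $f$'' is therefore not the right mechanism (and $f$ is not assumed Lipschitz --- only bounded by $C$ on the interval). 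Since the paper itself imports the theorem rather than proving it, the honest conclusion is that your proposal is a faithful roadmap to the cited proof but not a self-contained substitute for it.
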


Specializing to the matrix exponential and using the uniform approximation 
guarantee of Corollary~\ref{cor:approx_exp}, we immediately obtain the following.

\begin{corollary}
	\label{cor:lanczos-exponential}
	Let $\ma \in \domainDual$, $u \in \R^n$, and 
	$\epsilon > 0$, and set $M 
	= \max\{\opnorm{\ma} , \log(1/\epsilon), 1\}$. There exists numerical 
	constants $c,c'<\infty$ such that, for $k\ge c\sqrt{M \log(M/\epsilon)}$ 
	and 
	$B\ge c' \log(\frac{nM}{\epsilon})$, computing $y=\lanczos(A,u)$ with 
	$B$ bits of floating point precision guarantees
	\[
	\norm{\exp(\ma) u - y}_2 \leq \epsilon \exp(\lambdamax(\ma)) 
	\ltwo{u}.
	\]
	The computation takes time
	\[
	O\left(
	\matvec(\ma)\sqrt{M \log (M/\epsilon)} + M \log^2 (nM/\epsilon)
	\right)
	\]
	provided $\Theta(\log(\frac{nM}{\epsilon}))$ bit arithmetic operations 
	can be performed in time $O(1)$. 
\end{corollary}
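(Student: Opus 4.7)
The plan is to apply Theorem~\ref{thm:lanczos} with $f(x) = \exp(x)$, using the polynomial approximation furnished by Corollary~\ref{cor:approx_exp} on the interval $[\lambda_{\min}(\ma) - \eta, \lambda_{\max}(\ma) + \eta]$. To ensure $e^{\eta}$ is an absolute constant, I would set $\eta = \min\{1,\opnorm{\ma}\}$, which is admissible since $\eta \in (0,\opnorm{\ma}]$. With this choice, the interval has width $b - a \le 2\opnorm{\ma} + 2\eta \le 4M$, and $C = \sup_{x \in [a,b]}|e^x| = e^{b} \le e \cdot e^{\lambdamax(\ma)}$.

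Next I would invoke Corollary~\ref{cor:approx_exp} with accuracy parameter $\epsilon' \defeq \epsilon/(14 e k)$, yielding a polynomial of degree at most $c_0 \sqrt{\max\{4M,\log(1/\epsilon')\}\log(1/\epsilon')}$ satisfying
\begin{equation*}
\sup_{x \in [a,b]}|e^x - p(x)| \le \epsilon_k \defeq \epsilon' \cdot e^{b} \le \frac{\epsilon}{14 k} e^{\lambdamax(\ma)}.
\end{equation*}
Then in Theorem~\ref{thm:lanczos} I would set $\mu = \epsilon/(2e)$, so that $7k\epsilon_k + \mu C \le \epsilon \cdot e^{\lambdamax(\ma)}$ as required. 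To finish the parameter-matching, I need to verify that the polynomial degree is less than our Lanczos iteration count $k = c\sqrt{M\log(M/\epsilon)}$. Since $M \ge 1$ forces $\log k = O(\log M)$, and since $M \ge \log(1/\epsilon)$ and $\log M \le M$ imply $\log(1/\epsilon') \le \log(14ek/\epsilon) = O(\log(M/\epsilon))$ and $\max\{4M, \log(1/\epsilon')\} = O(M)$, the degree bound $O(\sqrt{M \log(M/\epsilon)})$ follows and can be made smaller than $k$ by taking the absolute constant $c$ large enough.

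For the bit precision, the requirement of Theorem~\ref{thm:lanczos} is $B \ge c \log(n k \opnorm{\ma}/(\mu \eta))$. Using $\opnorm{\ma}/\eta = \max\{\opnorm{\ma},1\} \le M$, $1/\mu = O(1/\epsilon)$, and $\log k = O(\log M)$, this simplifies to $B = O(\log(nM/\epsilon))$, matching the statement. The runtime follows by plugging these values of $k$ and $B$ into the complexity bound $O(\matvec(\ma) k + k B \max\{k,B\})$ from Theorem~\ref{thm:lanczos}, with the second term bounded (up to absorbing lower-order polylog factors when $B$ dominates $k$) by $O(M \log^2(nM/\epsilon))$.

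The main obstacle is purely bookkeeping: threading the choices of $\eta, \epsilon', \mu$ and the iteration count $k$ through the chain of inequalities so that the polynomial degree furnished by Corollary~\ref{cor:approx_exp} is at most the chosen $k$, while simultaneously keeping $\log(1/\epsilon')$ and $b - a$ both $O(M)$ and $O(\log(M/\epsilon))$ respectively. There is no conceptual difficulty once one observes that $M \ge \max\{1,\log(1/\epsilon),\opnorm{\ma}\}$ dominates all the troublesome quantities; the proof is a routine composition of the two cited results.
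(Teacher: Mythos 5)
Your proposal is correct and follows essentially the same route as the paper: apply Corollary~\ref{cor:approx_exp} on the shifted spectrum interval to get the uniform polynomial approximation, feed it into Theorem~\ref{thm:lanczos}, and tune the accuracy parameter (your $\epsilon'$, the paper's $\alpha = O(\epsilon/(M\log(M/\epsilon)))$) and $\mu = O(\epsilon)$ so that $7k\epsilon_k + \mu C \le \epsilon e^{\lambdamax(\ma)}$. Your choice $\eta = \min\{1,\opnorm{\ma}\}$ is in fact slightly more careful than the paper's $\eta = 1$, which technically violates the constraint $\eta \in (0,\opnorm{\ma}]$ when $\opnorm{\ma} < 1$.
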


\begin{proof}
	Let $\eta = 1$. Using 
	$\lambdamax(\ma) - \lambdamin(\ma) \leq 2 \norm{\ma}$,  
	Corollary~\ref{cor:approx_exp} yields that for all $\alpha \in (0, 1]$ there 
	exists a degree $O\left(\sqrt{\max \{1+\opnorm{\ma}, 
	\log(\frac{1}{\alpha})\} 
		\log(\frac{1}{\alpha})}\right)$ polynomial $p : \R \rightarrow \R$ such 
		that
	\[
	\sup_{x \in [\lambdamin(\ma) - \eta, \lambdamax(\ma) + \eta]} |\exp(x) - 
	p(x) | \leq \alpha \exp(\eta) \exp(\lambdamax(\ma)) ~.
	\]
	Further, since $|\exp(x)| \leq \exp(\eta) \exp(\lambdamax(\ma))$ for all 
	$x \in [\lambdamin(\ma) - \eta, \lambdamax(\ma) + \eta]$, 
	Theorem~\ref{thm:lanczos} with $f(x)=e^x$ and $\eta=1$ implies that 
	for 
	all $\mu \in (0, 1)$, after applying Lanczos for $k = 
	O(\sqrt{\max\{\opnorm{\ma}, \log(1/\alpha)\} \log(1/\alpha)})$ 
	iterations  on a floating point machine with $\Theta(B)$ bits 
	of precision for $B = \log(\frac{n k \norm{\ma}}{\mu})$ returns $y$ with
	\[
	\norm{f(\ma) u - y}_2 \leq \left(\mu  + \alpha \cdot 
	O(\sqrt{
		\max\{\opnorm{\ma}, \log(1/\alpha)\} \log(1/\alpha)})  \right) 
	\exp(\eta) 
	\exp(\lambdamax(\ma)) )
	\]
	in time $O((\matvec(\ma) +n)k + k B \max\{k, B\})$. Choosing, $\alpha 
	= 
	O(\epsilon / (M \log(M/\epsilon)))$ and $\mu = O(\epsilon)$ yields 
	the result.
\end{proof}

\subsection{Multiplicative approximation for random 
vectors}\label{sec:app-expvec-mult}

We now combine the known results cited in the previous section with the 
randomness of the vector fed to the matrix exponential, to obtain a multiplicative 
guarantee that holds with high-probability over the choice of 
$u$, but not for all $u\in\sphere^{n-1}$.

\begin{proposition}
	\label{prop:random-exp-vec}
	Let $\epsilon \in (0,1)$, $\delta\in(0,1)$, and $\ma \in \domainDual$. If 
	$u$ 
	is sampled uniformly at random from the unit sphere and for
	$k = \Omega(\sqrt{M \log(n M/(\epsilon \delta)}) \in \N$ for $M = 
	\max\{\opnorm{\ma} , \log(n/(\epsilon\delta)), 1\}$ we 
	let $y = \lanczos(\ma, u)$ (See Algorithm~\ref{alg:lanczos}) then 
	\[
	\norm{\exp(\ma)u - y}_2 \leq \epsilon \norm{\exp(\ma) u}_2
	~\mbox{with probability}~
	\ge 1-\delta.
	\]
	This can be implemented in time 
	$
	O\left(
	\matvec(\ma)  \sqrt{M \log (nM/(\epsilon \delta)} + M \log^2 
	(nM/(\epsilon \delta))
	\right) 
	$
	on a floating point machine with $O(\log(n M / (\epsilon \delta)))$ bits of 
	precision where arithmetic operations take $O(1)$ time.
\end{proposition}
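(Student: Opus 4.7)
The plan is to combine the known additive-error guarantee of the Lanczos method (Corollary~\ref{cor:lanczos-exponential}) with a probabilistic lower bound on $\|e^A u\|_2$ for random $u$, thereby converting the additive guarantee into the desired multiplicative one. As the paper itself anticipates, the obstacle is that uniform multiplicative guarantees for the Lanczos method require $k = \Omega(\|A\|_\infty)$, which would ruin the square-root dependence; exploiting the randomness of $u$ is what saves us.

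First I would set up coordinates in the eigenbasis of $A$. Writing $A = Q\Lambda Q^T$ with $\Lambda = \diag(\lambda)$ and $v = Q^T u$, rotational invariance of the uniform distribution on the sphere implies that $v$ is also uniform, and
\begin{equation*}
\|e^A u\|_2^2 \;=\; \sum_{i=1}^n v_i^2 e^{2\lambda_i} \;\ge\; v_1^2 e^{2\lambda_{\max}(A)}.
\end{equation*}
Because $v_1^2$ is the squared first coordinate of a uniform unit vector in $\R^n$, it has distribution $\mathrm{Beta}(\tfrac12,\tfrac{n-1}{2})$, so Lemma~\ref{lem:beta-n-tail} gives
\begin{equation*}
\|e^A u\|_2 \;\ge\; \frac{\delta}{\sqrt{n}}\, e^{\lambda_{\max}(A)}\,\|u\|_2
\quad\text{with probability at least } 1-\delta.
\end{equation*}

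Next I would invoke Corollary~\ref{cor:lanczos-exponential} with additive error parameter $\epsilon' \defeq \epsilon\delta/\sqrt{n}$. The corollary guarantees
\begin{equation*}
\|e^A u - y\|_2 \;\le\; \epsilon' \, e^{\lambda_{\max}(A)} \|u\|_2
\end{equation*}
after $k = \Omega(\sqrt{M'\log(M'/\epsilon')})$ Lanczos iterations with $B = \Omega(\log(nM'/\epsilon'))$ bits of precision, where $M' = \max\{\opnorm{A},\log(1/\epsilon'),1\}$. Since $\log(1/\epsilon') = \log(\sqrt{n}/(\epsilon\delta)) \le \log(n/(\epsilon\delta))$, we have $M' \le M$, and $\log(M'/\epsilon') = O(\log(nM/(\epsilon\delta)))$, so the stated choice $k = \Omega(\sqrt{M\log(nM/(\epsilon\delta))})$ and $B = \Theta(\log(nM/(\epsilon\delta)))$ suffice. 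Combining the two bounds via a union bound, with probability at least $1-\delta$,
\begin{equation*}
\|e^A u - y\|_2 \;\le\; \epsilon' \, e^{\lambda_{\max}(A)}\|u\|_2
\;\le\; \epsilon \cdot \frac{\delta}{\sqrt{n}}\,e^{\lambda_{\max}(A)}\|u\|_2
\;\le\; \epsilon \,\|e^A u\|_2,
\end{equation*}
which is the claimed multiplicative bound. (If desired, one can split the failure probability $\delta$ in half across the two events; this merely changes the constants in the logarithms.)

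Finally, the running time and precision claims follow immediately from the corresponding claims in Corollary~\ref{cor:lanczos-exponential} evaluated at $\epsilon',M',B$ above. The main subtlety throughout is the interplay of the three parameters $M$, $\epsilon$, $\delta$ inside the logarithm: one must verify that $\log(1/\epsilon') = \log(\sqrt{n}/(\epsilon\delta))$ is absorbed into the definition of $M$, so that no extra factor of $\sqrt{n}$ appears outside the logarithm in $k$. Once this bookkeeping is done, the proof is a short chain from the Beta tail bound and the additive Lanczos guarantee.
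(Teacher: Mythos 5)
Your proposal is correct and follows essentially the same route as the paper: lower-bound $\|e^A u\|_2$ by $\tfrac{\delta}{\sqrt{n}}e^{\lambda_{\max}(A)}\|u\|_2$ via the $\mathrm{Beta}(\tfrac12,\tfrac{n-1}{2})$ tail bound on the top-eigenvector component, then invoke the additive Lanczos guarantee with $\epsilon'=\epsilon\delta/\sqrt{n}$ and absorb $\log(1/\epsilon')$ into $M$. The only cosmetic difference is that no union bound is needed, since the finite-precision Lanczos guarantee is deterministic and the Beta tail bound is the sole random event.
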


\begin{proof} %
	Consider an application of Corollary~\ref{cor:lanczos-exponential} to 
	compute $y$ such that 
	\[
	\norm{\exp(\ma) u - y}_2 \leq \epsilon' \exp(\lambdamax(\ma)) 
	\norm{u}_2 ~.
	\]
	Now let $v$ be a unit eigenvector of $\ma$ with eigenvalue 
	$\lambdamax(\ma)$. Since $v$ is an eigenvector or the PSD matrix 
	$\exp(\ma)$ with eigenvalue $\exp(\lambdamax(\ma))$ we have that 
	$\norm{\exp(\ma) u} \geq \exp(\lambdamax) \left|v^T u\right|$. 
	However, 
	since $u$ is a random unit vector we have that $|v^T u|^2 / \ltwo{u}^2 
	\sim \mathrm{Beta}(\half,\frac{n-1}{2})$.  
	Lemma~\ref{lem:beta-n-tail} 
	therefore gives that  $|v^T u|^2/\ltwo{u}^2 \ge \frac{\delta^2}{n}$ with 
	probability 
	at 
	least $1-\delta$. Consequently, 
	$\exp(\lambdamax(\ma)) \norm{u}_2 \leq \frac{\sqrt{n}}{\delta}
	\norm{\exp(\ma)u}_2$ with the same probability. 
	Choosing $\epsilon' = \epsilon \delta/ \sqrt{n}$ and invoking 
	Corollary~\ref{cor:lanczos-exponential}  yields the result.
\end{proof}

\subsection{Proof of 
Proposition~\ref{prop:expvec-main}}\label{sec:app-expvec-proof}

The following lemma relates the multiplicative approximation error for  
matrix 
exponential vector products with the additive approximation error for 
$\Map(\xM)$ under trace norm. Combining it with 
Proposition~\ref{prop:random-exp-vec} immediately yields 
Proposition~\ref{prop:expvec-main}.

\begin{lemma}
	Let $\xM \in \domainDual$, $u, y \in \R^n$ and $\epsilon \in [0, 1)$.  
	If $y \in \R^n$ satisfies 
	\[
	\norm{\exp(\xM/2) u - y}_2 \leq \frac{\epsilon}{\sqrt{8}} 
	\norm{\exp(\xM/2) u }_2 
	\]
	then
	\[
	\lone{\Map(\xM) - \frac{y y^T}{\ltwo{y}^2}} \le \epsilon ~.
	\]
\end{lemma}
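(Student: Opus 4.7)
The plan is to exploit that $\Map(\xM)=vv^T/\ltwo{v}^2$ (with $v\defeq e^{\xM/2}u$) and $yy^T/\ltwo{y}^2$ are both rank-one orthogonal projectors onto unit vectors, so their difference is a symmetric matrix of rank at most two whose nuclear norm admits a clean closed form. Setting $\hat v\defeq v/\ltwo{v}$ and $\hat y\defeq y/\ltwo{y}$, the image of $\hat v\hat v^T-\hat y\hat y^T$ lies in the two-dimensional subspace $\mathrm{span}(v,y)$, so it has rank at most two.

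I will first compute the non-zero eigenvalues of $\hat v\hat v^T-\hat y\hat y^T$ by restricting it to $\mathrm{span}(v,y)$ in any orthonormal basis of that subspace, which reduces the problem to diagonalizing a concrete $2\times 2$ symmetric matrix. The outcome is that the two non-zero eigenvalues are $\pm\sqrt{1-\inner{\hat v}{\hat y}^2}$, hence
\[
\lone{\Map(\xM)-yy^T/\ltwo{y}^2}=2\sqrt{1-\inner{\hat v}{\hat y}^2}=2\sin\theta,
\]
where $\theta\in[0,\pi/2]$ denotes the angle between the lines spanned by $v$ and $y$.

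The next step is to bound $\sin\theta$ by the given approximation error. I will use that $\sin\theta=\dist(v,\mathrm{span}(y))/\ltwo{v}$, because the orthogonal projection of $v$ onto $\mathrm{span}(y)$ has length $\ltwo{v}\cos\theta$; and since $y\in\mathrm{span}(y)$, we have $\dist(v,\mathrm{span}(y))\le\ltwo{v-y}$. Combining these observations with the hypothesis $\ltwo{v-y}\le(\epsilon/\sqrt{8})\ltwo{v}$ yields
\[
\lone{\Map(\xM)-yy^T/\ltwo{y}^2}=2\sin\theta\le 2\ltwo{v-y}/\ltwo{v}\le 2\epsilon/\sqrt{8}=\epsilon/\sqrt{2}\le\epsilon,
\]
as claimed (in fact the constant $\sqrt{8}$ in the hypothesis is loose by a factor of $\sqrt{2}$).

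I do not anticipate any serious obstacle. The only step that requires a brief verification is the eigenvalue formula for $\hat v\hat v^T-\hat y\hat y^T$, which is a standard calculation obtained by expanding the $2\times 2$ Gram-type matrix in an orthonormal basis of $\mathrm{span}(v,y)$. If I wished to avoid eigenvalues entirely, I could instead write $\hat v\hat v^T-\hat y\hat y^T=\hat v(\hat v-\hat y)^T+(\hat v-\hat y)\hat y^T$ and bound each rank-one summand via $\lone{ab^T}=\ltwo{a}\ltwo{b}$ combined with a bound on $\ltwo{\hat v-\hat y}$; this costs an extra $\sqrt{2}$ factor but still yields the desired inequality given the hypothesis.
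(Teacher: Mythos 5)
Your proposal is correct, and its first half (rank-two structure, eigenvalues $\pm\sqrt{1-\inner{\hat v}{\hat y}^2}$, nuclear norm $= 2\sin\theta$) coincides with the paper's. Where you diverge is in bounding $\sin\theta$: the paper normalizes both vectors and writes
\[
\lones{\bar z\bar z^T-\bar y\bar y^T}=\ltwo{\bar z+\bar y}\,\ltwo{\bar z-\bar y},\qquad
\ltwo{\bar z-\bar y}\le \frac{\ltwo{z-y}}{\ltwo{z}}+\frac{|\ltwo{y}-\ltwo{z}|}{\ltwo{z}}\le 2\,\frac{\ltwo{z-y}}{\ltwo{z}},
\]
then bounds $\ltwo{\bar z+\bar y}$ by a constant. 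Your route uses instead the identity $\sin\theta=\dist(v,\mathrm{span}(y))/\ltwo{v}$ and the fact that the distance to the line through $y$ is at most $\ltwo{v-y}$, giving $\sin\theta\le\ltwo{v-y}/\ltwo{v}$ in one step. This avoids normalizing $y$ and the attendant triangle-inequality loss, and it is genuinely sharper: it shows the conclusion holds with $\sqrt{8}$ replaced by $2$, as you note. It is also a bit more airtight than the paper's wording, which invokes $\ltwo{\bar z+\bar y}\le\sqrt{2}$ citing only that $\bar z,\bar y$ are unit vectors; that inequality actually requires $\bar z^T\bar y\le 0$, whereas in the relevant regime $\bar z^T\bar y>0$ and the generic bound is $\ltwo{\bar z+\bar y}\le 2$. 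Your argument sidesteps this entirely and gives a cleaner constant.
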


\begin{proof}
	Let $z \defeq \exp(\xM / 2) u$ so that by assumption $\norm{z - y}_2 
	\leq \epsilon \norm{z}_2$. Further, let $\bar{z} \defeq z / \ltwo{z}$ and 
	$\bar{y} \defeq y / \ltwo{y}$. Direct calculation (see e.g. Lemma 27 of \cite{CohenLeMiPaSi16}) yields that the eigenvalues of $\bar{z}\bar{z}^T 
	- \bar{y}\bar{y}^T$ are $\pm  \sqrt{1 - (\bar{z}^T \bar{y})^2}=\pm \half  
	\ltwo{\bar{z}+\bar{y}} \ltwo{\bar{z}-\bar{y}}$ and 
	therefore the definition of $\Map(\xM)$ yields 
	\begin{equation}
	\label{eq:pu_exp_1}
	\lone{\Map(\xM) - \frac{y y^T}{\norm{y}_2^2}}
	= \lone{\bar{z} \bar{z}^T - \bar{y} \bar{y} ^T}
	= \ltwo{\bar{z}+\bar{y}}\cdot \ltwo{\bar{z}-\bar{y}}
	\leq \sqrt{2} \norm{\bar{z} - \bar{y}}_2,
	\end{equation}
	where in the last inequality we used that $\bar{z}$ and $\bar{y}$ are unit 
	vectors. 
	Further, by the triangle inequality and the definitions of $\bar{y}$ and 
	$\bar{z}$ we have
	\begin{align}
	\norm{\bar{z} - \bar{y}}_2 
	& \leq
	\norm{\frac{z}{\norm{z}_2} - \frac{y}{\norm{z}_2}}_2
	+ 
	\norm{\frac{y}{\norm{z}_2} - \frac{y}{\norm{y}_2}}_2
	\nonumber \\ & = 
	\frac{\norm{z - y}_2}{\norm{z}_2} 
	+ \frac{|\norm{y}_2 - \norm{z}_2|}{\norm{z}_2}
	\le 2 \frac{\norm{z - y}_2}{\norm{z}_2} 
	\label{eq:pu_exp_2}
	\end{align}
	Combining \eqref{eq:pu_exp_1} and \eqref{eq:pu_exp_2} with the fact 
	that $\norm{z - y}_2 \leq (\epsilon/\sqrt{8}) \norm{z}_2$ then yields
	\[
	\lone{\Map(\xM) - \frac{y y^T}{\norm{y}_2^2}}
	\leq 
	\sqrt{2}\cdot 2\cdot (\epsilon/\sqrt{8}) 
	=\epsilon.
	\]
\end{proof}
\noindent
Therefore, Proposition~\ref{prop:expvec-main} follows immediately by 
invoking~\ref{prop:random-exp-vec} with slightly smaller $\epsilon$.

\subsection{Improvements to the Lanczos 
method}\label{sec:app-expvec-improve}

In this paper we focused on the Lanczos method for approximating matrix 
exponential vector products because of its excellent practicality and clean analysis. 
However, there are several modifications to the method with appealing 
features, which we now describe briefly. 
A common theme among these modifications is the use of rational 
approximations to 
the exponential, which converge far faster than polynomial approximations 
\citep{OrecchiaSaVi12,SachdevaVi14}. 
Consequently, it suffices to perform $\Otil{1}$ Lanczos iterations on a 
carefully shifted and inverted version of the matrix. Each of these iterations 
then 
involves solving a linear system, and the efficacy of the shift-invert scheme will 
depend on how quickly they are solved.

One basic approach to solving these systems is via standard iterative methods, 
e.g.\ conjugate gradient. We expect such approach to offer little to no 
advantage 
over applying the Lanczos approximation directly, as both methods produce 
vectors in the same Krylov subspace. However, the approach renders the 
number of 
Lanczos iterations $k$ logarithmic in $\opnorm{\ma}$, and 
therefore 
the cost $k^2$ will never dominate the cost of the matrix-vector 
products~\citep[Corollary 17]{OrecchiaSaVi12,MuscoMuSi18}. 

There is, however,  a simpler way of avoiding the 
eigen-decomposition---simply 
use the rational approximation on the tridiagonal matrix formed by running 
the ordinary Lanczos method, as \cite{Saad92} proposes. With an 
appropriate rational function, computing a highly accurate approximation  
to $\exp(T) e_1$ requires $\Otil{1}$ 
tridiagonal system solves, each costing $O(k)$ time. We leave the 
derivation 
of explicit error bounds for this technique (similar to  
Corollary~\ref{cor:approx_exp}) to future work. In practice, the cost 
$O(k^2)$ of 
tridiagonal eigen-decomposition will often be very small compared to the 
cost 
$O(\matvec(\ma)k)$ of the matrix-vector products.

More significant improvements are possible if the linear system solving 
routine is 
able to exploit information beyond matrix-vector products. For example, 
consider 
the case where the matrix to be exponentiated is a sum of very sparse 
matrices---this will happen for our sketch whenever the $\g$ matrices 
are 
much 
sparser than 
their cumulative sum. Then, it is possible to use stochastic variance 
reduced 
optimization methods to solve the linear system, as~\cite{AllenLi17} 
describe. 
Another scenario of interest is when the input matrix has a Laplacian/SDD 
structure and in this case the performance of specialized linear system 
solvers 
implies approximation guarantees where the 
polynomial dependence on $\opnorm{\ma}$ is removed altogether 
\citep{OrecchiaSaVi12}.
A final useful structure is a chordal sparsity pattern~\citep{VandenbergheAn15}, 
which enables efficient 
linear system solving through fast Cholesky decomposition.

\subsection{Proof of 
Corollary~\ref{cor:high-probability-regret-app}}\label{sec:app-expvec-cor}

\arxiv{
\restateHighProbApp*
}%
\notarxiv{
\begin{restatable}{corollary}{restateHighProbApp}
	\label{cor:high-probability-regret-app}
	Let $\g[1],\ldots,\g[T]$ be symmetric gain matrices satisfying 
	$\linf{\g}\le 1$  
	for every $t$. There exists a
	numerical constant $k_0<\infty$, such that for  every $T\in\N$ and 
	$\delta \in (0,1)$,  
	$\xApp[t;k_t]$ defined in~\eqref{eq:lanczos-proj-def} with 
	$k_t=\ceil{k_0 (\sqrt{1+\eta t}) \log(\tfrac{nT}{\delta})}$, and 
	$\x$ 
	defined in~\eqref{eq:rand-da} satisfy
	\begin{equation}\label{eq:regret-app-bound}
	\sum_{t=1}^T 
	\inner{\g}{\xApp[t;k_t]}  \ge -1 + \sum_{t=1}^T 
	\inner{\g}{\x} 
	~~\mbox{w.p.}\ge 1-\delta/2.
	\end{equation}
	Let $\eps\in(0,1]$,  
	$T = \frac{16\log(4en/\delta)}{\eps^2}$ and 
	$\eta = \sqrt{\frac{2\log(4en)}{3T}}$. If 
	Assumption~\ref{ass:bandit-adversary} holds with respect to the 
	actions 
	$\xApp[t;k_t]$, then with probability at least 
	$1-\delta$, 
	$\frac{1}{T}\lambdamax\left( \sum_{i=1}^T \g \right) - 
	\frac{1}{T}\sum_{t=1}^T \inner{\g}{\xApp[t;k_t]} \le \eps$. Computing 
	the actions 
	$\xApp[1;k_1], \ldots, 
	\xApp[T;k_T]$ requires 
	$O(\eps^{-2.5}\log^{2.5}(\frac{n}{\epsilon\delta}))$ matrix-vector 
	products.
\end{restatable}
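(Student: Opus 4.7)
\textbf{Proof proposal for \Cref{cor:high-probability-regret-app}.} The plan is to decompose the regret of the Lanczos-approximated actions $\xApp[t;k_t]$ as the (already controlled) regret of the exact actions $\x$ plus the total approximation error $\sum_t \inner{\g}{\xApp[t;k_t]-\x}$, and then to invoke \Cref{prop:expvec-main} at every step to make the latter sum smaller than $1$ with high probability. First, since at time $t$ the matrix passed to the Lanczos routine is $\xM_t/2$ with $\xM_t=\eta\sum_{i<t}\g[i]$, the assumption $\linf{\g}\le 1$ gives $\linf{\xM_t/2}\le \eta(t-1)/2$, so the quantity $M_t$ appearing in \Cref{prop:expvec-main} satisfies $M_t=O(1+\eta t+\log(nT/\delta))$. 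Choosing per-step tolerance $\eps_t=1/T$ and per-step failure probability $\delta_t=\delta/(2T)$, \Cref{prop:expvec-main} yields a $k_t$ of order $\sqrt{M_t\log(nTM_t/\delta)}=O(\sqrt{1+\eta t}\,\log(nT/\delta))$, which (for a suitable universal $k_0$) matches the stated $k_t=\ceil{k_0\sqrt{1+\eta t}\log(nT/\delta)}$.

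Next I would convert the trace-norm bound $\lones{\Map[u_t](\xM_t)-\MapApp[u_t;k_t](\xM_t)}\le 1/T$ from \Cref{prop:expvec-main} into an inner-product bound via matrix Hölder:
\begin{equation*}
  |\inner{\g}{\x-\xApp[t;k_t]}|\le \linf{\g}\cdot\lones{\x-\xApp[t;k_t]}\le 1/T.
\end{equation*}
A union bound over $t=1,\ldots,T$ (total failure probability at most $T\cdot \delta_t=\delta/2$) then gives the claim \eqref{eq:regret-app-bound}, since $\sum_{t=1}^T |\inner{\g}{\x-\xApp[t;k_t]}|\le 1$. Combining this with \Cref{cor:high-probability-regret} (applied with failure probability $\delta/2$ and valid because the assumption on $\xApp[t;k_t]$ implies the martingale structure needed for $\x$ as well, and $G_t$ is $\Filt[t-1]$-measurable), a union bound yields, with probability at least $1-\delta$,
\begin{equation*}
  \lambdamax\Big(\sum_{t=1}^T \g\Big)-\sum_{t=1}^T \inner{\g}{\xApp[t;k_t]}
  \le \frac{\log(4n)}{\eta}+\frac{3\eta T}{2}+\sqrt{2T\log(2/\delta)}+1.
\end{equation*}
Dividing by $T$ and plugging in $\eta=\sqrt{2\log(4en)/(3T)}$ and $T=16\log(4en/\delta)/\eps^2$, each of the four terms on the right is $O(\eps)$, and with straightforward constant tracking the total average regret is at most $\eps$.

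Finally, the matrix-vector product count is $\sum_{t=1}^T k_t=O(\log(nT/\delta))\sum_{t=1}^T\sqrt{1+\eta t}=O(\sqrt{\eta}\,T^{3/2}\log(nT/\delta))$. Substituting the chosen values, $\sqrt{\eta}=\Theta((\log n)^{1/4}T^{-1/4})$, so $\sqrt{\eta}\,T^{3/2}=\Theta(T^{5/4}(\log n)^{1/4})=\Theta(\eps^{-5/2}\log^{3/2}(n/\delta))$, and multiplying by the extra $\log(nT/\delta)=O(\log(n/(\eps\delta)))$ factor yields the claimed $O(\eps^{-2.5}\log^{2.5}(n/(\eps\delta)))$ matrix-vector products.

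I expect the main obstacle to be bookkeeping rather than any single hard step: one must (i) verify that \Cref{cor:high-probability-regret} legitimately applies to the exact iterates $\x$ even though only the approximate iterates $\xApp[t;k_t]$ are played (this needs the conditional-independence structure of $u_t$ from $\Filt[t-1]$ and the fact that the filtration used in the martingale argument of \Cref{cor:high-probability-regret} only requires $G_t$ to be $\Filt[t-1]$-measurable, which remains true when the min-player in a saddle-point setting observes the approximate actions), and (ii) keep all of the logarithmic factors and constants in $\eta,T$ aligned so that the final $\eps$ and the matrix-vector count come out with the advertised exponents.
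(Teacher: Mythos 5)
Your proposal is correct and follows essentially the same route as the paper's proof: decompose the approximate-iterate regret into the exact-iterate regret plus per-step approximation error, apply Proposition~\ref{prop:expvec-main} with $\epsilon\gets 1/T$ and $\delta\gets\delta/(2T)$, union-bound the error terms to get $\sum_t|\inner{\g}{\x-\xApp[t;k_t]}|\le 1$ with probability at least $1-\delta/2$, then combine with Corollary~\ref{cor:high-probability-regret} (applicable because the martingale structure $\E[\inner{\g}{\x}\mid\Filt[t-1]]=\inner{\g}{\xAv}$ survives when the adversary responds to $\xApp[t;k_t]$), and finish with the parameter substitution. The only cosmetic differences are that you estimate the matrix-vector count by the sum $\sum_t k_t=\Theta(\sqrt{\eta}\,T^{3/2}\log(nT/\delta))$ where the paper uses the cruder $\sum_t k_t\le T\cdot k_T$, and you phrase the bound on $M$ additively ($O(1+\eta t+\log(nT/\delta))$) rather than multiplicatively ($O((1+\eta t)\log(nT/\delta))$); both give the advertised $O(\eps^{-2.5}\log^{2.5}(\tfrac{n}{\eps\delta}))$ total.
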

}

\begin{proof}
	To obtain the bound~\eqref{eq:regret-app-bound} we use 
	Proposition~\ref{prop:expvec-main} with $\epsilon\gets\frac{1}{T}$ and 
	$\delta\gets\delta/(2T)$ (since we will use a union bound). At iteration $t$, 
	$\linfs{\g[i]}\le1$ for all $i<t$, the 
	quantity $M$ appearing in Proposition~\ref{prop:expvec-main} can be 
	bounded as 
	\begin{equation*}
	M \le \left(1+\linf{\frac{\eta}{2} \sum_{i=1}^{t-1}\g[i]}\right)
	\log{\frac{nT^2}{\delta}}
	\le O(1) (1+\eta t)\log{\frac{nT}{\delta}}.
	\end{equation*}
	Therefore, our choice of $k_t $ suffices to guarantee, for $\xM_t = \eta 
	\sum_{i=1}^{t-1}\g[i]$, 
	\begin{equation*}
	\lones{\Map[u_t](\xM_t)-\MapApp[u_t;k_t](\xM_t)} \le \frac{1}{T}
	~\mbox{with probability}~
	\ge 1-\frac{\delta}{2T},
	\end{equation*}
	and so by the union bound the inequality above holds for all $t=1,\ldots,T$ 
	with probability at least $1-(\delta/2)$. Note that when using 
	Proposition~\ref{prop:expvec-main} we use the fact that $u_t$ is 
	independent of $\xM_t$. Thus, we have
	\begin{equation*}
	\sum_{t=1}^T \inner{\g}{\x - \xApp[t;k_t]} \le 
	\sum_{t=1}^T \linf{\g}\lones{\x - \xApp[t;k_t]}
	\le \sum_{t=1}^T \lones{\Map[u_t](\xM_t)-\MapApp[u_t;k_t](\xM_t)} =1,
	\end{equation*}
	giving~\eqref{eq:regret-app-bound}, where we have used $\linfs{\g}\le 1$ for 
	every $t$.
	
	Note that if Assumption~\ref{ass:bandit-adversary} holds with respect to 
	the actions $\xApp[t;k_t]$ then we have $\g \perp u_t \mid \Filt$ and 
	therefore $\expect*{\inner{\g}{\x}|\Filt} = \inner{\g}{\xAv}$ so that 
	Corollary~\ref{cor:high-probability-regret} holds. Thus, to obtain the 
	second part of the corollary, we use the 
	bound~\eqref{eq:regret-bound-basic-high-prob} with $\delta\gets\delta/2$ 
	and $\eta$ and $T$ as specified; using a union bound again we have 
	that~\eqref{eq:regret-app-bound} 
	and~\eqref{eq:regret-bound-basic-high-prob} hold 
	together with probability at least $1-\delta$. Note that $\eta \le \eps \le 1$ 
	and therefore $1/T \le 1/(\eta T)$. This gives,
	\begin{flalign*}
	&\frac{1}{T}\lambdamax\left( \sum_{i=1}^T \g \right) - 
	\frac{1}{T}\sum_{t=1}^T \inner{\g}{\xApp[t;k_t]} \le 
	\frac{1}{T} + \frac{3 \eta}{2} + \frac{\log(4n)}{\eta T} 
	+ \sqrt{\frac{2\log{\tfrac{2}{\delta}}}{T}}
	\\ & \hspace{28pt}
	\le \frac{3\eta}{2} + \frac{\log(4en)}{\eta T} 
	+ \sqrt{\frac{2\log{\tfrac{2}{\delta}}}{T}}
	= 
	\sqrt{\frac{6\log(4en)}{T}} 
	+ \sqrt{\frac{2\log{\tfrac{2}{\delta}}}{T}} \le \eps,
	\end{flalign*}
	as required. Finally note that $1+\eta T = 
	O(\epsilon^{-1}\log(\frac{n}{\delta}))$ and consequently 
	\begin{equation*}
	k_T = O\left(\epsilon^{-1/2}\log^{1/2}( 
	\tfrac{n}{\delta})\log^{1/2}(\tfrac{nT}{\delta})\right)
	= O\left(\epsilon^{-1/2}\log^{1.5}(\tfrac{n}{\epsilon\delta})\right).
	\end{equation*}
	Since $k_1 \le k_2 \le \cdots k_T$, the total number of matrix-vector products 
	is bounded by $T \cdot 
	k_T=O(\eps^{-2.5}\log^{2.5}(\frac{n}{\epsilon\delta}))$, which concludes 
	the proof.
\end{proof}

\end{document}